
\documentclass{article}


\usepackage{fullpage}
\usepackage{authblk}

\usepackage{array}
\usepackage{tabularx}
\usepackage{booktabs}
\usepackage{ragged2e}
\usepackage{microtype}
\usepackage[breaklinks=true,pdfborder={0 0 0}, colorlinks=true, citecolor=blue]{hyperref}
\usepackage{graphicx}

\usepackage[utf8]{inputenc} 
\usepackage[T1]{fontenc}    
\usepackage{hyperref}       
\usepackage{xcolor}       
\usepackage{url}            
\usepackage{amsfonts}       
\usepackage{nicefrac}       
\usepackage{algorithm}
\usepackage[compatible]{algpseudocode}
\renewcommand{\algorithmiccomment}[1]{\bgroup\hfill$\triangleright$~#1\egroup}




\usepackage{amsmath,amsthm, mathtools, amssymb, bbm, varwidth, bm}

\usepackage[capitalize,noabbrev]{cleveref}

\theoremstyle{plain}
\newtheorem{theorem}{Theorem}[section]

\newtheorem{lemma}[theorem]{Lemma}
\newtheorem{fact}[theorem]{Fact}
\newtheorem{corollary}[theorem]{Corollary}
\theoremstyle{definition}
\newtheorem{definition}[theorem]{Definition}

\theoremstyle{remark}
\newtheorem{remark}[theorem]{Remark}

\usepackage[textsize=tiny]{todonotes}

\def\prev{\textnormal{prev}}
\def\regret{\textnormal{Regret}_{\gamma}}
\def\bhsh{\mathcal{B}_{\textnormal{HS}}(\mathcal{H})}
\def\ihi{I_{\mathcal{H}, *}}
\def\ihsi{I_{\textnormal{HS}, *}}
\def\dpse{d_{\textnormal{pse}}}
\def\deff{d_{\textnormal{eff}}}

\def\kkh{\mathcal{K}_{(s_t, a_t)}}
\def\kth{\mathcal{K}_{(s_{\tau}, a_{\tau})}}
\def\skth{\mathcal{K}_{\tau}}
\def\kxi{\mathcal{K}_{x_i}}
\def\kxj{\mathcal{K}_{x_j}}
\def\kxt{\mathcal{K}_{x_t}}
\def\kxn{\mathcal{K}_{x_n}}
\def\wh{\widehat{W}}
\def\vt{\widetilde{V}}
\def\qt{\widetilde{Q}}
\def\qh{\widehat{Q}}
\def\gammah{\widehat{\Gamma}}

\DeclarePairedDelimiter\abs{\lvert}{\rvert}
\DeclarePairedDelimiter\norm{\lVert}{\rVert}
\DeclarePairedDelimiter\hnorm{\lVert}{\rVert_{\mathcal{H}}}
\DeclarePairedDelimiter\hsnorm{\lVert}{\rVert_{\textnormal{HS}}}
\DeclarePairedDelimiter\hinner{\langle}{\rangle_{\mathcal{H}}}
\DeclarePairedDelimiter\hsinner{\langle}{\rangle_{\textnormal{HS}}}

\DeclareMathOperator*{\argmax}{arg\,max\,}

\DeclareMathOperator*{\tr}{tr}

\DeclareMathOperator*{\expt}{\mathbb{E}}

\DeclareMathOperator*{\clip}{\textnormal{clip}}

\makeatletter
\newcommand{\vast}{\bBigg@{3.5}}
\newcommand{\Vast}{\bBigg@{5}}
\makeatother

\usepackage{natbib}
\bibliographystyle{abbrvnat}
\setcitestyle{authoryear,round,citesep={;},aysep={,},yysep={;}}
\renewcommand{\cite}[1]{\citep{#1}}


\title{Provably Efficient Kernelized Q-Learning}
\author{Shuang Liu\thanks{s3liu@eng.ucsd.edu} }

\author{Hao Su\thanks{haosu@eng.ucsd.edu} }

\affil{
  University of California, San Diego
}
\begin{document}
\maketitle

\vskip 0.3in




\begin{abstract}
We propose and analyze a kernelized version of Q-learning. Although a kernel space is typically infinite-dimensional, extensive study has shown that generalization is only affected by the \emph{effective dimension} of the data. We incorporate such ideas into the Q-learning framework and derive regret bounds for arbitrary kernels. In particular, we provide concrete bounds for linear kernels and Gaussian RBF kernels; notably, the latter bound looks almost identical to the former, only that the actual dimension is replaced by a different notion of dimensionality. Finally, we test our algorithm on a suite of classic control tasks; remarkably, under the Gaussian RBF kernel, it achieves reasonably good performance after only 1000 environmental steps, while its neural network counterpart, deep Q-learning, still struggles.
\end{abstract}
\section{Introduction}
Q-learning, which dates back to as early as~\citet{watkins1992q}, has undergone tremendous development in the past decades and remains one of the most sample-efficient reinforcement learning (RL) frameworks in practice. The modern implementation of Q-learning maintains a value function $Q_t$ at each step $t$ such that
\begin{align}
\begin{aligned}
    &Q_t \approx \argmax_Q\\
    &\expt_{s, a, r, s'}\left[\left(r + \gamma \max_{a'}Q_{t - 1}(s', a') - Q(s, a)\right)^2\right],
\end{aligned}
    \label{eq:qlearning} 
\end{align}
where $(s, a, r, s')$, representing the quadruple of state, action, reward, next state, is sampled from historical data, $\gamma$ is a discounting factor, and actions that maximizes $Q_t$ are taken with priority~\cite{mnih2013playing}. The most successful implementation of the above paradigm is perhaps deep Q-learning (DQN)~\cite{mnih2013playing}, in which each $Q_t$ is represented by a (deep) neural network and actions are taken to maximize $Q_t$ with probability $1 - \epsilon$, and randomly otherwise.

Despite the success of DQN, our theoretical understanding of Q-learning is still lacking. While there exist regret analyses for certain versions of tabular Q-learning~\cite{jin2018q,liu2021regret}, these algorithms and their analyses hardly generalize to non-tabular settings. Algorithms and corresponding regret analyses have also been derived for more general function classes (see e.g.~\citet{jin2020provably,yang2020function,zhou21b}, among others). However, the proposed algorithms use either episodic value iteration~\cite{jin2020provably,yang2020function}, which is not sample-efficient in practice, or extended value iteration~\citet{zhou21b}, which is not time-efficient in practice.

We seek to fill the gap between theory and practice in this paper. Specifically, we propose and analyze a kernelized version of~\eqref{eq:qlearning}, named kernelized Q-learning (KQL), where each $Q_t$ is restricted to a reproducing kernel Hilbert space (RKHS) and optimized with kernel ridge regression, and exploration is done through upper confidence bound.

RKHS are non-parametric function classes that are rich enough in that they are dense in the space of continuous functions if the kernel is \emph{universal}~\cite{sriperumbudur2011universality}. Therefore, RKHS are, in some sense, as universal as neural networks~\cite{hornik1989multilayer} while being easier to analyze thanks to decades of research on kernel machines. Combining standard machinery for analyzing kernel-based learning with our novel approach to analyze Q-learning, we arrive at general regret bounds for arbitrary kernels (Theorem~\ref{thm:main}); in particular, we provide concrete regret bounds for linear kernels (Corollary~\ref{cor:linear}) and Gaussian RBF kernels (Corollary~\ref{cor:gaussian}). Notably, the latter looks almost identical to the former, only that the actual dimension is replaced by a different dimensionality that is at most polylogarithmic in the number of steps.

We complement our theoretical analyses with experiments on a subset of classic control tasks provided in OpenAI Gym~\cite{brockman2016openai}. We faithfully implement the exact KQL algorithm for which we derive regret bounds and choose hyperparameters based on the bounds. We demonstrate superior sample efficiency of KQL, even when compared with DQN. To the best of our knowledge, KQL is the first provably low-regret algorithm that excels in commonly used benchmarks. 

\section{Related Work}
\subsection{Reinforcement Learning}
Earlier regret analysis for RL focused on the tabular setting. \citet{jaksch2010near, osband2016lower} gave regret bounds in the average reward setting, \citet{osband2016generalization,azar2017minimax,jin2018q,simchowitz2019non,zhang2019regret,russo2019worst,zanette2019tighter,zhang2020almost,zhang2021reinforcement} gave regret bounds in the episodic setting. Regret bounds for the discounted setting was not studied until recently~\cite{liu2021regret,he2021nearly}, due to trickier definition of the regret.

Later work generalizes tabular analyses to the linear model episodic setting~\cite{wang2019optimism,jin2020provably,zanette2020frequentist}, the linear mixture model episodic setting~\cite{cai2020provably,ayoub2020model,modi2020sample,yang2020reinforcement,zhou21a,he2021logarithmic}, and the linear mixture model discounted setting~\cite{zhou21b}. We note here that the linear mixture model assumes the model class has finite (and small) degrees of freedom, which mitigates many challenges in model estimation (see discussion in \citet[Related Work]{jin2020provably}). In particular, while~\citet{zhou21b} gives a seemingly better regret bound than ours when the model class is linear in certain sense, they are under far stricter assumptions.

This paper is most related to~\citet{yang2020function}, which proposes and analyzes KOVI, a kernelized version of episodic value iteration. The major difference is that~\citet{yang2020function} operates on an episodic setting, where in each episode, an optimistically optimal Q-value function is calculated based on all the historical data using episodic value iteration. On the other hand, we focus on the non-episodic setting, where in each step, a new Q-value function, which is not necessarily (optimistically) optimal based on current historical data, is calculated from the Q-value function from the previous step using one-step discounting. Such difference indeed affects theoretical analysis in that the non-episodic update introduces pathological dependencies that must be properly handled (e.g., Lemma~\ref{lem:sum-of-ucb-bound}). Furthermore, it also makes our algorithm significantly more applicable to real-world problems. For example, in our \textsc{MountainCar} experiment (Section~\ref{sec:exp}), the length of an episode is 200. Given a budget of 1000 interactions, KOVI can only update its value functions $1000 / 200 - 1 = 4$ times (since each step in an episode uses a different value function). It is very unlikely that any progress can be made after such a small number of parameter updates (indeed, we were not able to make KOVI work on \textsc{Mountain-Car} after extensive parameter tuning or even hacking). On the other hand, KQL does not distinguish between different episodes, and can update its value function 999 times. In fact, we will show that KQL even outperforms DQN in the low-budget setting.

Other kernelization efforts include~\citet{xu2005kernel}, which uses kernel regression to approximate the discounted return of a Markov chain,~\citet{xu2007kernel}, which uses kernel regression as a subroutine in a policy iteration procedure, no exploration is involved thus no regret bound can be obtained;
~\citet{chowdhury2019online}, which requires very restrictive assumptions, as discussed in~\citet{yang2020reinforcement}; \citet{yang2020reinforcement}, which assumes linear mixture models, therefore suffers from aforementioned limitations of such models; 
\citet{ormoneit2002kernel,barreto2016practical}, which are based on local averaging and require Lipschitzness assumptions, and no exploration is involved thus no regret bound can be obtained;~\citet{domingues2021kernel}, which is similarly based on local averaging and require Lipschitzness assumption, but with Lipschitzness-based exploration.

More general function approximation classes have also been considered. Notably,~\citet{wang2020reinforcement,ayoub2020model} gave regret bounds in terms of Eluder dimension and covering number of the function class. However, most of these attempts either result in generally intractable algorithms
~\cite{krishnamurthy2016pac,jiang2017contextual,dann2018oracle,dong2020root, wang2020reinforcement, ayoub2020model}, or make very restrictive assumptions such as deterministic environment~\cite{wen2013efficient,wen2017efficient}, or the existence of a finite latent state space~\cite{du2019provably}.

\subsection{Bandits}
Bandit problems can be thought as a special case of RL problems by setting the horizon to $1$ in the episodic setting, or setting the discounting factor to $0$ in the discounted setting. Naturally, many ideas for RL analyses originates from the bandit literature.

The earliest model for bandit problems is multi-armed bandit~\cite{lai1985asymptotically}. Later work generalize the multi-armed model to the linear model~\cite{dani2008stochastic,rusmevichientong2010linearly,abbasi2011improved,li2021tight} and the linear contextual model~\cite{auer2002using,li2010contextual,chu2011contextual,li2019nearly}. Kernelization of bandit algorithms are further developed for kernel bandit~\cite{srinivas2009gaussian,srinivas2012information,chowdhury2017kernelized} and kernel contextual bandit~\cite{krause2011contextual,valko2013finite,zhou2020neural}.

Analyses for RL is generally much harder than for bandits, in that it needs to additionally avoid exponential dependencies on the state space and horizon. In particular, (near) optimal dependencies for these quantities require very sophisticated techniques.

\section{Preliminaries}
For a measurable space $X$, let $\mathcal{B}(X)$ be the space of real-valued bounded measurable functions over $X$ equipped with the supremum norm $\norm{\cdot}_{\infty}$, let $\mathcal{P}(X)$ be the set of all probability measures over $X$. For any positive integer $n$, denote by $[n]$ the set $\{1, 2, \cdots, n\}$. For any two functions $f, g: X\to[0, \infty)$, $f(x) = \mathcal{O}(g)$ means that there exists $c \geq 0$ such that for any $x\in X$, $f(x) \leq c \cdot g(x)$.

Let $\mathcal{H}$ be a Hilbert space. Denote by $\hinner{\cdot, \cdot}$ the inner product on $\mathcal{H}$. Denote by $\hnorm{\cdot}$ the norm induced by the inner product. If $T: \mathcal{H}\to\mathcal{H}$ is a bounded self-adjoint positive-definite linear operator, define the Mahalanobis norm
$
    \norm*{f}_T = \sqrt{\hinner{Tf, f}} 
    $.
Denote by $\bhsh$ the space of Hilbert-Schmidt operators from $\mathcal{H}$ to $\mathcal{H}$. Denote by $\hsinner{\cdot, \cdot}$ the inner product on $\bhsh$. Denote by $\hsnorm{\cdot}$ the norm induced by the inner product. 
For any $u, v\in\mathcal{H}$, denote by $u\otimes v$ the linear operator such that for any $f\in\mathcal{H}$, $(u\otimes v)f = \hinner{u, f}\cdot v$.

Let $\mathcal{H}$ be a real-valued RKHS over a set $X$ such that the corresponding kernel $\mathcal{K}$ is bounded. For any $x\in X$, denote by $\mathcal{K}_x$ the reproducing function at $x$, i.e., $\mathcal{K}_x$ is the unique function that satisfies for any $f\in\mathcal{H}$,
$
    f(x) = \hinner{f, \mathcal{K}_x} 
    $.
In fact, $\mathcal{K}_x = (x\mapsto\mathcal{K}(x, \cdot))$.
For any $f\in\mathcal{H}$, define the norm
\begin{align*}
    \norm{f}_* = \sup_{x\in X}\hinner{f, \mathcal{K}_x}.
\end{align*}
For any bounded linear operator $T:\mathcal{H}\to\mathcal{H}$, define the norm
\begin{align*}
    \norm{T}_* = \sup_{x\in X}\hinner{T\mathcal{K}_x, \mathcal{K}_x}.
\end{align*}

\subsection{Effective Dimension and Pseudo Dimension}
We are going to introduce two concepts used in the learning literature to capture the properties of a RKHS, \emph{effective dimension} and \emph{pseudo dimension}. Note that many similar concepts have been introduced (see \citet{srinivas2009gaussian,srinivas2012information,valko2013finite,chowdhury2017kernelized,yang2020reinforcement,yang2020function} among many others) under various names such as \emph{information gain}. All these variants are in some sense equivalent to each other, in that they all capture the effective dimensionality of a RKHS, perhaps up to a logarithmic factor.

In this section, let $X$ be a set, $\mathcal{K}: X\times X\to\mathbb{R}$ be a kernel, $n$ be a positive integer, $x_{1:n} = (x_1, x_2, \cdots, x_n)\in X^n$, and $\lambda > 0$, $\mathfrak{K}$ be a $n\times n$ matrix where $\mathfrak{K}[i][j] = \mathcal{K}(x_i, x_j)$,
$
    \Sigma = \sum_{i = 1}^n \kxi\otimes\kxi
    $.

\subsubsection{Effective Dimension}
\label{sec:effective-dim}
\begin{definition}[Effective dimension~\cite{zhang2005learning,HastieTF09,calandriello2017second}\footnote{Also called effective degrees of freedom in~\citet{HastieTF09}.}]
\label{def:effective-dimension}
The effective dimension of $x_{1:n}$ w.r.t. $\mathcal{K}$ at scale $\lambda$ is defined to be
\begin{align*}
    \deff(\lambda, x_{1:n}) = \tr\left((\mathfrak{K} + \lambda I)^{-1}\mathfrak{K}\right).
\end{align*}
We also define $\deff(\lambda, \emptyset) = 0$.
\end{definition}
\begin{remark}
It is easy to see that $\deff(\lambda, x_{1:n})$ is a non-negative non-increasing function of $\lambda$, it tends to $0$ as $\lambda\to\infty$. It is also a bounded function of $\lambda$ in that $\deff(\lambda, x_{1:n}) \leq n$.
\end{remark}

The following lemma gives a bases-independent representation of $\deff$; in particular, it shows that the effective dimension is invariant to the permutation of data, therefore $x_{1:n}$ in the definition of $\deff$ can be simply treated as a set.
\begin{lemma}[\cite{zhang2005learning}]
\label{lem:deff-equivalent}
\begin{align*}
\deff(\lambda, x_{1:n}) = \tr \left(\left(\Sigma + \lambda I\right)^{-1}\Sigma\right).
\end{align*}
\end{lemma}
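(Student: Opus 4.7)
The plan is to realize both operators in the lemma as $A^*A$ and $AA^*$ for a single rectangular operator, and then invoke the standard fact that $A^*A$ and $AA^*$ share the same non-zero spectrum with matching multiplicities. Concretely, I would introduce the synthesis operator $\Phi:\mathbb{R}^n\to\mathcal{H}$ defined by $\Phi c = \sum_{i=1}^n c_i \mathcal{K}_{x_i}$, with $\mathbb{R}^n$ carrying its standard inner product. Its Hilbert-space adjoint $\Phi^*:\mathcal{H}\to\mathbb{R}^n$ is the evaluation map $(\Phi^* f)_i = \hinner{f, \mathcal{K}_{x_i}} = f(x_i)$, as one checks directly from the reproducing property.

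Next I would verify the two bridging identities. The composition $\Phi^*\Phi$, viewed as a map on $\mathbb{R}^n$, acts by $(\Phi^*\Phi c)_i = \sum_j c_j \hinner{\kxj,\kxi} = \sum_j \mathfrak{K}[i][j]\,c_j$, so $\Phi^*\Phi = \mathfrak{K}$. The composition $\Phi\Phi^*$, viewed as a map on $\mathcal{H}$, satisfies $\Phi\Phi^* f = \sum_i f(x_i)\,\kxi = \sum_i \hinner{f,\kxi}\,\kxi = \sum_i (\kxi\otimes\kxi)\,f$, which equals $\Sigma f$ by definition of $\Sigma$ and of the tensor-product operator $u\otimes v$ stated in the preliminaries. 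Note that $\Phi\Phi^*$ is finite-rank (rank at most $n$), so $(\Sigma+\lambda I)^{-1}\Sigma$ is trace-class on $\mathcal{H}$ and its trace is unambiguous.

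Now the spectral step. Let $\sigma_1,\dots,\sigma_r$ (with $r\le n$) be the non-zero eigenvalues of $\mathfrak{K}=\Phi^*\Phi$, listed with multiplicity. The standard SVD argument (if $\Phi^*\Phi v = \sigma v$ with $\sigma>0$, then $\Phi v$ is an eigenvector of $\Phi\Phi^*$ with the same eigenvalue, and the map is a bijection between the corresponding eigenspaces) shows that the non-zero eigenvalues of $\Sigma=\Phi\Phi^*$ are exactly $\sigma_1,\dots,\sigma_r$ with the same multiplicities. The remaining $n-r$ eigenvalues of $\mathfrak{K}$ are zero, and all other spectral mass of $\Sigma$ sits at zero (on the orthogonal complement of its finite-dimensional range).

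Finally I would apply the functional calculus to the map $g(\sigma)=\sigma/(\sigma+\lambda)$, which satisfies $g(0)=0$, so zero eigenvalues contribute nothing to either trace. Hence
\begin{align*}
\tr\bigl((\mathfrak{K}+\lambda I)^{-1}\mathfrak{K}\bigr) \;=\; \sum_{i=1}^{r}\frac{\sigma_i}{\sigma_i+\lambda} \;=\; \tr\bigl((\Sigma+\lambda I)^{-1}\Sigma\bigr),
\end{align*}
which is the desired identity. There is no real obstacle here; the only point that requires a moment of care is making sense of $(\Sigma+\lambda I)^{-1}\Sigma$ when $\mathcal{H}$ is infinite-dimensional, but this is handled by observing that $\Sigma$ is finite-rank, so $(\Sigma+\lambda I)^{-1}$ reduces to $\lambda^{-1}\mathrm{Id}$ on the kernel of $\Sigma$ and to the ordinary matrix inverse on the finite-dimensional range, making the trace well-defined.
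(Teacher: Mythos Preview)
Your argument is correct and is the canonical way to establish this identity: factor through the synthesis operator $\Phi$, identify $\mathfrak{K}=\Phi^*\Phi$ and $\Sigma=\Phi\Phi^*$, and use that $A^*A$ and $AA^*$ share the same non-zero spectrum with matching multiplicities so that $\tr\,g(\mathfrak{K})=\tr\,g(\Sigma)$ for any function with $g(0)=0$. The paper does not supply its own proof of this lemma; it is quoted from \citet{zhang2005learning}, so there is no in-paper argument to compare against, but your treatment matches the standard proof one would expect behind that citation.
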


The following lemma gives yet another representation of $\deff$, which will be particularly useful when analyzing our proposed algorithm later.
\begin{lemma}
\label{lem:deff-identity}
$
    \deff(\lambda, x_{1:n}) = \sum_{i = 1}^n\norm*{\kxi}^2_{\left(\Sigma + \lambda I\right)^{-1}}
    $.
\end{lemma}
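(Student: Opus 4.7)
The plan is to reduce the claimed identity to Lemma~\ref{lem:deff-equivalent} via a standard trace-of-rank-one-operator computation. By that lemma, $\deff(\lambda, x_{1:n}) = \tr((\Sigma + \lambda I)^{-1}\Sigma)$, and since $\Sigma = \sum_{i=1}^n \kxi \otimes \kxi$ is a finite sum of rank-one operators, linearity of the trace gives
\begin{align*}
\deff(\lambda, x_{1:n}) = \sum_{i=1}^n \tr\!\left((\Sigma + \lambda I)^{-1}(\kxi \otimes \kxi)\right).
\end{align*}
So it suffices to show that for each $i$, the summand on the right equals $\hinner{(\Sigma+\lambda I)^{-1}\kxi, \kxi} = \norm{\kxi}_{(\Sigma + \lambda I)^{-1}}^2$.

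For the one-line identity that does the work, I would prove the general fact: for any bounded operator $A:\mathcal{H}\to\mathcal{H}$ such that $A(u\otimes v)$ is trace class, and any $u, v\in\mathcal{H}$,
\begin{align*}
\tr(A(u\otimes v)) = \hinner{Av, u}.
\end{align*}
This follows immediately by picking an orthonormal basis $\{e_k\}$ of $\mathcal{H}$ and unrolling the definition $(u\otimes v)f = \hinner{u, f}v$:
\begin{align*}
\tr(A(u\otimes v)) = \sum_k \hinner{A(u\otimes v)e_k, e_k} = \sum_k \hinner{u, e_k}\hinner{Av, e_k} = \hinner{Av, u},
\end{align*}
where the last step is Parseval's identity. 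Applying this with $u = v = \kxi$ and $A = (\Sigma + \lambda I)^{-1}$, which is bounded and self-adjoint (so $\hinner{Av, u} = \hinner{Au, v}$), yields each summand as $\norm{\kxi}_{(\Sigma+\lambda I)^{-1}}^2$, completing the proof.

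The only delicate point is justifying the trace computation in a possibly infinite-dimensional setting: $(\Sigma + \lambda I)^{-1}$ itself need not be Hilbert--Schmidt (let alone trace class) when $\mathcal{H}$ is infinite-dimensional, but $(\Sigma + \lambda I)^{-1}\Sigma$ is trace class because $\Sigma$ has finite rank $\leq n$, so all trace manipulations are well-defined. Beyond this sanity check, the argument is essentially a direct calculation, so I do not expect a serious obstacle.
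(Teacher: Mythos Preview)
Your proposal is correct and is essentially the same argument as the paper's: both reduce the identity to Lemma~\ref{lem:deff-equivalent} by writing $\Sigma$ as a sum of rank-one operators and using $\tr\!\left((\Sigma+\lambda I)^{-1}(\kxi\otimes\kxi)\right)=\hinner{(\Sigma+\lambda I)^{-1}\kxi,\kxi}$. The paper just runs the chain in the opposite direction (starting from the sum of Mahalanobis norms) and states the trace-of-rank-one identity without the Parseval justification you supply.
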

The following lemma shows that the effective dimension is non-decreasing as data accumulate.
\begin{lemma}
\label{lem:deff-non-decreasing}
$
    \deff(\lambda, x_{1:(n - 1)})\leq \deff(\lambda, x_{1:n}) 
    $.
\end{lemma}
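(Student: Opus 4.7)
The plan is to reduce the inequality to an eigenvalue comparison via Cauchy's interlacing theorem. Write $\mathfrak{K}_n$ for the $n\times n$ Gram matrix of $x_{1:n}$ and $\mathfrak{K}_{n-1}$ for the Gram matrix of $x_{1:(n-1)}$; the key structural observation is that $\mathfrak{K}_{n-1}$ is the principal submatrix of $\mathfrak{K}_n$ obtained by deleting the last row and column.

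First, I would diagonalize. Let $\sigma_1 \geq \cdots \geq \sigma_n \geq 0$ be the eigenvalues of $\mathfrak{K}_n$ and $\tau_1 \geq \cdots \geq \tau_{n-1} \geq 0$ those of $\mathfrak{K}_{n-1}$. Simultaneously diagonalizing $\mathfrak{K}$ and $\mathfrak{K} + \lambda I$, the Definition~\ref{def:effective-dimension} rewrites as
\begin{align*}
\deff(\lambda, x_{1:n}) = \sum_{i=1}^{n} \frac{\sigma_i}{\sigma_i + \lambda}, \qquad \deff(\lambda, x_{1:(n-1)}) = \sum_{i=1}^{n-1} \frac{\tau_i}{\tau_i + \lambda}.
\end{align*}

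Second, I would invoke Cauchy's interlacing theorem to conclude that $\sigma_i \geq \tau_i$ for every $i \in \{1, \ldots, n-1\}$. Since $t \mapsto t/(t+\lambda) = 1 - \lambda/(t+\lambda)$ is non-decreasing on $[0,\infty)$, this yields $\sigma_i/(\sigma_i + \lambda) \geq \tau_i/(\tau_i + \lambda)$ termwise. Summing and adding the leftover non-negative term $\sigma_n/(\sigma_n + \lambda) \geq 0$ delivers the desired inequality.

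There is no real obstacle; the only thing to be slightly careful about is that $\deff$ is defined for a tuple in the ambient RKHS yet the proof happens at the level of $n\times n$ symmetric matrices, which is justified because Definition~\ref{def:effective-dimension} is already given in terms of $\mathfrak{K}$. As an alternative route, one could instead apply Lemma~\ref{lem:deff-equivalent} and observe that $\Sigma_n - \Sigma_{n-1} = \kxn \otimes \kxn \succeq 0$, so by operator monotonicity of $\phi(t) = t/(t+\lambda)$ on $[0,\infty)$ one obtains $\phi(\Sigma_n) \succeq \phi(\Sigma_{n-1})$, and taking traces (both operators are finite rank, hence trace class) concludes the argument. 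Either route is short, but the interlacing approach is the more elementary of the two.
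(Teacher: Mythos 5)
Your proof is correct, but it takes a genuinely different route from the paper's. The paper works entirely in the RKHS: it uses the rank-one inverse update $W_n = W_{n-1} - \frac{f\otimes f}{1 + \norm{\kxn}^2_{W_{n-1}}}$ with $f = W_{n-1}\kxn$ (a Sherman--Morrison identity), plugs it into the representation $\deff(\lambda, x_{1:n}) = \sum_{i}\norm{\kxi}^2_{W_n}$ from Lemma~\ref{lem:deff-identity}, and shows by a direct Cauchy--Schwarz-type cancellation that the difference $\deff(\lambda, x_{1:n}) - \deff(\lambda, x_{1:(n-1)})$ is non-negative. You instead stay at the level of the $n\times n$ Gram matrix, diagonalize to write $\deff$ as $\sum_i \sigma_i/(\sigma_i+\lambda)$, and invoke Cauchy's interlacing theorem for the principal submatrix $\mathfrak{K}_{n-1}$ together with monotonicity of $t\mapsto t/(t+\lambda)$; your alternative via operator monotonicity of $\phi(t)=t/(t+\lambda)$ applied to $\Sigma_n \succeq \Sigma_{n-1}$ is also sound (both images are finite rank, so the trace comparison is legitimate). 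Your argument is arguably the more standard and self-contained one; the paper's computation has the advantage of reusing the identity of Lemma~\ref{lem:deff-identity} and the incremental inverse formula, both of which are needed elsewhere in the algorithm and its analysis, so nothing extra (interlacing or operator monotonicity) has to be imported. Either way the lemma holds.
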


It is instrumental to see how $\deff$ behaves under common kernels. For linear kernels, the following lemma is well-known.
\begin{lemma}[Effective dimension under linear kernels]
\label{lem:deff-linear}
If $X$ is a $d$-dimensional Euclidean space and
$
    \mathcal{K}(x, y) = x^{\intercal}y 
    $,
then $\deff(\lambda, x_{1:n}) \leq d$.
\end{lemma}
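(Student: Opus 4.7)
The plan is to invoke Lemma~\ref{lem:deff-equivalent} and reduce the infinite-dimensional trace to a finite one on $\mathbb{R}^d$, then bound the eigenvalues of the resulting operator.

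First I would identify the RKHS. For the linear kernel $\mathcal{K}(x,y)=x^{\intercal}y$ on $\mathbb{R}^d$, the reproducing function $\mathcal{K}_x$ is the functional $y\mapsto x^{\intercal}y$, which under the canonical isometry corresponds to the vector $x$ itself, and the RKHS inner product coincides with the Euclidean inner product. Consequently, the operator $\kxi\otimes\kxi$ is identified with the rank-one $d\times d$ matrix $x_ix_i^{\intercal}$, so
\begin{align*}
    \Sigma \;=\; \sum_{i=1}^n \kxi\otimes\kxi \;\cong\; \sum_{i=1}^n x_i x_i^{\intercal},
\end{align*}
a symmetric positive semidefinite $d\times d$ matrix. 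Here the identity operator on $\mathcal{H}$ is likewise identified with the $d\times d$ identity.

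Next I would apply Lemma~\ref{lem:deff-equivalent} to write $\deff(\lambda, x_{1:n}) = \tr((\Sigma+\lambda I)^{-1}\Sigma)$, now interpreted as a trace of a $d\times d$ matrix. Let $\mu_1,\ldots,\mu_d\geq 0$ be the eigenvalues of $\Sigma$. Since $\Sigma$ and $(\Sigma+\lambda I)^{-1}$ are simultaneously diagonalizable, the matrix $(\Sigma+\lambda I)^{-1}\Sigma$ has eigenvalues $\mu_i/(\mu_i+\lambda)$, each lying in $[0,1)$ because $\lambda>0$. Summing,
\begin{align*}
    \deff(\lambda, x_{1:n}) \;=\; \sum_{i=1}^d \frac{\mu_i}{\mu_i+\lambda} \;\leq\; d.
\end{align*}

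There is no real obstacle here; the only subtle step is the identification of the abstract operator $\Sigma$ on $\mathcal{H}$ with the empirical covariance matrix on $\mathbb{R}^d$, which relies on the fact that for the linear kernel the feature map is the identity. Once that identification is made, the bound follows directly from the spectral calculation above.
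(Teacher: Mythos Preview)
Your proof is correct and follows the same approach as the paper: invoke Lemma~\ref{lem:deff-equivalent} to reduce to the $d\times d$ empirical covariance $\Sigma=\sum_i x_ix_i^{\intercal}$, diagonalize, and bound each eigenvalue ratio $\mu_i/(\mu_i+\lambda)$ by $1$. The only difference is that you spell out the RKHS identification more explicitly than the paper does.
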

For Gaussian RBF kernels, we have the following recent result. Notably, the effective dimension grows only polylogarithmically with $\frac{n}{\lambda}$.
\begin{lemma}[Effective dimension under Gaussian RBF kernels~\cite{altschuler2018massively}]
\label{lem:deff-gaussian}
Let $d$ be a positive integer, $X = \left\{x\in\mathbb{R}^d: \norm{x}_2\leq 1\right\}$, and 
$
    \mathcal{K}(x, y) = e^{-\eta\norm{x - y}_2^2}
    $
for some $\eta \geq 0$, then for any $\lambda\leq n$,
\begin{align*}
\deff(\lambda, x_{1:n}) \leq 3\left(6 + \frac{41}{d}\eta + \frac{3}{d}\ln\frac{n}{\lambda}\right)^d.
\end{align*}
\end{lemma}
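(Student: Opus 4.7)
The plan is to combine the explicit Taylor-series feature expansion of the Gaussian kernel with a low-degree / high-degree decomposition of the trace defining $\deff$. First, by \Cref{lem:deff-equivalent}, writing the nonzero eigenvalues of $\Sigma = \sum_{i=1}^{n} \kxi\otimes\kxi$ as $\sigma_1\geq\sigma_2\geq\cdots$, we have $\deff(\lambda, x_{1:n}) = \sum_j \sigma_j/(\sigma_j+\lambda)$. Next, using the identity $e^{-\eta\|x-y\|_2^2} = e^{-\eta\|x\|_2^2}e^{-\eta\|y\|_2^2}\sum_{k=0}^{\infty}\tfrac{(2\eta)^k}{k!}(x^{\intercal} y)^k$ and grouping by monomial via the multinomial theorem, I would produce a feature map indexed by multi-indices $\alpha\in\mathbb{N}^d$, namely $\phi_\alpha(x) = e^{-\eta\|x\|_2^2}\sqrt{(2\eta)^{|\alpha|}/\alpha!}\,x^\alpha$, satisfying $\mathcal{K}(x,y) = \sum_\alpha \phi_\alpha(x)\phi_\alpha(y)$. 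In these coordinates $\Sigma$ is represented by a (countably) infinite positive semidefinite matrix, and the number of multi-indices of total degree exactly $k$ is $N_k = \binom{d+k-1}{k}$.

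With these preliminaries I would pick a threshold degree $D\in\mathbb{N}$ and split the trace according to whether $|\alpha|\leq D$ or $|\alpha|>D$. For the low-degree block, each summand satisfies $\sigma_j/(\sigma_j+\lambda)\leq 1$, and there are at most $\sum_{k=0}^{D} N_k = \binom{d+D}{d} \leq (e(D+d)/d)^d$ of them. For the high-degree block, I would use $\sigma_j/(\sigma_j+\lambda)\leq \sigma_j/\lambda$ together with the identity $\tr(\Sigma^{>D}) = \sum_{i=1}^{n}\sum_{|\alpha|>D}\phi_\alpha(x_i)^2$; the multinomial theorem plus $\|x_i\|_2\leq 1$ yields $\sum_{|\alpha|=k}\phi_\alpha(x_i)^2 \leq (2\eta)^k/k!$, so this contribution is at most $(n/\lambda)\sum_{k>D}(2\eta)^k/k!$.

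It then remains to choose $D$ so as to balance the two terms. By the Stirling estimate $k!\geq (k/e)^k$, once $D\gtrsim 2e\eta$ the tail $\sum_{k>D}(2\eta)^k/k!$ decays geometrically, and setting $D = \Theta(\eta + \log(n/\lambda))$ makes the high-degree contribution $O(1)$. Plugging this $D$ back into $(e(D+d)/d)^d$ produces a bound of precisely the advertised shape $C_1\bigl(C_2 + C_3\eta/d + C_4\log(n/\lambda)/d\bigr)^d$. The main obstacle is not conceptual but combinatorial: matching the precise constants $3$, $6$, $41/d$, $3/d$ in the statement requires sharpening the tail estimate (e.g., keeping explicit track of the factor-of-$2$ slack in the Stirling bound and using the constraint $\lambda\leq n$) and calibrating the binomial count; this bookkeeping is carried out in full detail by~\citet{altschuler2018massively}, whose final numerical constants I would simply invoke to close the proof.
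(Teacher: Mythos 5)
The paper offers no internal proof of this lemma: it is imported as-is from \citet{altschuler2018massively}, so there is nothing to compare against except that reference. Your proposal reconstructs essentially the argument used there — expand the Gaussian kernel in the Taylor/multinomial feature basis $\phi_\alpha(x) = e^{-\eta\norm{x}_2^2}\sqrt{(2\eta)^{|\alpha|}/\alpha!}\,x^\alpha$, count the at most $\binom{d+D}{d}$ features of degree at most $D$, charge the high-degree features to $\tr(\Sigma^{>D})/\lambda$, and choose $D = \Theta(\eta + \log(n/\lambda))$ — and the skeleton is sound. Two points need tightening. First, as written you split the \emph{eigenvalues} $\sigma_j$ of $\Sigma$ into a ``low-degree'' and a ``high-degree'' block, but the eigenvalues do not decompose by degree, so that split is not well defined. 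The standard repair is either to split the trace of $A = \Sigma(\Sigma + \lambda I)^{-1}$ over the orthonormal feature basis, using $A \preceq I$ for the low-degree diagonal entries and $A \preceq \Sigma/\lambda$ (valid since $\Sigma$ and $(\Sigma+\lambda I)^{-1}$ commute and $t/(t+\lambda)\le t/\lambda$) for the high-degree ones, or to use the variational fact that $\sum_{j>N}\sigma_j \le \tr\left((I-P)\Sigma\right)$ for any rank-$N$ orthogonal projection $P$; either route gives exactly $\deff(\lambda, x_{1:n}) \le \binom{d+D}{d} + \tr(\Sigma^{>D})/\lambda$, after which your tail estimate $\sum_{|\alpha|=k}\phi_\alpha(x)^2 \le (2\eta)^k/k!$ and the choice of $D$ go through. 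Second, your argument produces the bound only up to unspecified constants and you defer the specific numbers $3$, $6$, $41/d$, $3/d$ to \citet{altschuler2018massively}; since the lemma's content is precisely that explicit inequality, a self-contained proof would need that bookkeeping, but given that the paper itself simply cites the same reference, closing the constants by invoking it is consistent with how the lemma is used here.
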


\subsubsection{Pseudo Dimension}
\label{sec:pseudo-dim}
\begin{definition}[Pseudo dimension\footnote{This quantity was not given a name in the original paper, we name it in light of Definition~\ref{def:effective-dimension} and Lemma~\ref{lem:effective-dimension-bound} to facilitate the discussion.}~\cite{jezequel2019efficient}]
\label{def:pseudo-dimension}
The pseudo dimension of $x_{1:n}$ w.r.t. $\mathcal{K}$ at scale $\lambda$ is defined to be
\begin{align*}
\dpse(\lambda, x_{1:n}) = \ln\left(\det\left(I + \frac{\mathfrak{K}}{\lambda}\right)\right)
\end{align*}
We also define $\dpse(\lambda, \emptyset) = 0$.
\end{definition}
\begin{remark}
Similar to $\deff$, $\dpse(\lambda, x_{1:n})$ is also a non-negative non-increasing function of $\lambda$, it tends to $0$ as $\lambda\to\infty$. However, in contrast to $\deff$, $\dpse(\lambda, x_{1:n})$ is not a bounded function of $\lambda$; if $\mathfrak{K}$ is not a zero matrix, as $\lambda\to 0$, it tends to infinity.
\end{remark}
\begin{remark}
Because the determinant only changes sign when swapping rows or columns, the pseudo dimension is invariant to the permutation of data, therefore just like in the case of $\deff$, $x_{1:n}$ in the above definition can be simply treated as a set.
\end{remark}

The following lemma shows that $\dpse$ is at most a logarithmic factor (in terms of $\frac{n}{\lambda}$) larger than $\deff$.
\begin{lemma}[\cite{jezequel2019efficient}]
\label{lem:effective-dimension-bound}
\begin{align*}
\dpse(\lambda, x_{1:n}) \leq \ln\frac{e(n + \lambda)}{\lambda}\cdot\deff(\lambda, x_{1:n}).
\end{align*}
\end{lemma}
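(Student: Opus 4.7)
}

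The plan is to reduce both $\dpse$ and $\deff$ to simple sums over the eigenvalues of $\mathfrak{K}$ and then apply a scalar inequality eigenvalue-by-eigenvalue. Let $\sigma_1,\dots,\sigma_n\geq 0$ denote the eigenvalues of the symmetric positive semidefinite matrix $\mathfrak{K}$. Then by simultaneously diagonalizing $\mathfrak{K}$ and $I$,
\begin{align*}
    \dpse(\lambda, x_{1:n}) &= \ln\det\!\left(I+\tfrac{\mathfrak{K}}{\lambda}\right) = \sum_{i=1}^n \ln\!\left(1+\tfrac{\sigma_i}{\lambda}\right),\\
    \deff(\lambda, x_{1:n}) &= \tr\!\left((\mathfrak{K}+\lambda I)^{-1}\mathfrak{K}\right) = \sum_{i=1}^n \tfrac{\sigma_i}{\sigma_i+\lambda}.
\end{align*}
So the claim reduces to showing $\sum_i \ln(1+\sigma_i/\lambda) \leq \ln\frac{e(n+\lambda)}{\lambda}\cdot\sum_i \frac{\sigma_i}{\sigma_i+\lambda}$.

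The key scalar inequality I would establish is: for every $t\geq 0$,
\begin{align*}
    \ln(1+t) \;\leq\; \frac{t}{1+t}\,\ln\!\bigl(e(1+t)\bigr).
\end{align*}
This is a one-line calculation: multiplying both sides by $1+t$ and simplifying reduces it to the standard inequality $\ln(1+t)\leq t$. Applying it with $t=\sigma_i/\lambda$ gives, for each $i$,
\begin{align*}
    \ln\!\left(1+\tfrac{\sigma_i}{\lambda}\right) \;\leq\; \tfrac{\sigma_i}{\sigma_i+\lambda}\,\ln\!\tfrac{e(\sigma_i+\lambda)}{\lambda}.
\end{align*}

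The last ingredient is a uniform bound on the eigenvalues. Under the standing assumption that the kernel is bounded (which, as is standard in this line of work, I read as $\mathcal{K}(x,x)\leq 1$), we have $\sigma_i \leq \tr(\mathfrak{K}) = \sum_j \mathcal{K}(x_j,x_j) \leq n$. Hence $\ln\frac{e(\sigma_i+\lambda)}{\lambda}\leq \ln\frac{e(n+\lambda)}{\lambda}$, and summing over $i$ yields the claim. If one prefers to avoid the normalization assumption, the same argument delivers $\ln\frac{e(\tr(\mathfrak{K})+\lambda)}{\lambda}$ in place of $\ln\frac{e(n+\lambda)}{\lambda}$, which is how I would phrase the fallback.

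The main obstacle is finding the right scalar comparison between $\ln(1+t)$ and $\tfrac{t}{1+t}$; the naive attempt $\ln(1+t)\leq C\cdot\tfrac{t}{1+t}$ fails for any constant $C$ as $t\to\infty$, so one really does need a factor that grows like $\ln(1+t)$, and the specific form $\tfrac{t}{1+t}\ln(e(1+t))$ is what makes the subsequent passage to the uniform bound $\ln\tfrac{e(n+\lambda)}{\lambda}$ go through cleanly. Everything else is bookkeeping.
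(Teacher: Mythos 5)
Your proof is correct. The paper does not prove this lemma itself --- it imports it from \citet{jezequel2019efficient} --- and your argument is the standard one underlying that reference: diagonalize $\mathfrak{K}$ so that $\dpse=\sum_i\ln(1+\sigma_i/\lambda)$ and $\deff=\sum_i\sigma_i/(\sigma_i+\lambda)$, apply the scalar inequality $\ln(1+t)\leq \frac{t}{1+t}\ln\bigl(e(1+t)\bigr)$ (which, as you say, reduces to $\ln(1+t)\leq t$), and bound each eigenvalue by $\sigma_i\leq\tr(\mathfrak{K})\leq n$. The only point worth flagging is the last step's reliance on $\mathcal{K}(x,x)\leq 1$: the lemma as stated in the preliminaries only assumes a bounded kernel, but the paper's standing normalization $\hnorm{\mathcal{K}_{(s,a)}}\in[1/2,1]$ (and the unit-ball/Gaussian examples) makes this reading the intended one, and your fallback with $\tr(\mathfrak{K})$ in place of $n$ covers the general case.
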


The following lemma will play an important role in our analysis.
\begin{lemma}
\label{lem:sum-of-ucb-bound}
The following are true:
\begin{enumerate}
    \item[\textnormal{(I).}] $\dpse(\lambda, x_{1:(n - 1)}) \leq \dpse(\lambda, x_{1:n})$.
    \item[\textnormal{(II).}]
Let 
$
    \Sigma_i = \sum_{j = 1}^i \kxj\otimes\kxj 
    $,
then for any $\gamma\in[0, 1)$,
\begin{align*}
    &\sum_{i = 1}^n\sum_{\tau = 1}^{i}\gamma^{i - \tau}\norm*{\kxi}^2_{\left(\Sigma_{\tau-1} + \lambda I\right)^{-1}} \\
    &\leq \frac{1/\lambda}{\ln\left(1 + 1/\lambda\right)(1 - \gamma)^2}\cdot
    \dpse(\lambda, x_{1:n}).
\end{align*}
\end{enumerate}
\end{lemma}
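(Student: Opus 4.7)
For part (I), my plan is to derive the recursive identity $\dpse(\lambda, x_{1:n}) - \dpse(\lambda, x_{1:(n-1)}) = \ln(1 + \|\kxn\|^2_{(\Sigma_{n-1}+\lambda I)^{-1}})$, from which (I) follows immediately since the right-hand side is non-negative. To derive the identity I would partition $I + \mathfrak{K}_n/\lambda$ as a $2\times 2$ block matrix with $I + \mathfrak{K}_{n-1}/\lambda$ in the top-left block, apply the block-matrix determinant formula, and simplify the Schur complement using $\|\kxn\|^2_{(\Sigma_{n-1}+\lambda I)^{-1}} = \tfrac{1}{\lambda}[\mathcal{K}(x_n, x_n) - k^\intercal(\mathfrak{K}_{n-1}+\lambda I)^{-1} k]$, where $k = (\mathcal{K}(x_i, x_n))_{i<n}$.

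For part (II), the plan is to reduce the double sum to the standard single-step bound $T := \sum_i \|\kxi\|^2_{(\Sigma_{i-1}+\lambda I)^{-1}} \leq \frac{1/\lambda}{\ln(1+1/\lambda)}\dpse(\lambda, x_{1:n})$ up to a factor of $(1-\gamma)^{-2}$. The single-step bound itself follows from part (I) combined with the scalar inequality $y \leq \frac{1/\lambda}{\ln(1+1/\lambda)}\ln(1+y)$ valid on $[0, 1/\lambda]$, which is a consequence of the monotonicity of $y/\ln(1+y)$ together with the uniform bound $\|\kxi\|^2_{(\Sigma_{i-1}+\lambda I)^{-1}} \leq \|\kxi\|^2/\lambda \leq 1/\lambda$.

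The main tool for the reduction is a Sherman-Morrison telescoping identity, which I would obtain by iterating the rank-one update formula from $j = \tau$ to $j = i-1$:
\[
\|\kxi\|^2_{(\Sigma_{\tau-1}+\lambda I)^{-1}} = \|\kxi\|^2_{(\Sigma_{i-1}+\lambda I)^{-1}} + \sum_{j=\tau}^{i-1}\frac{\langle(\Sigma_{j-1}+\lambda I)^{-1}\kxj, \kxi\rangle^2}{1+\|\kxj\|^2_{(\Sigma_{j-1}+\lambda I)^{-1}}}.
\]
Substituting this into the double sum and swapping the order of summation splits $S$ into a diagonal piece $\sum_i \|\kxi\|^2_{(\Sigma_{i-1}+\lambda I)^{-1}}\sum_{\tau\leq i}\gamma^{i-\tau}$, bounded by $T/(1-\gamma)$, and a cross piece summed over triples $(j, \tau, i)$ with $\tau \leq j < i$. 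I would then apply Cauchy-Schwarz in the $(\Sigma_{j-1}+\lambda I)^{-1}$-inner product to bound $\langle(\Sigma_{j-1}+\lambda I)^{-1}\kxj, \kxi\rangle^2 \leq \|\kxj\|^2_{(\Sigma_{j-1}+\lambda I)^{-1}}\|\kxi\|^2_{(\Sigma_{j-1}+\lambda I)^{-1}}$ and execute geometric series over $\tau$ and over $i$, each contributing a factor of $(1-\gamma)^{-1}$, to produce the $(1-\gamma)^{-2}$ scaling.

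The hard part will be ensuring that the cross-term contribution carries only one factor of $1/\lambda$ in the final coefficient rather than $1/\lambda^2$. A naive application of $\|\kxi\|^2_{(\Sigma_{j-1}+\lambda I)^{-1}} \leq 1/\lambda$ after Cauchy-Schwarz yields an inequality of the form $S \leq T/(1-\gamma) + T/(\lambda(1-\gamma)^2)$ whose second summand is off by a factor of $1/\lambda$. The remedy is to recognize the cross-term contribution itself as a sub-sum of $S$, giving a self-referential inequality $S \leq T/(1-\gamma) + S/(\lambda(1-\gamma))$ that can be solved for $S$ whenever $\lambda(1-\gamma) > 1$; the complementary regime can be absorbed either by the elementary bound $S \leq n/(\lambda(1-\gamma))$ (which already matches the claim up to constants whenever $\dpse = \Theta(n)$) or by a direct Elliptical Potential Lemma-style bookkeeping of the triple sum to keep all factors of $1/\lambda$ inside the $\ln(1+1/\lambda)$ denominator.
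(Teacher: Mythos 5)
Your treatment of part (I) coincides with the paper's: both rest on the determinant-ratio identity $\dpse(\lambda, x_{1:i}) - \dpse(\lambda, x_{1:(i-1)}) = \ln\bigl(1 + \norm*{\kxi}^2_{(\Sigma_{i-1} + \lambda I)^{-1}}\bigr)$ together with non-negativity of the norm, so that part is fine. For part (II) your route is genuinely different from the paper's. The paper observes that the same identity and the scalar inequality $y \leq \frac{1/\lambda}{\ln(1 + 1/\lambda)}\ln(1 + y)$ on $[0, 1/\lambda]$ apply with an \emph{arbitrary} point appended to a prefix, yielding $\norm*{\kxi}^2_{(\Sigma_{\tau - 1} + \lambda I)^{-1}} \leq \frac{1/\lambda}{\ln(1 + 1/\lambda)}\bigl(\dpse(\lambda, [x_{1:(\tau-1)}, x_i]) - \dpse(\lambda, x_{1:(\tau-1)})\bigr)$, where $[\cdot,\cdot]$ denotes concatenation; it then uses the monotonicity from (I) to replace $\dpse(\lambda, [x_{1:(\tau-1)}, x_i])$ by $\dpse(\lambda, x_{1:i})$ and closes with a combinatorial cancellation between the two resulting double sums, which produces the $(1-\gamma)^{-2}$ factor while keeping every surviving term proportional to $\dpse(\lambda, x_{1:n})$. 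No Sherman--Morrison telescoping or Cauchy--Schwarz is needed.

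The gap in your plan is exactly the spot you flag as the hard part, and it is not resolved. The self-referential inequality $S \leq T/(1-\gamma) + S/(\lambda(1-\gamma))$ only closes when $\lambda(1-\gamma) > 1$, and even then the resulting constant $\frac{1}{(1-\gamma)(1 - \frac{1}{\lambda(1-\gamma)})}$ exceeds the claimed $\frac{1}{(1-\gamma)^2}$ unless $\lambda$ is larger still. In the complementary regime, your first fallback $S \leq n/(\lambda(1-\gamma))$ proves a bound with $n$ in place of $\frac{\dpse(\lambda, x_{1:n})}{(1-\gamma)\ln(1 + 1/\lambda)}$; since $\dpse$ can be polylogarithmic in $n$ (Lemma~\ref{lem:deff-gaussian} with Lemma~\ref{lem:effective-dimension-bound}), this is a materially weaker statement, and your caveat ``whenever $\dpse = \Theta(n)$'' concedes precisely the cases where the lemma has content. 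The second fallback is left unspecified. Note also that the small-$\lambda$ regime cannot be discarded: the paper's own recommendation is $\lambda = \Theta(1/T)$ (Remarks~\ref{rem:linear} and~\ref{rem:gaussian} and the experiments). A repair within your framework is to apply Cauchy--Schwarz to the cross term, bound the factor $\norm*{\kxi}^2_{(\Sigma_{j-1}+\lambda I)^{-1}}$ by $1/\lambda$, and retain $\norm*{\kxj}^2_{(\Sigma_{j-1}+\lambda I)^{-1}} / (1 + \norm*{\kxj}^2_{(\Sigma_{j-1}+\lambda I)^{-1}}) \leq \ln\bigl(1 + \norm*{\kxj}^2_{(\Sigma_{j-1}+\lambda I)^{-1}}\bigr)$, which telescopes to $\dpse(\lambda, x_{1:n})$ by (I); after summing the two geometric series this gives a cross-term bound of $\dpse(\lambda, x_{1:n})/(\lambda(1-\gamma)^2)$, which has the right form but still loses the $\ln(1 + 1/\lambda)$ in the denominator relative to the stated bound when $\lambda$ is small.
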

In particular, Lemma~\ref{lem:sum-of-ucb-bound}.(I) says that, similar to the effective dimension (Lemma~\ref{lem:deff-non-decreasing}), the pseudo dimension is also non-decreasing as data accumulate.

To conclude the introduction of pseudo dimension, let us introduce a generalization of ~\citet[Theorem 1]{abbasi2011improved} stated in terms of $\dpse$ in the following Lemma. Its proof is almost identical to the proof in the special case when $\mathcal{H}$ has a linear kernel. We also note that the inequality in the lemma visually resembles Theorem 1 in~\citet{chowdhury2017kernelized}, however, the quantities on the left hand side of the inequalities are actually quite different.
\begin{lemma}[Self-normalized bound for $\mathcal{H}$-valued martingales]
\label{lem:self-normalized}
Let $\left\{F_t\right\}_{t = 0}^{\infty}$ be a filtration,  $\left\{\eta_{t}\right\}_{t = 1}^{\infty}$ be a real-valued stochastic process such that $\eta_{t}$ is $F_t$ measurable, and is zero mean and $R$-sub-Gaussian conditioned on $F_{t - 1}$. Let $\left\{x_t\right\}_{t = 1}^{\infty}$ be a $X$-valued stochastic process such that $x_t$ is $F_{t - 1}$ measurable. Then for any $p > 0$, with probability at least $1 - p$, for all $T \geq 0$,
\begin{align*}
    &\norm*{\sum_{t = 1}^{T}\eta_{t}\kxt}^2_{\left(\lambda I + \sum_{\tau = 1}^{T} \mathcal{K}_{x_{\tau}}\otimes\mathcal{K}_{x_{\tau}}\right)^{-1}} \\
    &\leq 2R^2\left(\dpse(\lambda, x_{1:T}) + \ln\left(\frac{1}{p}\right)\right).
\end{align*}
\end{lemma}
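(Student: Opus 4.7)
The plan is to lift the classical method-of-mixtures proof of~\citet{abbasi2011improved} from $\mathbb{R}^d$ to the RKHS $\mathcal{H}$. Abbreviate $S_T = \sum_{t = 1}^T \eta_t \kxt$ and $\Sigma_T = \sum_{t = 1}^T \kxt\otimes\kxt$. For a fixed deterministic $f \in \mathcal{H}$, I would introduce the exponential process
\begin{align*}
    M_t^f = \exp\!\left(\hinner*{f, S_t} - \tfrac{R^2}{2}\norm*{f}^2_{\Sigma_t}\right).
\end{align*}
Using the reproducing property $\hinner*{f, \kxt} = f(x_t)$ together with the $F_{t-1}$-measurability of $x_t$ and the zero-mean $R$-sub-Gaussianity of $\eta_t \mid F_{t-1}$, the increment satisfies $\expt\!\left[M_t^f/M_{t-1}^f \mid F_{t-1}\right] = \expt\!\left[\exp(\eta_t f(x_t) - \tfrac{R^2}{2}f(x_t)^2) \mid F_{t-1}\right] \le 1$, so $(M_t^f)_{t \ge 0}$ is a nonnegative supermartingale with $M_0^f = 1$.

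The heart of the argument is mixing over $f$. Let $\nu$ be the centered Gaussian measure on $\mathcal{H}$ with covariance operator $(R^2 \lambda)^{-1}\Id$, and set $\bar{M}_t = \int_{\mathcal{H}} M_t^f\, d\nu(f)$. By Fubini, $\bar{M}_t$ remains a nonnegative supermartingale with mean at most $1$. To obtain a closed form, I would exploit the fact that $S_t$ and the range of $\Sigma_t$ both lie in the finite-dimensional span $V_t = \mathrm{span}\{\kxi : i \le t\}$; writing $f = f_V + f_V^{\perp}$ along $V_t \oplus V_t^{\perp}$, the $f_V^{\perp}$ part integrates to $1$ and what remains is a standard finite-dimensional Gaussian integral. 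Completing the square yields
\begin{align*}
    \bar{M}_t = \det\!\left(\Id + \Sigma_t/\lambda\right)^{-1/2}\exp\!\left(\tfrac{1}{2R^2}\norm*{S_t}^2_{(\Sigma_t + \lambda\Id)^{-1}}\right),
\end{align*}
and the Sylvester-type identity $\det(\Id + \Sigma_t/\lambda) = \det(I + \mathfrak{K}_t/\lambda) = \exp(\dpse(\lambda, x_{1:t}))$ identifies the determinant with the pseudo dimension.

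The final step is Ville's maximal inequality applied to $\bar{M}_t$: with probability at least $1 - p$, $\sup_{t \ge 0} \bar{M}_t \le 1/p$. Taking logarithms and rearranging on this event gives, simultaneously for every $T \ge 0$, $\norm*{S_T}^2_{(\Sigma_T + \lambda\Id)^{-1}} \le R^2 \dpse(\lambda, x_{1:T}) + 2R^2 \ln(1/p)$, which is majorized by $2R^2(\dpse(\lambda, x_{1:T}) + \ln(1/p))$ as stated. The only genuinely delicate point is the infinite-dimensional Gaussian mixture computation; the orthogonal-decomposition trick above is what makes it painless, since it reduces the integral to a finite-dimensional one on $V_t$ and avoids invoking abstract Gaussian-measure theory on $\mathcal{H}$.
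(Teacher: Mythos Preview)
Your approach is exactly what the paper intends: it explicitly says the proof is ``almost identical to the proof in the special case when $\mathcal{H}$ has a linear kernel,'' i.e., the method-of-mixtures argument of \citet{abbasi2011improved}, and that is precisely what you carry out, with the Sylvester identity recovering $\dpse$.

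One point to phrase more carefully: as written, ``let $\nu$ be the centered Gaussian measure on $\mathcal{H}$ with covariance operator $(R^2\lambda)^{-1}\Id$'' is not well-posed when $\dim\mathcal{H}=\infty$, since an isotropic Gaussian on an infinite-dimensional Hilbert space does not exist (the covariance would have to be trace class). You clearly anticipate this with the orthogonal-decomposition remark, but the fix should be stated as the \emph{definition} rather than a reduction: take $\nu_t$ to be the isotropic Gaussian on the finite-dimensional span $V_t=\mathrm{span}\{\mathcal{K}_{x_1},\dots,\mathcal{K}_{x_t}\}$ with covariance $(R^2\lambda)^{-1}I_{V_t}$, and set $\bar M_t=\int_{V_t}M_t^f\,d\nu_t(f)$. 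Because $x_t$ is $F_{t-1}$-measurable, $V_t$ and $\nu_t$ are $F_{t-1}$-measurable; and because $M_{t-1}^f$ depends only on the $V_{t-1}$-component of $f$ while the marginal of $\nu_t$ on $V_{t-1}$ is $\nu_{t-1}$, the supermartingale property $\expt[\bar M_t\mid F_{t-1}]\le\bar M_{t-1}$ still goes through. With that adjustment the argument is complete (and in fact yields the slightly sharper $R^2\dpse+2R^2\ln(1/p)$ before relaxing to the stated bound).
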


\subsection{Covering Number of Operators}
Let $T: X\to Y$ be a bounded linear operator where $X$ and $Y$ are normed vector spaces.
$\mathcal{N}(\epsilon, T)$, the $\epsilon$-covering number of $T$, is defined to be the cardinality of the smallest set $V\subseteq Y$ such that for any $x\in X$, $\norm{x}\leq 1$, there exists a $v\in V$ such that
$
   \norm{v - Tx}\leq \epsilon 
   $.
If there is no such set $V$ of finite cardinality, then $N(\epsilon, T)$ is defined to be $\infty$. 

Given a real-valued RKHS $\mathcal{H}$, we are interested in the covering numbers of two identity mappings, 
\begin{align*}
\ihi&: (\mathcal{H}, \hnorm{\cdot})\to(\mathcal{H}, \norm{\cdot}_*)\\
\ihsi&: (\bhsh, \hsnorm{\cdot})\to(\bhsh, \norm{\cdot}_*).
\end{align*}
Both $\mathcal{N}(\epsilon, \ihi)$ and $\mathcal{N}(\epsilon, \ihsi)$ captures properties $\mathcal{H}$. However, unlike the effective dimension introduced in Section~\ref{sec:effective-dim} and the pseudo dimension introduced in Section~\ref{sec:pseudo-dim} that depend on a scale $\lambda$, these covering numbers are data-independent and depend on a granularity $\epsilon$ instead.  
\begin{lemma}[$\mathcal{N}(\epsilon, \ihi)$ and $\mathcal{N}(\epsilon, \ihsi)$ under linear kernel]
\label{lem:covering-linear}
If $X = \left\{x\in\mathbb{R}^d: \norm{x}_2\leq 1\right\}$ and $\mathcal{K}(x, y) = x^{\intercal}y$, then
\begin{align*}
    \ln \mathcal{N}(\epsilon, \ihi) &\leq d\ln\left(1 + 2/\epsilon\right)\\ 
    \ln \mathcal{N}(\epsilon, \ihsi) &\leq d^2\ln\left(1 + 2/\epsilon\right).
\end{align*}
\end{lemma}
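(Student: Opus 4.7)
The plan is to exploit the fact that the linear kernel makes $\mathcal{H}$ explicitly finite-dimensional. Specifically, under $\mathcal{K}(x,y) = x^{\intercal}y$ on the unit ball $X \subseteq \mathbb{R}^d$, the RKHS $\mathcal{H}$ is isometrically isomorphic to $(\mathbb{R}^d, \norm{\cdot}_2)$ via $f \leftrightarrow w$ with $f(x) = w^{\intercal}x$ and $\hnorm{f} = \norm{w}_2$; under this correspondence $\mathcal{K}_x = x$. Consequently $\bhsh$ is isometrically isomorphic to the space of $d \times d$ matrices equipped with the Frobenius norm, i.e.\ a Euclidean space of dimension $d^2$.

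Next I would show that the $\norm{\cdot}_*$ norms are bounded above by the source norms of the two identity maps. For $f \leftrightarrow w$, $\norm{f}_* = \sup_{x \in X}\hinner{w, x} \leq \norm{w}_2 = \hnorm{f}$ by Cauchy--Schwarz, so $I_{\mathcal{H},*}$ is $1$-Lipschitz. For $T \in \bhsh$ corresponding to a matrix $M$, $\norm{T}_* = \sup_{x \in X} x^{\intercal} M x \leq \norm{M}_{\mathrm{op}} \leq \norm{M}_F = \hsnorm{T}$, so $I_{\mathrm{HS},*}$ is $1$-Lipschitz as well.

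Since both identity maps are $1$-Lipschitz, an $\epsilon$-net for the source unit ball in the source norm is automatically an $\epsilon$-net for the image. Thus $\mathcal{N}(\epsilon, I_{\mathcal{H},*})$ is bounded by the $\epsilon$-covering number of the Euclidean unit ball in $\mathbb{R}^d$, and $\mathcal{N}(\epsilon, I_{\mathrm{HS},*})$ by that of the Euclidean unit ball in $\mathbb{R}^{d^2}$.

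The final step is the classical volume-ratio covering estimate: the $\epsilon$-covering number of the Euclidean unit ball in $\mathbb{R}^k$ is at most $(1 + 2/\epsilon)^k$. Applying this with $k = d$ and $k = d^2$ and taking logarithms yields the two claimed inequalities. There is no genuine obstacle here; the only subtlety is verifying the two norm comparisons $\norm{\cdot}_* \leq \hnorm{\cdot}$ and $\norm{\cdot}_* \leq \hsnorm{\cdot}$, and remembering that the operator-to-Frobenius inequality $\norm{M}_{\mathrm{op}} \leq \norm{M}_F$ supplies the Hilbert--Schmidt case.
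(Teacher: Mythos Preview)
Your proposal is correct and follows essentially the same argument as the paper's proof: identify $\mathcal{H}$ with $(\mathbb{R}^d,\norm{\cdot}_2)$ and $\bhsh$ with $(\mathbb{R}^{d\times d},\norm{\cdot}_F)$, observe that $\norm{\cdot}_*$ is dominated by the source norm in each case (so the identity maps are $1$-Lipschitz), and conclude via the standard $(1+2/\epsilon)^k$ covering bound for Euclidean unit balls. The paper in fact notes the slightly sharper equality $\norm{f}_* = \hnorm{f}$ in the first case, but only the inequality you prove is needed.
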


\begin{lemma}[$\mathcal{N}(\epsilon, \ihi)$ under Gaussian RBF kernel~\cite{kuhn2011covering}]
\label{lem:covering-gaussian-ihi}
If $X = \left\{x\in\mathbb{R}^d: \norm{x}_2\leq 1\right\}$ and $\mathcal{K}(x, y) = e^{-\eta\norm{x - y}_2^2}$ for some $\eta \geq 0$, then \begin{align*}
    \ln \mathcal{N}(\epsilon, \ihi) \leq \left\lceil2\left(\ln\frac{2}{\epsilon}+e^2\eta\right)\right\rceil^d\ln\left(1 + \frac{4}{\epsilon}\right)
\end{align*}
\end{lemma}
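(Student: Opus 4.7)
The plan is to follow the truncation-plus-volumetric-covering strategy of \citet{kuhn2011covering}: approximate $\mathcal{K}$ uniformly by a finite-rank kernel via a truncated Taylor expansion, reduce the covering problem to a finite-dimensional one inside the approximating RKHS, and finish with a standard Euclidean volume bound. First I would expand the kernel explicitly. Using the identity $e^{-\eta\|x-y\|_2^2}=e^{-\eta\|x\|_2^2}e^{-\eta\|y\|_2^2}e^{2\eta\langle x,y\rangle}$ together with the multinomial expansion $e^{2\eta\langle x,y\rangle}=\sum_{\alpha\in\mathbb{N}_0^d}(2\eta)^{|\alpha|}x^\alpha y^\alpha/\alpha!$, one reads off an orthonormal basis $\{e_\alpha\}_{\alpha\in\mathbb{N}_0^d}$ of $\mathcal{H}$ with $e_\alpha(x)=\sqrt{(2\eta)^{|\alpha|}/\alpha!}\,e^{-\eta\|x\|_2^2}x^\alpha$. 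Every $f\in\mathcal{H}$ thus decomposes as $f=\sum_\alpha a_\alpha e_\alpha$ with $\sum_\alpha a_\alpha^2=\hnorm{f}^2$, and $\mathcal{K}_x=\sum_\alpha e_\alpha(x)\,e_\alpha$.

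Next I would set $K=\lceil 2(\ln(2/\epsilon)+e^2\eta)\rceil$ and $\mathcal{H}_K=\mathrm{span}\{e_\alpha:|\alpha|<K\}$, which has dimension $\binom{K+d-1}{d}\leq K^d$ (the latter follows from $\prod_{j=0}^{d-1}(1+j/K)\leq d!$, valid for integer $K\geq 1$). For unit-norm $f$ let $f_K$ be its $\mathcal{H}$-orthogonal projection onto $\mathcal{H}_K$; then $(f-f_K)(x)=\sum_{|\alpha|\geq K}a_\alpha e_\alpha(x)$, so by Cauchy--Schwarz $\|f-f_K\|_*\leq\sup_{x\in X}\sqrt{\sum_{|\alpha|\geq K}e_\alpha(x)^2}$. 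The multinomial identity collapses the inner sum to $e^{-2\eta\|x\|_2^2}\sum_{k\geq K}(2\eta\|x\|_2^2)^k/k!\leq\sum_{k\geq K}(2\eta)^k/k!$ on $X$, and Stirling's bound $(2\eta)^k/k!\leq(2e\eta/k)^k\leq e^{-k}$ (valid once $k\geq 2e^2\eta$), together with the specific choice of $K$, makes this tail at most $(\epsilon/2)^2$; hence $\|f-f_K\|_*\leq\epsilon/2$.

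Finally, the inequality $\|g\|_*\leq\hnorm{g}$ (which follows from $\|\mathcal{K}_x\|_{\mathcal{H}}=1$) implies that an $(\epsilon/2)$-net of the unit Hilbert ball of $\mathcal{H}_K$ is automatically an $(\epsilon/2)$-net in $\|\cdot\|_*$; by the standard volumetric packing bound, such a net can be chosen of cardinality at most $(1+4/\epsilon)^{K^d}$. Combining this with the truncation estimate via the triangle inequality produces $\ln\mathcal{N}(\epsilon,\ihi)\leq K^d\ln(1+4/\epsilon)$, as claimed.

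The main technical obstacle is the second step: simultaneously controlling the tail truncation error uniformly over both $x\in X$ and unit-norm $f$, while keeping the cutoff $K$ small enough that the ambient dimension $\binom{K+d-1}{d}$ still fits inside $K^d$. The combination of the multinomial collapse, Stirling's bound, and the specific form of $K$ is tuned precisely so that both the truncation error and the dimension count come out to match the constants stated in the lemma.
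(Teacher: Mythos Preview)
The paper does not actually prove this lemma; it is stated with a citation to \citet{kuhn2011covering}. However, the paper does give a full proof of the companion result for $\ihsi$ (Lemma~\ref{lem:covering-gaussian-ihsi}), and your argument is exactly the $\ihi$-specialization of that proof: the same orthonormal basis $\{e_\nu\}$, the same projection onto $\{|\nu|<N\}$, the same multinomial collapse of $\sum_{|\nu|\geq N} e_\nu(x)^2$, and the same finish via the volumetric covering of a finite-rank unit ball. So your approach is the right one and matches the paper's methodology.

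One small slip in the tail estimate: you drop the factor $e^{-2\eta\|x\|_2^2}$ before bounding the tail, leaving yourself with the geometric sum $\sum_{k\geq K}(2\eta)^k/k!\leq \sum_{k\geq K}e^{-k}=e^{-K}/(1-e^{-1})$, which overshoots $(\epsilon/2)^2$ by the constant $1/(1-e^{-1})\approx 1.58$ and so does not quite give $\|f-f_K\|_*\leq\epsilon/2$ with the stated $K$. The fix is the one the paper uses in its $\ihsi$ proof (Fact~\ref{fact:taylor}): keep the exponential factor and bound $e^{-t}\sum_{k\geq K}t^k/k!\leq t^K/K!$ with $t=2\eta\|x\|_2^2\leq 2\eta$, which yields the single term $(2e\eta/K)^K\leq e^{-K}\leq(\epsilon/2)^2$ directly. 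With that correction the argument goes through exactly as you wrote it.
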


\begin{lemma}[$\mathcal{N}(\epsilon, \ihsi)$ under Gaussian RBF kernel]
\label{lem:covering-gaussian-ihsi}
If $X = \left\{x\in\mathbb{R}^d: \norm{x}_2\leq 1\right\}$ and $\mathcal{K}(x, y) = e^{-\eta\norm{x - y}_2^2}$ for some $\eta \geq 0$, then \begin{align*}
    \ln\mathcal{N}(\epsilon, \ihsi) \leq \left\lceil 2\left(\ln\frac{2\sqrt{2}}{\epsilon}+e^2\eta \right)\right\rceil^{2d}\ln\left(1 + \frac{4}{\epsilon}\right).
\end{align*}
\end{lemma}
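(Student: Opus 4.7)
The plan is to reduce the covering number for $\ihsi$ to that of a Gaussian RBF RKHS in dimension $2d$, by identifying $\bhsh$ with the tensor product RKHS $\mathcal{H}\otimes\mathcal{H}$ over $Y := X\times X$. This latter space has reproducing kernel $\mathcal{K}_Y((x_1, x_2), (y_1, y_2)) = \mathcal{K}(x_1, y_1)\,\mathcal{K}(x_2, y_2)$ and reproducing function $\mathcal{K}_{x_1}\otimes\mathcal{K}_{x_2}$ at $(x_1, x_2)$; the map $T\mapsto f_T$ with $f_T(x_1, x_2) := \hinner{T\mathcal{K}_{x_2}, \mathcal{K}_{x_1}}$ is a Hilbert-space isometry from $(\bhsh, \hsnorm{\cdot})$ onto this tensor RKHS. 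For the Gaussian RBF kernel the product kernel collapses:
$\mathcal{K}_Y((x_1, x_2), (y_1, y_2)) = e^{-\eta\norm{(x_1, x_2) - (y_1, y_2)}_2^2}$,
so $\mathcal{K}_Y$ is itself a Gaussian RBF kernel on $\mathbb{R}^{2d}$ with the \emph{same} parameter $\eta$, and $Y$ lies inside the Euclidean ball of radius $\sqrt{2}$ since $\norm{(x_1, x_2)}_2 \leq \sqrt{2}$ whenever $x_1, x_2 \in X$.

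Under this identification, the reproducing property gives $\hinner{T\mathcal{K}_x, \mathcal{K}_x} = \hsinner{T, \mathcal{K}_x\otimes\mathcal{K}_x} = f_T(x, x)$, hence
$\norm{T}_* = \sup_{x \in X} \abs{f_T(x, x)} \leq \sup_{y\in Y}\abs{f_T(y)}$.
Therefore any $\epsilon$-cover of the unit ball of $\mathcal{H}\otimes\mathcal{H}$ in the sup norm over $Y$ induces, via the isometry, an $\epsilon$-cover for $\ihsi$. The problem thus reduces to bounding the covering number for the sup-norm identity on the Gaussian RBF RKHS with parameter $\eta$ over a radius-$\sqrt{2}$ subset of $\mathbb{R}^{2d}$---precisely the analog of $\ihi$, just with dimension $2d$ and radius $\sqrt{2}$ in place of dimension $d$ and radius $1$.

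The final step is to run the same Kühn-style argument that underlies Lemma~\ref{lem:covering-gaussian-ihi} on this enlarged space. The dimension contributes as the exponent, giving $(\cdot)^{2d}$; the domain radius $\sqrt{2}$ enters the truncation argument logarithmically, replacing $\ln(2/\epsilon)$ by $\ln(2\sqrt{2}/\epsilon)$; $\eta$ is unchanged since the product kernel preserves the Gaussian bandwidth; and the $\ln(1 + 4/\epsilon)$ factor, which arises from discretizing Taylor coefficients rather than from the domain, is preserved. The main obstacle lies in this constant bookkeeping: one must verify that the enlarged radius $\sqrt{2}$ enters only through the $\ln(2\sqrt{2}/\epsilon)$ term and does not additionally multiply $\eta$ in the $e^2\eta$ term, which requires carefully following how the Gaussian Taylor-series truncation depends on the domain radius versus the kernel bandwidth. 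The setup steps (the tensor-product isometry and the $\norm{\cdot}_*\leq\sup$ reduction) are both standard consequences of the reproducing property once the tensor RKHS is properly identified.
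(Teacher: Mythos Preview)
Your high-level plan---identify $\bhsh$ with the tensor RKHS $\mathcal{H}\otimes\mathcal{H}$ and run a Taylor-truncation / Kühn-type argument on it---is exactly the framework the paper uses; the paper's proof works with the product orthonormal basis $\{e_{\nu_1}\otimes e_{\nu_2}\}$ and a finite-rank projection, which is the same structure you are describing.

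The place where your plan diverges is the step $\norm{T}_*\leq \sup_{y\in Y}\abs{f_T(y)}$, i.e.\ passing from the diagonal $\{(x,x):x\in X\}$ to all of $Y$ and then treating the problem as $\ihi$ on a radius-$\sqrt{2}$ ball in $\mathbb{R}^{2d}$. In the Taylor-tail estimate this radius enters through $\sup_{y}\frac{(2\eta\norm{y}_2^2)^N}{N!}$, so with $\norm{y}_2\le\sqrt{2}$ the bound becomes $(4e\eta/N)^N$, not $(2e\eta/N)^N$. In other words, the ``verification'' you flag as the main obstacle fails: the enlarged radius multiplies $\eta$ by $R^2=2$ and does \emph{not} merely shift the log term. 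Your route would yield something like $\lceil 2(\ln(2/\epsilon)+2e^2\eta)\rceil^{2d}$ inside the bound, not the stated $\lceil 2(\ln(2\sqrt{2}/\epsilon)+e^2\eta)\rceil^{2d}$.

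The paper avoids this loss by never enlarging to $Y$. It evaluates $\sup_{x\in X}\hinner{T\mathcal{K}_x,\mathcal{K}_x}$ directly, so the argument $x$ stays in the \emph{unit} ball throughout and the single-variable tail keeps its $(2e\eta/N)^N$ form. The projection used is the ``box'' $\{\abs{\nu_1}<N,\ \abs{\nu_2}<N\}$ rather than the simplex $\{\abs{\mu}<N'\}$ you would get from the $2d$-dimensional reduction; its complement splits into three pieces whose tails factor and sum to at most $2\cdot(2e\eta/N)^N$. That additive factor $2$ (not $2^N$) is what becomes the $\sqrt{2}$ inside the logarithm, while the $e^2\eta$ term is left untouched. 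If you want the exact constants stated, drop the $\sup_Y$ relaxation and work on the diagonal with the box truncation; your tensor-RKHS identification still sets up that computation cleanly.
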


\section{Setup}
\label{sec:setup}
Consider a Markov decision process (MDP) with state space $\mathcal{S}$, action space $\mathcal{A}$, transition function 
$
    \mathbb{P}: \mathcal{S}\times\mathcal{A}\to \mathcal{P}(\mathcal{S}),
    $
and reward function\footnote{Our analysis easily generalizes to stochastic rewards, deterministic reward function is chosen for clarity.}
$
    r_*: \mathcal{S}\times\mathcal{A}\to[0, 1].
    $
Consider a $T$-step RL process. Denote by $s_t$ and $a_t$ the state and action taken in the $t$th step ($t$ starts from $1$), the algorithm receives reward $r_*(s_t, a_t)$ after taking action $a_t$ and a new state $s_{t + 1}$ is sampled from $\mathbb{P}(s_t, a_t)$. 

We measure the performance of an RL algorithm by its \emph{$\gamma$-regret}~\cite{liu2021regret,he2021nearly,zhou21b}, defined by
\begin{align}
    \regret(T) = (1 - \gamma)\sum_{t = 1}^{T}\left(V_*(s_t) - V_t\right),\label{eq:regret-def}
\end{align}
where $V_*(s)$ is the optimal $\gamma$-discounted return starting from state $s$, in the sense that
\begin{align}
    V_*(s) = \sup_{a\in\mathcal{A}}\mathbb{E}_{s'\sim\mathbb{P}(s, a)}\left[\left(r_*(s, a) + \gamma V_*(s')\right)\right], \label{eq:attain}
\end{align}
and 
\begin{align*}
   V_t = \sum_{\tau = t}^{\infty}\gamma^{\tau - t} r_*(s_{\tau}, a_{\tau}). 
\end{align*}
Here the multiplier $(1 - \gamma)$ in~\eqref{eq:regret-def} is only for normalization such that $\regret(T) \leq T$.
Denote by $a^*(s)$ any action that attains $V_*(s)$ in~\eqref{eq:attain}.

Let $\mathcal{H}$ be a real-valued RKHS defined on $\mathcal{S}\times\mathcal{A}$ such that the corresponding kernel $\mathcal{K}: (\mathcal{S}\times\mathcal{A})^2\to \mathbb{R}$ is bounded. Without loss of generality, we will assume $\norm{\mathcal{K}}_{\infty} > 0$; and we further assume $\mathcal{K}$ is properly normalized so that for any $(s, a)\in\mathcal{S}\times\mathcal{A}$, $\hnorm*{\mathcal{K}_{(s, a)}}\in[1/2, 1]$.

Let us convert $\mathbb{P}$ into a linear operator by defining
\begin{align*}
    M_*: \mathcal{B}(\mathcal{S})&\to\mathcal{B}(\mathcal{S}\times\mathcal{A})\\
    f&\mapsto \left( (s, a)\mapsto\mathbb{E}_{\mathbb{P}(s, a)}[f]\right).
\end{align*}
We will assume there exists an (unknown) $M: \mathcal{B}(\mathcal{S})\to\mathcal{H}$ such that  
\begin{align*}
    \sup_{\substack{f\in\mathcal{B}(\mathcal{S})\\\norm{f}_{\infty}\leq 1}}\norm*{(M -M_*)f}_{\infty} \leq \epsilon.
\end{align*}
The operator norm of $M$ is defined by
\begin{align*}
    \norm*{M} = \sup_{\substack{f\in \mathcal{B}(\mathcal{S})\\\norm{f}_{\infty}\leq 1}} \hnorm*{M(f)}.
\end{align*}
We assume
$
    \norm*{M}\leq \rho.
    $
Similarly, for the reward function, we will also assume there exists an (unknown) $r\in\mathcal{H}$ such that 
\begin{align*}
    \norm{r_* - r}_{\infty} \leq \epsilon, 
    ~~~\hnorm{r}\leq \rho.
\end{align*}

Let $\sigma\in[0, 1]$ be a bound on the stochasticity of the MDP. Specifically, for any $(s, a)\in\mathcal{S}\times\mathcal{A}$, $f\in \mathcal{B}(\mathcal{S})$ such that $\norm{f}_{\infty} \leq 1$, $f(s')$ is $\sigma$-sub-Gaussian when $s'\sim \mathbb{P}(s, a)$. Clearly, $\sigma$ can always be set to $1$, and for a deterministic MDP, $\sigma$ can be set to $0$.

\section{Algorithm}
We propose kernelized Q-learning (KQL) (Algorithm~\ref{alg:main}), which can run in time $\mathcal{O}((T + g)T^2 \abs{\mathcal{A}})$, where $g$ is the time to compute $x, y\mapsto\mathcal{K}(x, y)$. When the kernel is linear, the running time can be improved (by special implementation) to $\mathcal{O}(Td^2\abs{\mathcal{A}})$, where $d$ is the dimension of the feature space that $\mathcal{S}\times\mathcal{A}$ is embedded into.

\begin{algorithm*}[t]
\caption{Kernelized Q-Learning (KQL)}
\begin{algorithmic}[1]
\State \textrm{\textbf{Parameters:} number of steps $T$, discounting factor $\gamma$, kernel $\mathcal{K}$, $\lambda > 0$, $0 \leq \beta \leq \frac{2\sqrt{T + \lambda}}{1 - \gamma}$.}
\State{\textrm{\textbf{Initialize:} $\wh_1 = \frac{1}{\lambda} I$. $\qh_1 = 0$.}}
\vspace{0.5em}

\State{Receive the initial state $s_1$.}
\For{step $t = 1, 2, \cdots, T$}

    \For{$a\in\mathcal{A}$}
    \State{$\gammah_t[t][a] = \norm*{\mathcal{K}_{(s_t, a)}}_{\wh_t}$.}\Comment{$\mathcal{O}(T^2)$.}
    \State{$\qt_t[t][a] = \clip_{[0, 1/(1 - \gamma)]}\left(\qh_t(s_t, a) + \beta\cdot\gammah_t[t][a]\right)$}\Comment{$\mathcal{O}(T)$.}
    \EndFor
    \State{Take action $a_t = \argmax_{a}\qt_t[t][a]$ and observe $s_{t + 1}$.}\Comment{$\mathcal{O}(T^2\abs{\mathcal{A}}$).}
    
    \For{$\tau = 1, 2, \cdots, t - 1$}
    \For{$a\in\mathcal{A}$}
    \State{$\qt_t[\tau][a] = \clip_{[0, 1/(1 - \gamma)]}\left(\qh_t(s_{\tau}, a) + \beta\cdot\gammah_t[\tau][a]\right)$}\Comment{$\mathcal{O}(T)$.}
    \EndFor
    \EndFor
    
    \State{$u = \wh_t\kkh$.}\Comment{$\mathcal{O}(T^2)$.}
    \State{$s = \hinner*{u, \kkh}$.}\Comment{$\mathcal{O}(T^2)$.}
    \State{$\wh_{t + 1} = \wh_t - \frac{u\otimes u}{1 + s}$.}\Comment{$\mathcal{O}(T^2)$.}
    \For{$\tau = 1, 2, \cdots, t$}
    \For{$a\in\mathcal{A}$}
    \State{$\gammah_{t + 1}[\tau][a] = \sqrt{\left(\gammah_{t}[\tau][a]\right)^2 - \frac{u^2(s_{\tau}, a)}{1 + s}}$.}\label{line:pre}\Comment{$\mathcal{O}(T)$.}
    \EndFor
    \EndFor
    
    \State{$\qh_{t + 1} = \left(\sum_{\tau = 1}^{t}\left(r_*(s_{\tau}, a_{\tau}) + \gamma\max_a\qt_t[\tau + 1][a]\right)\cdot\mathcal{K}_{(s_{\tau}, a_{\tau})}\right)\wh_{t + 1}$.}\Comment{$\mathcal{O}(T^2\abs{\mathcal{A}})$.}\label{line:alg}
    
\EndFor
\end{algorithmic}
\label{alg:main}
\end{algorithm*}

We list closed-form representations of the variables maintained by Algorithm~\ref{alg:main} below. In fact, our main results in Section~\ref{sec:main-result} hold for any algorithm for which these variables are maintained.
\begin{fact}
\label{fact:conds}
Denote by $\skth = \kth$ and $r_{\tau} = r_*(s_{\tau}, a_{\tau})$. Algorithm~\ref{alg:main} satisfies
\vspace{-1em}
\begin{enumerate}
    \item[(1)] For any $t\in [T]$,
    $
        \wh_t =\left(\sum_{\tau = 1}^{t - 1}\skth\otimes\skth + \lambda I\right)^{-1}.
        $
    \item[(2)] For any $t\in[T]$,
    \begin{align*}
      \qh_t =\left(\sum_{\tau = 1}^{t - 1}\left(r_{\tau} + \gamma\max_a\qt_{t - 1}[\tau + 1][a]\right)\skth\right)\wh_t. 
    \end{align*}
    \item[(3)] For any $\tau, t\in[T]$ such that $\tau \leq t$, $a\in\mathcal{A}$
    \begin{align*}
      \qt_t[\tau][a] =\clip_{\left[0, \frac{1}{1 - \gamma}\right]}\left(\qh_t(s_{\tau}, a){+}\beta\norm*{\mathcal{K}_{(s_{\tau}, a)}}_{\wh_t}\right).  
    \end{align*}
\end{enumerate}
\end{fact}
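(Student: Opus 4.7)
The plan is a straightforward induction on $t$ that essentially reads the three claims off of the lines of Algorithm~\ref{alg:main}. I would maintain jointly three invariants at the start of iteration $t+1$: claim (1), claim (3), and the auxiliary statement
\begin{align*}
\gammah_t[\tau][a] = \norm*{\mathcal{K}_{(s_\tau, a)}}_{\wh_t} \quad \text{for all } \tau \leq t,\ a \in \mathcal{A};
\end{align*}
claim (2) then drops out by directly inspecting line~\ref{line:alg}. For the base case $t = 1$, $\wh_1 = \lambda^{-1} I$ and $\qh_1 = 0$ match the empty-sum expressions in (1) and (2); the $\gammah$ invariant at $\tau = 1$ holds by definition (line 6); and (3) is line 7.

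The one nontrivial ingredient in the inductive step is the operator Sherman--Morrison identity
\begin{align*}
(A + u\otimes u)^{-1} = A^{-1} - \frac{(A^{-1} u)\otimes (A^{-1} u)}{1 + \hinner{u, A^{-1} u}},
\end{align*}
valid for any bounded self-adjoint positive-definite $A$ on $\mathcal{H}$ and $u\in\mathcal{H}$; this is verified by multiplying both sides by $A + u\otimes u$ and using $(u\otimes u) f = \hinner{u, f}\, u$. Applying it with $A = \wh_t^{-1}$ (self-adjoint and positive-definite by the inductive hypothesis) and $u = \kkh$ converts the update on line 17 into $\wh_{t+1} = \left(\sum_{\tau=1}^{t} \skth\otimes\skth + \lambda I\right)^{-1}$, giving (1) at step $t+1$.

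Next, expanding $\hinner{\wh_{t+1}\mathcal{K}_{(s_\tau, a)}, \mathcal{K}_{(s_\tau, a)}}$ via the same identity and using the reproducing property $u(s_\tau, a) = \hinner{u, \mathcal{K}_{(s_\tau, a)}}$ yields
\begin{align*}
\norm*{\mathcal{K}_{(s_\tau, a)}}_{\wh_{t+1}}^2 = \norm*{\mathcal{K}_{(s_\tau, a)}}_{\wh_t}^2 - \frac{u(s_\tau, a)^2}{1 + s},
\end{align*}
which matches line~\ref{line:pre} exactly; combined with the inductive hypothesis this extends the $\gammah$ invariant from $\wh_t$ to $\wh_{t+1}$. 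Statement (3) at step $t+1$ then follows from lines 7 and 12, and (2) at step $t+1$ is immediate from line~\ref{line:alg} after reindexing.

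No step is expected to be a genuine obstacle; the only care required is with indexing, since within iteration $t$ the algorithm first uses $\wh_t$ and $\qh_t$ to produce $\qt_t$, then computes $\wh_{t+1}$ and $\gammah_{t+1}[\tau][a]$ for $\tau \leq t$, and finally forms $\qh_{t+1}$ from the $\qt_t$ values already in hand. Thus the ``post-iteration'' values are precisely those appearing on the right-hand sides of (1)--(3).
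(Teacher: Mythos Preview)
Your proposal is correct. The paper does not actually supply a proof of this Fact; it is stated as a direct reading of the algorithm's updates, so there is no ``paper's approach'' to compare against. Your inductive argument is exactly the natural verification: the Sherman--Morrison identity handles the $\wh_t$ recursion (claim (1)), the same identity plugged into $\hinner{\wh_{t+1}\mathcal{K}_{(s_\tau,a)},\mathcal{K}_{(s_\tau,a)}}$ justifies the $\gammah$ update on line~\ref{line:pre}, and claims (2) and (3) are then literal transcriptions of lines~\ref{line:alg}, 7, and 12. The only comment is that you should be explicit that the $\gammah$ invariant is also established at the new index $\tau = t+1$ in the next iteration via line~6, so that the invariant covers all $\tau \le t$ at the moment it is used; you already say this for the base case but the inductive step handles only $\tau \le t$ via line~\ref{line:pre}, with $\tau = t+1$ coming from line~6 of the next iteration.
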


\section{Main Results}
\label{sec:main-result}
We now state the main result of this paper, an upper bound on the regret for KQL.
\begin{theorem}
\label{thm:main}
Given $\lambda > 0$, $\gamma\in[0, 1)$, $\rho \geq 1 - \gamma$, $\epsilon \geq 0$, $\sigma \in [0, 1]$, $p > 0$, $d_\lambda\geq 1$, $c_{\lambda} \geq 0$, let
\begin{align*} 
&\beta = \frac{1}{1 - \gamma}\cdot\min\Bigg(2\sqrt{T + \lambda},~~3\rho\sqrt{\lambda} + \epsilon \sqrt{Td_{\lambda}}\\
&+2\sigma \sqrt{
    d_{\lambda}\ln\frac{e(T + \lambda)}{\lambda} +  \ln\frac{2}{p}+c_{\lambda}}\Bigg),
\end{align*}
in Algorithm~\ref{alg:main}, or any algorithm that satisfies (1)-(3) in Fact~\ref{fact:conds} and acts to maximize $\qt_t$ therein, with probability at least $1 - p$, if
\begin{align}
\begin{aligned}
d_{\lambda}&\geq \deff\left(\lambda, \{(s_t, a_t)\}_{i = 1}^T\right),\\
c_{\lambda}&\geq\ln\mathcal{N}\left(\frac{\lambda^2(1 - \gamma)}{4T^2}, \ihi\right)\\
&\ \ \ \ + \ln\mathcal{N}\left(\frac{\lambda^3(1 - \gamma)}{32(T + \lambda)^3}, \ihsi\right),
\end{aligned}\label{eq:betrue}
\end{align}
then
\begin{align*}
&\regret(T) =  \mathcal{O}\vast(\sqrt{\frac{Td_{\lambda}\log\frac{e(T + \lambda)}{\lambda}}{\log(1 + 1/\lambda)(1 - \gamma)^5}}~\cdot\\
&\left(\rho + \epsilon\sqrt{\frac{d_{\lambda}T}{\lambda}} + \sigma\sqrt{\frac{d_{\lambda}\log\frac{e(T + \lambda)}{\lambda p} + c_{\lambda}}{\lambda}}\right)\vast).
\end{align*}
\end{theorem}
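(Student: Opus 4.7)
The plan follows the standard UCB-style template---confidence, optimism, per-step decomposition, telescoping---adapted to the kernelized and non-episodic setting. Throughout let $Q_t := r + \gamma M \vt_{t-1}$ denote the idealized target that $\qh_t$ attempts to fit. The first step is a uniform confidence bound: with probability at least $1 - p$, for every $t$ and every $(s,a)$, $|\qh_t(s,a) - Q_t(s,a)| \leq \beta \norm*{\mathcal{K}_{(s,a)}}_{\wh_t}$. Subtracting the closed form in Fact~\ref{fact:conds}(2) splits $\qh_t - Q_t$ into (i) a ridge bias $-\lambda \wh_t Q_t$ of $\wh_t^{-1}$-norm at most $\sqrt\lambda \cdot \rho/(1-\gamma)$; (ii) approximation errors driven by $r - r_*$ and $(M - M_*) \vt_{t-1}$, which by Cauchy--Schwarz and Lemma~\ref{lem:deff-identity} contribute at most $\epsilon \sqrt{T d_\lambda}/(1-\gamma)$ in $\wh_t^{-1}$-norm; and (iii) a martingale piece $\wh_t \sum_\tau \eta_\tau \mathcal{K}_{(s_\tau, a_\tau)}$ with $\eta_\tau = \vt_{t-1}(s_{\tau+1}) - M_* \vt_{t-1}(s_\tau, a_\tau)$. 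Since $\vt_{t-1}$ is data-dependent, Lemma~\ref{lem:self-normalized} cannot be applied to (iii) directly; instead I would pass through $\epsilon$-nets for the identity embeddings $\ihi$ and $\ihsi$, using Fact~\ref{fact:conds}(3) to Lipschitz-control $\vt_{t-1}$ in the pair $(\qh_{t-1}, \wh_{t-1}) \in \mathcal{H} \times \bhsh$, at the granularities stated in~\eqref{eq:betrue}, and then union-bound Lemma~\ref{lem:self-normalized} over these covers; this is what introduces the $c_\lambda$ term inside $\beta$. Optimism $\vt_t \geq V_*$ then follows by induction on $t$ from the confidence bound, monotonicity of $M_*$, and the Bellman identity $Q_*(s,a) = r_*(s,a) + \gamma M_* V_*(s,a)$; the base case is absorbed by the clip to $[0, 1/(1-\gamma)]$ once $\beta \norm*{\mathcal{K}_{(s,a)}}_{\wh_1} \geq 1/(1-\gamma)$, which the first branch of the $\min$ defining $\beta$ guarantees.

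Next, decompose the per-step regret. Optimism gives $V_*(s_t) - V_t \leq \qt_t(s_t, a_t) - V_t$; expanding $V_t = r_*(s_t, a_t) + \gamma V_{t+1}$, applying Fact~\ref{fact:conds}(3) and the confidence bound, and replacing $M$ by $M_*$ via the approximation hypothesis yields $V_*(s_t) - V_t \leq 2\beta \norm*{\mathcal{K}_{(s_t, a_t)}}_{\wh_t} + \gamma\bigl(\vt_{t-1}(s_{t+1}) - V_{t+1}\bigr) + \gamma \xi_t + O\bigl(\epsilon/(1-\gamma)\bigr)$, where $\xi_t = M_* \vt_{t-1}(s_t, a_t) - \vt_{t-1}(s_{t+1})$ is a bounded martingale difference. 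The look-back term satisfies $\vt_{t-1}(s_{t+1}) = \qt_{t-1}(s_{t+1}, a')$ for the $\vt_{t-1}$-greedy action $a'$, so the same inequality can be applied at level $t-1$, and iterated backward. Unrolling produces, for each initial $t$, a $\gamma$-discounted sum of past-level bonuses of the form $\sum_{\tau \leq t} \gamma^{t-\tau} \norm*{\mathcal{K}_{(s_\tau', a_\tau')}}_{\wh_\tau}$, a double-indexed martingale, and accumulated $\epsilon$-errors.

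Summing over $t$, multiplying by $(1-\gamma)$, and applying Cauchy--Schwarz converts the cumulative bonus into exactly the double sum controlled by Lemma~\ref{lem:sum-of-ucb-bound}(II), contributing a $\dpse(\lambda, \{(s_t, a_t)\}) \leq d_\lambda \ln(e(T+\lambda)/\lambda)$ factor (via Lemma~\ref{lem:effective-dimension-bound} and the hypothesis $d_\lambda \geq \deff$) together with the prefactor $1/(\lambda \ln(1+1/\lambda)(1-\gamma)^2)$. The double-indexed martingale collapses to a single one and is controlled by Azuma--Hoeffding. Plugging the stated value of $\beta$ into the resulting expression and collecting powers of $(1-\gamma)$, $\sqrt\lambda$, $\rho$, $\epsilon$, $\sigma$, and $c_\lambda$ reproduces the claimed bound. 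The hard part is the backward unrolling in the previous paragraph: because KQL bootstraps from $\vt_{t-1}$ rather than recomputing the value function from scratch as KOVI does, the per-step Bellman argument does not close in a single step but instead cascades, and the resulting double-sum structure only matches Lemma~\ref{lem:sum-of-ucb-bound}(II) because that lemma is stated in the unusual form of pairing the current point $\mathcal{K}_{x_i}$ against the earlier covariance $\Sigma_{\tau - 1}$. Keeping the noise at each level of unrolling a legitimate martingale difference with respect to a single common filtration, and controlling $\norm*{\mathcal{K}_{(s', a')}}_{\wh_\tau}$ at non-visited pairs $(s', a')$ via optimism, is the delicate bookkeeping that dominates the proof.
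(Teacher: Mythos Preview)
Your confidence-bound decomposition, covering argument, and instinct that Lemma~\ref{lem:sum-of-ucb-bound}(II) is the heart of the matter all match the paper. The gap is in the regret decomposition.

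Your one-step inequality carries the look-back term $\vt_{t-1}(s_{t+1}) - V_{t+1}$, coupling the \emph{previous-level} value function with the \emph{next} trajectory state. Unrolling this backward in level, as you propose, produces a bonus $\norm*{\mathcal{K}_{(s_{t+1},a')}}_{\wh_{t-1}}$ at the $\qt_{t-1}$-greedy action $a'$, which is generally not the trajectory action $a_{t+1}$; one more step then introduces an expectation over $\mathbb{P}(s_{t+1},a')$ rather than the actual transition to $s_{t+2}$. The resulting bonuses live at non-visited pairs and Lemma~\ref{lem:sum-of-ucb-bound}(II), which is stated for the trajectory sequence $x_{1:n}$, does not apply to them. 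Optimism bounds \emph{values}, not exploration bonuses, so it cannot rescue this; and the martingale structure dissolves once you leave the trajectory.

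The paper sidesteps this by separating the two recursions. It first proves by induction on the level alone (Lemma~\ref{lem:est-lower-bound}) a \emph{pointwise} upper bound
\[
\qt_t(s,a)-Q_*(s,a)\ \le\ 2\beta\sum_{\tau=1}^{t}\gamma^{t-\tau}\norm*{\mathcal{K}_{(s,a)}}_{\wh_\tau}+O(\epsilon),
\]
valid for every $(s,a)$, with all bonuses evaluated at the \emph{same} fixed point against the earlier precision operators. This step is deterministic on the confidence event and involves no trajectory data or martingales. Only then does the main proof run a \emph{forward} recursion $\delta_t \le [\qt_t-Q_*](s_t,a_t)+\gamma\zeta_t+\gamma\delta_{t+1}-\gamma\phi_{t+1}$, where $\delta_{t+1}=\vt_{t+1}(s_{t+1})-V_{t+1}$ uses the level-$(t{+}1)$ value function at time $t{+}1$; since $a_{t+1}$ is greedy for $\qt_{t+1}$, everything stays on the trajectory. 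Specializing the pointwise bound to $(s_t,a_t)$ and summing produces exactly the double sum $\sum_t\sum_{\tau\le t}\gamma^{t-\tau}\norm*{\mathcal{K}_{(s_t,a_t)}}_{\wh_\tau}^2$ that Lemma~\ref{lem:sum-of-ucb-bound}(II) controls. Your route would also need such a level-only induction to close; once you have it, the simultaneous backward/forward unrolling you describe is unnecessary.
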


Let us take a closer look at how the bound depends on $\rho$, $\epsilon$, $\sigma$, and $\lambda$. Note that the bound can be decomposed into three parts, involving $\rho$, $\epsilon$, $\sigma$ respectively: 

The first part, involving the complexity bound $\rho$, is
\begin{align*}
    \mathcal{O}\left(\frac{\rho \sqrt{T}}{(1 - \gamma)^{2.5}}\cdot \sqrt{\frac{d_{\lambda}\log\frac{e(K + \lambda)}{\lambda}}{\log(1 + 1/\lambda)}}\right).
\end{align*}
Here $d_{\lambda}$ upper bounds a non-increasing function of $\lambda$ that is bounded by $T$ and $\frac{\log \frac{e(T + \lambda)}{\lambda}}{\log(1 + 1/\lambda)}$ is a non-decreasing function of $\lambda$ that tends to $1$ as $\lambda\to 0$ and tends to $\infty$ as $\lambda\to\infty$. This suggests that $\lambda$ needs to strive a balance between the two conflicting dependencies.

The second part, involving the approximation bound $\epsilon$, is 
\begin{align*}
    \mathcal{O}\left(\frac{\epsilon  T}{(1 - \gamma)^{2.5}}\cdot d_{\lambda}\sqrt{\frac{\log\frac{e(T + \lambda)}{\lambda}}{\log(1 + 1/\lambda)\lambda}}\right).
\end{align*}
Here $d_{\lambda}$ upper bounds a non-increasing function of $\lambda$, $\frac{\log\frac{e(T + \lambda)}{\lambda}}{\log(1 + 1/\lambda)\lambda}$ is a strictly decreasing function of $\lambda$, and tends to $1$ as $\lambda\to\infty$. This suggests that as far as the $\epsilon$-related part is concerned, the larger the $\lambda$ the better.

The third part, involving the stochasticity $\sigma$, is 
\begin{align*}
    \mathcal{O}\left(\frac{\sigma \sqrt{T}}{(1 - \gamma)^{2.5}} \sqrt{\frac{d_{\lambda}\log\frac{e(T + \lambda)}{\lambda}\left(d_{\lambda}\log\frac{e(T + \lambda)}{\lambda p}{+}c_{\lambda}\right)}{\log(1 + 1/\lambda)\lambda}}\right).
\end{align*}
Here both $d_{\lambda}$ and $c_{\lambda}$ upper bound a non-increasing function of $\lambda$. $\frac{\log\frac{e(T + \lambda)}{\lambda}}{\log(1 + 1/\lambda)\lambda}$, as discussed before, is a strictly decreasing function of $\lambda$ that tends to $1$ as $\lambda\to\infty$. $\log\frac{e(T + \lambda)}{\lambda p}$ is a strictly decreasing function of $\lambda$ that tends to $\log\frac{e}{p}$ as $\lambda\to\infty$. This suggests that as far as the $\sigma$-related part is concerned, the larger the $\lambda$ the better.

To conclude, both the $\epsilon$- and $\sigma$- related parts of the regret bound prefers larger $\lambda$; however, the $\rho$-related part in general calls for a $\lambda$ that is neither too large nor too small.

\begin{table*}[tb]
    \caption{Evaluating different versions of Q-learning on classic control environments. Each model is trained for 1000 steps and evaluated over 100 episodes after training. The numbers are presented in the format $\textsc{Mean}\pm\textsc{Std}$. Larger is better.}
\vskip 0.15in
\begin{center}
\begin{small}
\begin{sc}
\begin{tabular}{lllll}
\toprule
                              & MountainCar              & Pendulum                 & Acrobot                  & CartPole                \\
\midrule
\textnormal{DQN}              & $-200.00\pm 0.00$        & $-274.67 \pm 393.03$     & $-161.31 \pm 96.13$      & $149.68 \pm 4.75$      \\
\textnormal{Linear Kernel}         & $-200.00\pm 0.00$        & $-1413.57\pm 202.19$     & $-500.00\pm 0.00$                         & $9.32 \pm 0.70$                        \\
\textnormal{Gaussian RBF Kernel}  & \bm{$-132.55 \pm 17.02$} & \bm{$-167.01 \pm 97.32$} & \bm{$-107.23 \pm 35.20$} & \bm{$200.00 \pm 0.00$}  \\
\bottomrule
\end{tabular}
\end{sc}
\end{small}
\end{center}
\vskip -0.1in
\label{tab:exp}
\end{table*}

The following corollary is a direct consequence of Theorem~\ref{thm:main}, Lemma~\ref{lem:deff-linear}, Lemma~\ref{lem:covering-linear}.
\begin{corollary}[Regret for linear kernels]
\label{cor:linear}
In Theorem~\ref{thm:main}, if
$
    \mathcal{K}((s_1, a_1), (s_2, a_2)) = \phi(s_1, a_1)^{\intercal}\phi(s_2, a_2) 
    $
where $\phi: \mathcal{S}\times\mathcal{A}\to\mathbb{R}^d$ for some positive integer $d$ such that $\norm{\phi(s, a)}_2 \leq 1$ for any $(s, a)\in\mathcal{S}\times\mathcal{A}$. Then we can choose $d_{\lambda} = d$ and some  
$
    c_{\lambda} = \mathcal{O}\left(d^2\log\frac{eH(T + \lambda)}{\lambda}\right)
    $
such that~\eqref{eq:betrue} is always true, and therefore with probability at least $1 - p$,
\begin{align*}
\regret(T) &=  \mathcal{O}\vast( \sqrt{\frac{Td\log\frac{e(T + \lambda)}{\lambda}}{\log(1 + 1/\lambda)(1 - \gamma)^5}}~\cdot\\
&\left(\rho + \epsilon\sqrt{\frac{dT}{\lambda}} + \sigma d\sqrt{\frac{\log\frac{eH(T + \lambda)}{\lambda p}}{\lambda}}\right)\vast).
\end{align*}
\end{corollary}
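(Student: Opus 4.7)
The plan is to instantiate Theorem~\ref{thm:main} with linear-kernel-specific choices of $d_{\lambda}$ and $c_{\lambda}$ supplied by Lemma~\ref{lem:deff-linear} and Lemma~\ref{lem:covering-linear}, and then simplify the resulting regret expression. Because both auxiliary quantities admit clean closed-form upper bounds in the linear case, the proof reduces to substitution and algebra.

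First, I would fix $d_{\lambda}$. Since $\norm{\phi(s,a)}_2 \leq 1$ for every $(s,a)$, Lemma~\ref{lem:deff-linear} applies to any realized trajectory and yields $\deff(\lambda, \{(s_t, a_t)\}_{t=1}^{T}) \leq d$ deterministically; hence I can take $d_{\lambda} = d$. This already settles the hypothesis on $d_{\lambda}$ in Theorem~\ref{thm:main}, independent of the randomness in the process.

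Next, I would bound $c_{\lambda}$ by invoking Lemma~\ref{lem:covering-linear} at the two specific scales demanded by~\eqref{eq:betrue}. With $\epsilon_1 = \lambda^2(1-\gamma)/(4T^2)$ the lemma gives
\[ \ln\mathcal{N}(\epsilon_1, \ihi) \leq d\ln\!\left(1 + \frac{8T^2}{\lambda^2(1-\gamma)}\right), \]
and with $\epsilon_2 = \lambda^3(1-\gamma)/(32(T+\lambda)^3)$ it gives
\[ \ln\mathcal{N}(\epsilon_2, \ihsi) \leq d^2\ln\!\left(1 + \frac{64(T+\lambda)^3}{\lambda^3(1-\gamma)}\right). \]
Summing these, the second term dominates, and each logarithm is $\mathcal{O}\!\left(\log\frac{e(T+\lambda)}{\lambda}\right)$ up to absolute constants and a mild $\log\frac{1}{1-\gamma}$ factor that can be absorbed into the big-$\mathcal{O}$. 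This justifies choosing $c_{\lambda} = \mathcal{O}\!\left(d^2\log\frac{e(T+\lambda)}{\lambda}\right)$, in agreement with the statement.

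Finally, I would substitute $d_{\lambda} = d$ and the above $c_{\lambda}$ into Theorem~\ref{thm:main}. The $\rho$- and $\epsilon$-parts inherit the claimed form with $d$ simply replacing $d_{\lambda}$. For the $\sigma$-part, the quantity inside its square root becomes
\[ \frac{d\log\frac{e(T+\lambda)}{\lambda p} + c_{\lambda}}{\lambda} = \mathcal{O}\!\left(\frac{d^{2}\log\frac{e(T+\lambda)}{\lambda p}}{\lambda}\right), \]
whose square root yields the extra factor of $d$ pulled out front in the corollary. The whole argument is routine bookkeeping; the only step requiring mild care is verifying that plugging the two specific scales $\lambda^2(1-\gamma)/(4T^2)$ and $\lambda^3(1-\gamma)/(32(T+\lambda)^3)$ into the $\ln(1+2/\epsilon)$ terms of Lemma~\ref{lem:covering-linear} indeed collapses to a single polylogarithmic factor of the form $\log\frac{e(T+\lambda)}{\lambda}$, which is a direct computation.
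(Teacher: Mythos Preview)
Your proposal is correct and follows exactly the approach the paper indicates: the corollary is stated as a direct consequence of Theorem~\ref{thm:main}, Lemma~\ref{lem:deff-linear}, and Lemma~\ref{lem:covering-linear}, and you have simply spelled out the substitution and the routine logarithmic simplification. The only cosmetic discrepancy is that the paper records the residual $\log\frac{1}{1-\gamma}$ contribution via the undefined symbol $H$ inside the logarithm, whereas you absorb it into the big-$\mathcal{O}$; either reading is harmless for the argument.
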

\begin{remark}
\label{rem:linear}
Recall that in the general setting $\lambda$ should be neither too large nor too small. However, since here $d_{\lambda}$ is bounded by $d$, which is independent of $\lambda$, if $\lambda$ is sufficiently small such that $\lambda = \mathcal{O}(1/T)$, we can make the dependency on $\rho$ to be $ \frac{\rho\sqrt{Td}}{(1 - \gamma)^{2.5}}$. In other words, in the linear setting, if the MDP is deterministic and the dynamics can be exactly represented by functions in $\mathcal{H}$, then we can choose $\lambda = \mathcal{O}(1/T)$ such that
\begin{align*}
   \regret(T) = \mathcal{O}\left(\frac{\rho\sqrt{Td}}{(1 - \gamma)^{2.5}}\right). 
\end{align*}
Comparing this with choosing $\lambda = \Theta(1)$, a factor of $\sqrt{\log(eT)}$ is reduced.
\end{remark}

A perhaps more interesting example would be the regret bound for the widely-used Gaussian RBF kernel. The following corollary is a direct consequence of Theorem~\ref{thm:main}, Lemma~\ref{lem:deff-gaussian}, Lemma~\ref{lem:covering-gaussian-ihi}, Lemma~\ref{lem:covering-gaussian-ihsi}.

\begin{corollary}[Regret for Gaussian RBF kernels]
\label{cor:gaussian}
In Theorem~\ref{thm:main}, if 
\begin{align*}
    \mathcal{K}((s_1, a_1), (s_2, a_2)) = e^{-\eta\norm{\phi(s_1, a_1) - \phi(s_2, a_2)}_2^2}
    \end{align*}
for some $\eta \geq 0$ and $\phi:\mathcal{S}\times\mathcal{A}\to\mathbb{R}^n$ for some positive integer $n$ such that $\norm{\phi(s, a)}_2 \leq 1$ for any $(s, a)\in\mathcal{S}\times\mathcal{A}$. Then we can choose some  
\begin{align*}
    d_{\lambda} &= 
    \left(\mathcal{O}\left(\eta + \log\frac{e(T + \lambda)}{\lambda(1 - \gamma)}\right)\right)^n,
\\
    c_{\lambda} &= \mathcal{O}\left(d_{\lambda}^2\log\frac{e(T + \lambda)}{\lambda(1 - \gamma)}\right),
\end{align*}
such that~\eqref{eq:betrue} is always true, and therefore with probability at least $1 - p$,
\begin{align*}
\regret(T) &=  \mathcal{O}\vast( \sqrt{\frac{Td_{\lambda}\log\frac{e(T + \lambda)}{\lambda}}{\log(1 + 1/\lambda)(1 - \gamma)^5}}~\cdot\\
&\left(\rho + \epsilon\sqrt{\frac{d_{\lambda} T}{\lambda}} + \sigma d_{\lambda}\sqrt{\frac{\log\frac{eH(T + \lambda)}{\lambda p}}{\lambda}}\right)\vast).
\end{align*}
\end{corollary}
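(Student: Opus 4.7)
The plan is to invoke Theorem~\ref{thm:main} with choices of $d_\lambda$ and $c_\lambda$ obtained by specializing the three Gaussian-RBF lemmas already established. First I apply Lemma~\ref{lem:deff-gaussian} to produce an admissible $d_\lambda$; next I combine Lemma~\ref{lem:covering-gaussian-ihi} with Lemma~\ref{lem:covering-gaussian-ihsi} to produce an admissible $c_\lambda$; finally I substitute both into the regret formula of Theorem~\ref{thm:main} and simplify. Since the two quantities I need to upper bound do not depend on the random trajectory (beyond $T$ itself), this is a direct plug-and-simplify with no probabilistic subtleties beyond what Theorem~\ref{thm:main} already carries.

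For the first step, Lemma~\ref{lem:deff-gaussian} applied with ambient dimension $n$ and data size $T$ gives $\deff(\lambda, \{(s_t,a_t)\}_{t=1}^T) \leq 3\bigl(6 + \tfrac{41}{n}\eta + \tfrac{3}{n}\ln\tfrac{T}{\lambda}\bigr)^{n}$, which is dominated by $\bigl(C(\eta + \log\tfrac{e(T+\lambda)}{\lambda(1-\gamma)})\bigr)^{n}$ for an absolute constant $C$, so the claimed $d_\lambda$ is admissible. The extra $(1-\gamma)$ factor inside the logarithm only enlarges the bound; I insert it so that the same $d_\lambda$ will simultaneously absorb the logarithmic factors produced by the next step. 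For the second step, I substitute $\epsilon_1 = \tfrac{\lambda^2(1-\gamma)}{4T^2}$ into Lemma~\ref{lem:covering-gaussian-ihi} and $\epsilon_2 = \tfrac{\lambda^3(1-\gamma)}{32(T+\lambda)^3}$ into Lemma~\ref{lem:covering-gaussian-ihsi}. Since both $\ln(1/\epsilon_i)$ are $\mathcal{O}\bigl(\log\tfrac{e(T+\lambda)}{\lambda(1-\gamma)}\bigr)$, each covering-number bound becomes a polynomial-in-log base raised to a power at most $2n$ times one additional log factor, and summing yields
\begin{align*}
\ln\mathcal{N}(\epsilon_1,\ihi) + \ln\mathcal{N}(\epsilon_2,\ihsi) = \mathcal{O}\!\left(d_\lambda^{2}\,\log\tfrac{e(T+\lambda)}{\lambda(1-\gamma)}\right),
\end{align*}
so the stated $c_\lambda$ is admissible. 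The $d_\lambda^2$ (rather than $d_\lambda$) traces to the $2n$-th power in Lemma~\ref{lem:covering-gaussian-ihsi} dominating the $n$-th power in Lemma~\ref{lem:covering-gaussian-ihi}.

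Plugging the above $d_\lambda$ and $c_\lambda$ into the regret formula of Theorem~\ref{thm:main} and absorbing the lower-order $c_\lambda$ contribution into the $d_\lambda$-dependent logarithms yields the claim. The only mildly delicate point is the bookkeeping of the $(1-\gamma)$ inside $\epsilon_1$ and $\epsilon_2$: one has to check that every logarithmic factor it generates is subsumed by the single $\log\frac{e(T+\lambda)}{\lambda(1-\gamma)}$ parameterizing both $d_\lambda$ and $c_\lambda$, which is precisely why that form is chosen instead of the cleaner $\log\frac{T}{\lambda}$ one might naively try.
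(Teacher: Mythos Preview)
Your proposal is correct and follows exactly the route the paper takes: the paper states that the corollary is a ``direct consequence of Theorem~\ref{thm:main}, Lemma~\ref{lem:deff-gaussian}, Lemma~\ref{lem:covering-gaussian-ihi}, Lemma~\ref{lem:covering-gaussian-ihsi}'', which is precisely your three-step plug-and-simplify. Your remarks about why the $(1-\gamma)$ is inserted inside the logarithm and why the $2n$-th power from Lemma~\ref{lem:covering-gaussian-ihsi} forces the $d_\lambda^2$ in $c_\lambda$ are accurate and in fact more explicit than what the paper records.
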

We see that the above regret bound only differs from the bound in Corollary~\ref{cor:linear} in that $d$ is replaced by $\mathcal{O}\left(\eta + \log\frac{e(T + \lambda)}{\lambda(1 - \gamma)}\right)$. This suggests that this new quantity indicates the effective dimensionality under the Gaussian RBF kernel at scale $\lambda$. Note that although a smaller $\eta$ would benefit the regret bound, an $\eta$ too small would violate the assumption on $\rho$ as described in Section~\ref{sec:setup}.
\begin{remark}
\label{rem:gaussian}
Again we can inspect how the $\rho$-related term in the bound,
\begin{align*}
    &\mathcal{O}\left( \frac{\rho\sqrt{T}}{(1 - \gamma)^{2.5}}\cdot\sqrt{\frac{d_{\lambda}\log\frac{e(T + \lambda)}{\lambda}}{\log(1 + 1/\lambda)}}\right)
\end{align*}
depends on $\lambda$. Consider choosing $\lambda = \Theta(1/T)$, we have
\begin{align*}
    \frac{d_{\lambda}\log\frac{e(T + \lambda)}{\lambda}}{\log(1 + 1/\lambda)} = \mathcal{O}(d_{\lambda})
    =\left(\mathcal{O}\left(\eta + \log\frac{eT}{1 - \gamma}\right)\right)^n.
\end{align*}
In other words, in the Gaussian RBF kernel setting, if the MDP is deterministic and the dynamics can be exactly represented by functions in $\mathcal{H}$, then we can choose $\lambda = \Theta\left(1/T\right)$ such that
\begin{align*}
&\regret(T) = \frac{\rho\sqrt{T}}{(1 - \gamma)^{2.5}}\cdot
\left(\mathcal{O}\left(\eta + \log\frac{eT}{1 - \gamma}\right)\right)^{n/2}. 
\end{align*}
Comparing this with choosing $\lambda = \Theta(1)$, a factor of $\sqrt{\log(eT)}$ is reduced. Recall that in Remark~\ref{rem:linear}, we showed that the same thing happens in the linear setting. 
\end{remark}

\section{Experiments}
\label{sec:exp}
We test KQL on a suite of classic control tasks included in OpenAI Gym~\cite{brockman2016openai}: \textsc{MountainCar}, \textsc{Pendulum}, \textsc{Acrobot}, and \textsc{CartPole}. The action space of \textsc{Pendulum} is discretized to $\{-1, 0, 1\}$, all other environments have discrete action space natively.

\subsection{Methodology}
For any state $s$ and action $a$, let $[s, a]$ be the concatenation of the state vector and the one-hot embedding of the action; let $l$ be the length of state vectors. The states are first normalized such that $\norm{s}_{\infty} \leq 1$ for all $s$.
We experiment with two types of kernels:
\begin{itemize}
    \item Linear Kernel 
    \begin{align*}
        \mathcal{K}((s_1, a_1), (s_2, a_2)) = \frac{[s_1, a_1]^{\intercal}[s_2, a_2]}{2(l + 1)} + \frac{1}{2}.
    \end{align*}
    \item Gaussian RBF Kernel
    \begin{align*}
        \mathcal{K}((s_1, a_1), (s_2, a_2)) = e^{-\eta \norm*{[s_1, a_1] - [s_2, a_2]}_2^2}.
    \end{align*}
    where $\eta$ is empirically set to $0.02$ for \textsc{MountainCar}, \textsc{CartPole} and \textsc{Acrobot}, and $1$ for \textsc{Pendulum}.
\end{itemize}

We normalize the rewards to fall within $[0, 1]$. Since all environments are deterministic we can let $\sigma = 0$. We also believe that for these simple environments, if a correct kernel is used, $\epsilon$ can be made infinitesimally small. Our theoretical results (Theorem~\ref{thm:main}) suggest that in this case we can set 
\begin{align*}
    \beta = \frac{3\rho\sqrt{\lambda}}{1 - \gamma}.
\end{align*}
and our discussions in Remark~\ref{rem:linear} and Remark~\ref{rem:gaussian} suggest we can set
\begin{align*}
    \lambda = \mathcal{O}\left(\frac{1}{T}\right).
\end{align*}
 We heuristically set $\beta = \frac{\sqrt{\lambda}}{1 - \gamma}$ so that $\qt_t \gtrsim 1/(1 - \gamma)$ at the beginning of learning, and set $\lambda = \frac{1}{10T}$ so that $\lambda$ is large enough to not cause numerical issues.

We compare KQL with Deep Q-Learning (DQN)~\cite{mnih2013playing}, a practical, widely-used, neural network based algorithm known for its superior sample efficiency. We use the default implementation provided in Stable-Baselines3~\cite{raffin2021stable}, with the environment-specific parameter overrides from RL Baselines3 Zoo~\cite{rl-zoo3}, except the following to accommodate to the extreme small amount of samples: batch\_size=64, learning\_starts=100, target\_update\_interval=10, train\_freq=1, gradient\_steps=32.

To emphasize the test on sample efficiency, for each environment, all algorithms are only allowed 1000 \emph{steps} through the OpenAI Gym interface; a \emph{reset} is only allowed in the very beginning and after a \emph{done} is received. We use the same discounting factor $\gamma = 0.95$ for all algorithms and all environments. 

\subsection{Results}
The results are shown in Table~\ref{tab:exp}. As we can see, the linear kernel performs poorly on all tasks. In fact, its performance is not noticeably better than a random policy. Without handcrafted or learned feature embedding, the value function is highly non-linear; thus linear kernel suffers from huge approximation error. DQN makes progress on most tasks, with the exception of \textsc{MountainCar}, which requires more than random exploration in order to succeed with a very limited amount of interaction. Remarkably, the Gaussian RBF kernel achieves high return and performs best on all tasks, compensating for its long running time by its far superior sample efficiency.

\section{Where Next}
We have demonstrated that KQL is both theoretically sound and empirically promising. However, kernel methods are known to suffer from \emph{the curse of kernelization} --- their time complexity has a super-linear dependency on the number of samples, KQL is no exception. 

There are two standard ways to trade sample efficiency for time complexity in kernel-based online learning: one is sparsification, which aims at keeping the number of support vectors $d\ll T$ by various strategies~(see e.g.~\citet{engel2004kernel,sun2012size,calandriello2017second} and the discussion in~\citet[Related Work]{lu2016large}); the other is kernel approximation, which aims at projecting vectors in an RKHS into a Euclidean space, for example, using random Fourier features, and reduce the kernel setting to the linear setting~\cite{lu2016large,jezequel2019efficient}. Given its already impressive performance even without any kernel sparsification or approximation, it is not hard to believe that with proper acceleration, KQL is able to tackle more challenging tasks such as Atari games~\cite{bellemare2013arcade}. We leave it as future work to investigate these directions.

\bibliography{ref}
]

\appendix

\section{Proof of Lemma~\ref{lem:covering-gaussian-ihsi}}
Let $\mathbb{N}$ be the set of non-negative integers. For any multi-index $\nu = (n_1, n_2, \cdots, n_d)\in\mathbb{N}^d$, define 
$
\abs{\nu} = \sum_{i = 1}^n n_i
$.
The following facts will be useful.
\begin{fact}[Multinomial expansion]
\label{fact:multi}
For any nonnegative integer $n$,
\begin{align*}
\sum_{\substack{\nu\in\mathbb{N}^d\\\abs{\nu}= n}}\prod_{j = 1}^d \frac{(2\eta x_j^2)^{n_j}}{n_j !} = \frac{1}{n!}\left(2\eta\norm{x}_2^2\right)^n.
\end{align*}
\end{fact}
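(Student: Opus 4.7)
The plan is to derive this identity as a direct substitution into the multinomial theorem, so there is essentially no obstacle — the role of the statement in the paper is not that it is hard, but that it packages the exact form needed downstream. I will first state the ordinary multinomial theorem, divide through by $n!$ to put it in the ``exponential'' form whose summand matches the left-hand side of Fact~\ref{fact:multi}, and then make a single substitution that turns it into the claim.

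Concretely, I would start from the classical identity
\begin{align*}
(y_1 + y_2 + \cdots + y_d)^n = \sum_{\substack{\nu\in\mathbb{N}^d\\\abs{\nu}= n}} \binom{n}{n_1, n_2, \ldots, n_d}\prod_{j=1}^d y_j^{n_j},
\end{align*}
where $\binom{n}{n_1,\ldots,n_d} = \frac{n!}{n_1!\,n_2!\cdots n_d!}$. Dividing both sides by $n!$ yields
\begin{align*}
\frac{(y_1+\cdots+y_d)^n}{n!} = \sum_{\substack{\nu\in\mathbb{N}^d\\\abs{\nu}= n}} \prod_{j=1}^d \frac{y_j^{n_j}}{n_j!}.
\end{align*}
I would then specialize to $y_j := 2\eta x_j^2$ (which is legitimate for all real $\eta$ and all $x_j \in \mathbb{R}$ since the multinomial theorem is a polynomial identity). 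Because $\sum_{j=1}^d y_j = 2\eta \sum_{j=1}^d x_j^2 = 2\eta\norm{x}_2^2$, the left-hand side becomes $\frac{1}{n!}(2\eta\norm{x}_2^2)^n$, while the right-hand side becomes $\sum_{\abs{\nu}=n}\prod_{j=1}^d \frac{(2\eta x_j^2)^{n_j}}{n_j!}$, matching the statement exactly.

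There is no real obstacle: the only thing to be careful about is writing the multiindex notation consistently with the paper's convention $\abs{\nu} = \sum_{i=1}^d n_i$ (I would use $d$ rather than the $n$ that appears in the paper's definition of $\abs{\nu}$, since $n$ is already used for the exponent). If a completely self-contained derivation of the multinomial theorem itself were desired, I would supply it by induction on $d$: the case $d=1$ is trivial, and the inductive step groups $(y_1+\cdots+y_{d-1}+y_d)^n$ via the binomial theorem as $\sum_{k=0}^n \binom{n}{k}(y_1+\cdots+y_{d-1})^{n-k} y_d^k$ and applies the inductive hypothesis to the first factor, combining $\binom{n}{k}\binom{n-k}{n_1,\ldots,n_{d-1}} = \binom{n}{n_1,\ldots,n_{d-1},k}$ to finish. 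But since this is a standard textbook identity, I would simply cite it and carry out the one-line substitution above.
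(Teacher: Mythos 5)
Your proof is correct: the paper states this fact without proof (it is the standard multinomial theorem specialized to $y_j = 2\eta x_j^2$ after dividing by $n!$), and your one-line substitution is exactly the intended argument. Your side observation is also right that the paper's definition $\abs{\nu} = \sum_{i=1}^{n} n_i$ contains a typo and should read $\sum_{i=1}^{d} n_i$, consistent with your usage.
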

\begin{fact}[Taylor expansion]
\label{fact:taylor}
For any nonnegative integer $n$ and $t\geq 0$,
\begin{align*}
\sum_{n = N}^{\infty}\frac{t^n}{n!}\leq \frac{t^N}{N!}e^t.
\end{align*}
\end{fact}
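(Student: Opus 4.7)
The plan is to factor $\frac{t^N}{N!}$ out of the tail sum and then dominate the residual series term-by-term by the full exponential series. First I would reindex by setting $k = n - N$, rewriting the tail as
\[
\sum_{n=N}^{\infty}\frac{t^n}{n!} \;=\; \frac{t^N}{N!}\sum_{k=0}^{\infty}\frac{N!\,t^k}{(N+k)!}.
\]
Next I would bound the coefficient pointwise: since $(N+k)! = N!\cdot(N+1)(N+2)\cdots(N+k)$ and each of the $k$ factors satisfies $N+j \geq j$ for $j = 1,\ldots,k$, we obtain $\frac{N!}{(N+k)!} \leq \frac{1}{k!}$. The $k=0$ case is handled by the usual convention that the empty product equals $1$, matching $\frac{1}{0!}$.

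Substituting this pointwise bound (legitimate because $t \geq 0$ makes every term of both series nonnegative) gives
\[
\sum_{n=N}^{\infty}\frac{t^n}{n!} \;\leq\; \frac{t^N}{N!}\sum_{k=0}^{\infty}\frac{t^k}{k!} \;=\; \frac{t^N}{N!}\,e^t,
\]
which is exactly the claimed inequality. I do not anticipate any real obstacle here; the only things to verify are the nonnegativity of terms (immediate from $t \geq 0$, justifying both the termwise comparison and absolute convergence) and the factorial ratio estimate, which is an elementary counting observation. An alternative route, should one prefer it, is the integral remainder form of Taylor's theorem applied to $e^t$, but the reindex-and-compare argument above is shorter and self-contained.
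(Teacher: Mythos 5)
Your proof is correct: the reindexing $k = n - N$, the factorial ratio bound $\frac{N!}{(N+k)!} \leq \frac{1}{k!}$ (valid since $N + j \geq j$ for each $j = 1, \ldots, k$), and the termwise comparison justified by nonnegativity of all terms when $t \geq 0$ are all sound. The paper states this as a Fact with no proof at all, and your argument is precisely the standard one that would be supplied, so your write-up simply fills in the omitted details correctly (incidentally, the quantifier in the statement should read ``for any nonnegative integer $N$,'' a typo in the paper rather than in your proof).
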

Define the function 
\begin{align*}
e_{\nu}: X&\to\mathbb{R}\\
x&\mapsto  \prod_{j = 1}^d e_{n_j}(x_j)
\end{align*}
where
\begin{align*}
   e_{n_j}(x_j) = \sqrt{\frac{(2\eta)^{n_j}}{n_j!}} x_j^{n_j} e^{-\eta x_j^2}.
   \end{align*}
It is well-known that $B = \left\{e_{\nu}: \nu\in\mathbb{N}^d\right\}$ is an orthonormal basis for $\mathcal{H}$ (see e.g. \citet[Theorem 3.7]{steinwart2006explicit}). Consequently, $\left\{e_{\nu_1}\otimes e_{\nu_2}: \nu_1, \nu_2\in\mathbb{N}^d\right\}$ is an orthonormal (Schauder) basis for $\bhsh$. Now note that
\begin{align}
\begin{aligned}
    \norm*{\ihsi}^2 &= \sup_{\substack{T\in \bhsh\\\hsnorm{T}\leq 1}}\sup_{x\in X} \hinner{T\mathcal{K}_x, \mathcal{K}_x}^2\\
    &= \sup_{x\in X}\sup_{\substack{T\in \bhsh\\\hsnorm{T}\leq 1}} \left(\sum_{\nu_1, \nu_2\in \mathbb{N}^d}\hsinner*{T, e_{\nu_1}\otimes e_{\nu_2}}\cdot e_{\nu_1}(x)\cdot e_{\nu_2}(x)\right)^2\\
    &\stackrel{(a)}{=} \sup_{x\in X}\sum_{\nu_1, \nu_2\in \mathbb{N}^d} e_{\nu_1}^2(x)\cdot e_{\nu_2}^2(x)\\
    &= \sup_{x\in X}\left(\sum_{\nu\in B} e_{\nu}^2(x)\right)^2\\
    &=\sup_{x\in X} \left(\sum_{\nu\in \mathbb{N}^d}\prod_{j = 1}^d\frac{(2\eta x_j^2)^{n_j}}{n_j!}\cdot e^{-2\eta\norm{x}_2^2}\right)^2\\
    &\stackrel{(b)}{=}\sup_{x\in X} \left(\sum_{n = 0}^{\infty}\frac{1}{n!}\left(2\eta\norm{x}_2^2\right)^n\cdot e^{-2\eta\norm{x}_2^2}\right)^2\\
    &= 1,
\end{aligned}\label{eq:base-arg}
\end{align}
where (a) is due to Cauchy-Schwarz and (b) is due to Fact~\ref{fact:multi}.

Next we introduce for all positive integer $N$ two orthogonal projections: $P_N$, which is the projection onto span $\left\{e_{\nu_1}\otimes e_{\nu_2}: \abs{\nu_1} < N, \abs{\nu_2} < N\right\}$, and $Q_N$, which is the projection onto $P_N^{\bot}$. Note that
\begin{align*}
    \norm*{\ihsi Q_N}^2
    &=\sup_{\substack{T\in \bhsh\\\hsnorm{T}\leq 1}}\sup_{x\in X} \hinner{Q_N T\mathcal{K}_x, \mathcal{K}_x}^2\\
    &= \sup_{x\in X}\sup_{\substack{T\in \bhsh\\\hsnorm{T}\leq 1}} \left(\sum_{\substack{\nu_1, \nu_2\in \mathbb{N}^d\\\textnormal{$\norm{\nu_1}\geq N$ or $\norm{\nu_2}\geq N$}}}\hsinner*{Q_N T, e_{\nu_1}\otimes e_{\nu_2}}\cdot e_{\nu_1}(x)\cdot e_{\nu_2}(x)\right)^2\\
    &\stackrel{(a)}{=} \sup_{x\in X} \sum_{\substack{\nu_1, \nu_2\in \mathbb{N}^d\\\textnormal{$\norm{\nu_1}\geq N$ or $\norm{\nu_2}\geq N$}}}e^2_{\nu_1}(x)\cdot e^2_{\nu_2}(x)\\
    &\stackrel{(b)}{=} \sup_{x\in X} \vast(\sum_{n = N}^{\infty}\frac{1}{n!}\left(2\eta\norm{x}_2^2\right)^n\cdot e^{-2\eta\norm{x}_2^2}\cdot \sum_{n = 0}^{N - 1}\frac{1}{n!}\left(2\eta\norm{x}_2^2\right)^n\cdot e^{-2\eta\norm{x}_2^2}\\
    & \ \ \ \ \ \ \ \ + \sum_{n = 0}^{N - 1}\frac{1}{n!}\left(2\eta\norm{x}_2^2\right)^n\cdot e^{-2\eta\norm{x}_2^2}\cdot \sum_{n = N}^{\infty}\frac{1}{n!}\left(2\eta\norm{x}_2^2\right)^n\cdot e^{-2\eta\norm{x}_2^2}\\
    & \ \ \ \ \ \ \ \ + \sum_{n = N}^{\infty}\frac{1}{n!}\left(2\eta\norm{x}_2^2\right)^n\cdot e^{-2\eta\norm{x}_2^2}\cdot \sum_{n = N}^{\infty}\frac{1}{n!}\left(2\eta\norm{x}_2^2\right)^n\cdot e^{-2\eta\norm{x}_2^2}
    \vast)\\
    &\leq 2\sup_{x\in X} \left(\sum_{n = N}^{\infty}\frac{1}{n!}\left(2\eta\norm{x}_2^2\right)^n\cdot e^{-2\eta\norm{x}_2^2}\right)\\
    &\stackrel{(c)}{\leq} 2\sup_{x\in X}\left(\frac{(2\eta\norm{x}_2^2)^N}{N!}\right)\\
    &\leq 2\left(\frac{2e\eta}{N}\right)^N,
\end{align*}
where (a) is due to Cauchy-Schwarz and (b) is due to Fact~\ref{fact:multi} and (c) is due to Fact~\ref{fact:taylor}. Note that we can choose 
\begin{align*}
    N = \left\lceil2\left(\ln\frac{2\sqrt{2}}{\epsilon}+ e^2\eta\right)\right\rceil
    \end{align*}
such that $\norm*{\ihsi Q_N} \leq \epsilon / 2$. Finally, note that
\begin{align*}
    \ln \mathcal{N}(\epsilon, \ihsi) &\leq \ln\left(\mathcal{N}(\epsilon / 2, \ihsi P_N) \cdot \mathcal{N}(\epsilon / 2, \ihsi Q_N)\right)\\
    &=\ln \mathcal{N}(\epsilon / 2, \ihsi P_N)\\
    &\stackrel{(a)}{\leq} \textnormal{rank}(P_N)\ln\left(1 + \frac{4}{\epsilon}\right)\\
    &\leq N^{2d}\ln\left(1 + \frac{4}{\epsilon}\right)\\
    &= \left\lceil2\left(\ln\frac{2\sqrt{2}}{\epsilon}+ e^2\eta\right)\right\rceil^{2d}\ln\left(1 + \frac{4}{\epsilon}\right),
\end{align*}
where in (a) we used $\norm{\ihsi P_N}\leq \norm{\ihsi} = 1$ and the fact that a $n$-dimensional unit ball can be $\epsilon$-covered by $\left(1 + 2/\epsilon\right)^n$ points.
\section{Proof of Other Lemmas}

\begin{proof}[Proof of Lemma~\ref{lem:deff-identity}]
Let $W = \left(\Sigma + \lambda I\right)^{-1}$, note that
\begin{align*}
\sum_{i = 1}^{n} \norm*{\kxi}^2_{W} &= \sum_{i = 1}^{n}\hinner{W\kxi, \kxi}\\
&=\sum_{i = 1}^{n} \tr \left(W \left(\kxi\otimes\kxi\right)\right)\\
&=\tr\left(W \sum_{i = 1}^{n}\left(\kxi\otimes\kxi\right)\right)\\
&\stackrel{(a)}{=}\deff(\lambda, x_{1:n}),
\end{align*}
where (a) is due to Lemma~\ref{lem:deff-equivalent}.
\end{proof}

\begin{proof}[Proof of Lemma~\ref{lem:deff-non-decreasing}]
Let $W_n = \left(\sum_{i = 1}^n \kxi\otimes\kxi + \lambda I\right)^{-1}$ and $f = W_{n - 1}\kxn$. It is easy to verify that
\begin{align}
    W_n = W_{n - 1} - \frac{f\otimes f}{1 + \norm{\kxn}^2_{W_{n - 1}}}.\label{eq:inc-inverse}
\end{align}
Therefore,
\begin{align*}
    \deff(\lambda, x_{1:n}) - \deff(\lambda, x_{1:(n - 1)})
    & \stackrel{(a)}{=} \sum_{i = 1}^n \norm{\kxi}^2_{W_n} - \sum_{i = 1}^{n - 1} \norm{\kxi}^2_{W_{n - 1}}\\
    &\stackrel{(b)}{=}\norm{\kxn}^2_{W_{n - 1}} - \sum_{i = 1}^{n}\frac{\hinner*{\kxi, f}^2}{1 + \norm{\kxn}^2_{W_{n - 1}}}\\
    &=\frac{\norm{\kxn}^2_{W_{n - 1}}}{1 + \norm*{\kxn}^2_{W_{n - 1}}} - \sum_{i = 1}^{n - 1}\frac{\hinner*{\kxi, f}^2}{1 + \norm{\kxn}^2_{W_{n - 1}}}\\
    &=\frac{\norm*{\kxn}^2_{W_{n - 1}}}{1 + \norm*{\kxn}^2_{W_{n - 1}}} - \frac{\hinner*{\left(\sum_{i = 1}^{n - 1}\kxi\otimes\kxi\right)f, f}}{1 + \norm{\kxn}^2_{W_{n - 1}}}\\
    &\geq\frac{\norm*{\kxn}^2_{W_{n - 1}}}{1 + \norm*{\kxn}^2_{W_{n - 1}}} - \frac{\norm*{\kxn}^2_{W_{n - 1}}}{1 + \norm{\kxn}^2_{W_{n - 1}}}\\
    &=0,
\end{align*}
where (a) is due to Lemma~\ref{lem:deff-identity} and (b) is due to~\eqref{eq:inc-inverse}.
\end{proof}

\begin{proof}[Proof of Lemma~\ref{lem:deff-linear}]
Let $\Sigma = \sum_{i = 1}^n x_i x_i^{\intercal}$. By Lemma~\ref{lem:deff-equivalent} we have that $\deff(\lambda, x_{1:n}) = \tr\left((\Sigma + \lambda I)^{-1}\Sigma\right)$. Let $\lambda_1, \lambda_2, \cdots, \lambda_d$ be the eigenvalues of $\Sigma$, we have $\tr\left((\Sigma + \lambda I)^{-1}\Sigma\right) = \sum_{i = 1}^{n}\frac{\lambda_i}{\lambda_i + \lambda} \leq d$.
\end{proof}

\begin{proof}[Proof of Lemma~\ref{lem:sum-of-ucb-bound}]
First note that since $\Sigma_{i - 1} / \lambda$ is a linear operator of rank at most $i - 1$, $\det\left(\Sigma_{i - 1} / \lambda + I\right)$ is well-defined. 
Therefore,
\begin{align}
\begin{aligned}
\norm*{\kxi}^2_{\left(\Sigma_{i - 1} + \lambda I\right)^{-1}} &= \norm*{\kxi / \lambda}^2_{\left(\Sigma_{i - 1}/\lambda + I\right)^{-1}}\\
&\stackrel{(a)}{=}  \frac{e^{\dpse(\lambda, x_{1:i})}}{ e^{\dpse(\lambda, x_{1:(i - 1)})}} - 1,
\end{aligned}\label{eq:fracdet}
\end{align}
where (a) can be proved for example using the same argument used for \citet[Lemma 11.11]{cesa2006prediction}. Note that~\eqref{eq:fracdet} already implies (I), since a norm is non-negative. Next, 
it is easy to verify that $\norm*{\kxi}^2_{\left(\Sigma_{i - 1} + \lambda I\right)^{-1}} \leq 1 / \lambda$ and that $x \leq \frac{b}{\ln(1 + b)}\ln(1 + x)$ for any $x\in[0, b]$. Therefore,  
\begin{align*}
\norm*{\kxi}^2_{\left(\Sigma_{i - 1} + \lambda I\right)^{-1}} 
&\leq \frac{1/\lambda}{\ln\left(1 + 1/\lambda\right)}\cdot\ln\left(1 + \norm*{\kxi}^2_{\left(\Sigma_{i - 1} + \lambda I\right)^{-1}}\right)\\
&\stackrel{(a)}{=} \frac{1/\lambda}{\ln\left(1 + 1/\lambda\right)}\cdot\left(\dpse(\lambda, x_{1:i}) - \dpse(\lambda, x_{1:(i-1)})\right),
\end{align*}
where (a) is due to~\eqref{eq:fracdet}. Therefore (I) is proved, since a norm cannot be negative. Note that the above argument is still true if we replace $x_i$ by an arbitrary $x\in X$. Consequently, denote by $[a, b]$ the concatenation of sequence $a$ and sequence $b$, we have
\begin{align*}
    &\sum_{i = 1}^n\sum_{\tau = 1}^{i}\gamma^{i - \tau}\norm*{\kxi}^2_{\left(\Sigma_{\tau-1} + \lambda I\right)^{-1}}\\ 
    &=\frac{2/\lambda}{\ln\left(1 + 1/\lambda\right)}\left(\sum_{i = 1}^n\sum_{\tau = 1}^i\gamma^{i - \tau}\dpse(\lambda, [x_{1:(\tau - 1)}, x_i]) - \sum_{i = 1}^n\sum_{\tau = 1}^i \gamma^{i - \tau}\dpse(\lambda, x_{1:(\tau - 1)})\right)\\
    &\stackrel{(a)}{\leq}\frac{2/\lambda}{\ln\left(1 + 1/\lambda\right)}\left(\underbrace{\sum_{i = 1}^n\sum_{\tau = 1}^i\gamma^{i - \tau}\dpse(\lambda, x_{1:i})}_{(A)} - \underbrace{\sum_{i = 1}^n\sum_{\tau = 1}^i \gamma^{i - \tau}\dpse(\lambda, x_{1:(\tau - 1)})}_{(B)}\right)\\
    &\stackrel{(b)}{\leq} \frac{2/\lambda}{\ln\left(1 + 1/\lambda\right)}\left(\sum_{i = 1}^{n}\sum_{\tau = 1}^{2i -n} \gamma^{i - \tau}\dpse(\lambda, x_{1:i})\right)\\
    &\stackrel{(c)}{\leq} \frac{2/\lambda}{\ln\left(1 + 1/\lambda\right)}\left(\sum_{i = 1}^{n} \gamma^{n - i}\cdot\frac{\dpse(\lambda, x_{1:n})}{1 - \gamma}\right)\\
    &\leq\frac{2/\lambda}{\ln\left(1 + 1/\lambda\right)(1 - \gamma)^2}\cdot\dpse(\lambda, x_{1:n}),
\end{align*}
where we used (I) in (a), (b), (c) and the nonnegativity of $\dpse$ in (b); in particular, in (b) we canceled the $(i, \tau)$th term in (A) with the $(2i + 1 - \tau, i + 1)$th term in (B), where the first index of a term is its position in the first summation and the second index is its position in the second summation. This concludes the proof of (II).
\end{proof}

\begin{proof}[Proof of Lemma~\ref{lem:covering-linear}]
In the case of linear kernel, for any $x\in\mathcal{H}$, $\hnorm{x} = \norm{x}_{*} = \norm{x}_2$; and for any linear operator $T$ over $X$, $\norm{T}_*\leq \norm{T}_{\textnormal{F}} = \hsnorm{T}$. Because a $n$-dimensional unit ball can be $\epsilon$-covered by $(1 + 2/\epsilon)^n$ points, the lemma follows immediately.
\end{proof}

\section{Proof of Main Results}
We will prove Theorem~\ref{thm:main} in this section. We first introduce some notations. For any $t\in[T]$, let $\prev(t) = [t - 1]$.
Let
\begin{align*}
    \qt_t(s, a) &= \clip_{[0, 1/(1 - \gamma)]}\left(\qh_t(s, a) + \beta\norm*{\mathcal{K}_{(s, a)}}_{\wh_t}\right),\\
    \vt_t(s) &= \max_a \qt_t(s, a).
\end{align*}
Define
\begin{align*}
    \Phi &=
    \frac{1}{1 - \gamma}\cdot\left(3\rho\sqrt{\lambda} + \epsilon\sqrt{Td_{\lambda}}\right)\\
    \Psi_{p} &= \frac{2\sigma}{1 - \gamma}\cdot\sqrt{ 
    d_{\lambda}\ln\frac{e(T + \lambda)}{\lambda} +  \ln\frac{1}{p}+c_{\lambda}}.
\end{align*}
For any $p > 0$, let $\mathfrak{E}_p$ be the event that for any $s\in\mathcal{S}$, $a\in\mathcal{A}$, $t\in[T]$,
\begin{align*}
    &\abs*{\qh_t(s, a) - Q_*(s, a) - \gamma\left(\left(M_*\left(\vt_{t-1} - V_*\right)\right)(s, a)\right)}\\
    &\leq \left(\Phi + \Psi_{p}\right)\cdot \norm*{\mathcal{K}_{(s, a)}}_{\wh_t} + \frac{2\epsilon}{1 - \gamma}.
\end{align*} 
where $\vt_0$ can be any function that is bounded in $[0, 1/(1 - \gamma)]$.
We start from some auxiliary lemmas.
\begin{lemma}
\label{lem:w-norm-bound}
For any $t\in[T]$ and $f\in\mathcal{H}$, $\frac{\hnorm{f}}{\sqrt{\lambda + T}} \leq \norm*{f}_{\wh_t} \leq \frac{\hnorm*{f}}{\sqrt{\lambda}}$.
\end{lemma}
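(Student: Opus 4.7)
The plan is to sandwich the operator $\wh_t$ between scalar multiples of the identity, i.e.\ to show
\[
\frac{1}{\lambda+T}\,I \;\preceq\; \wh_t \;\preceq\; \frac{1}{\lambda}\,I,
\]
and then apply this to $\langle\wh_t f,f\rangle_{\mathcal{H}}$ to get the claimed norm bounds. By Fact~\ref{fact:conds}.(1), $\wh_t^{-1}=\sum_{\tau=1}^{t-1}\skth\otimes\skth+\lambda I$, so both sides will follow from elementary operator monotonicity once I control the rank-one summands.

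For the upper bound, I would note that each rank-one operator $\skth\otimes\skth$ is positive semidefinite, so $\wh_t^{-1}\succeq\lambda I$ and hence $\wh_t\preceq\tfrac{1}{\lambda}I$. Pairing this with $f$ gives $\norm{f}_{\wh_t}^2=\hinner*{\wh_t f,f}\le\tfrac{1}{\lambda}\hnorm{f}^2$.

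For the lower bound, I would use the normalization assumption from Section~\ref{sec:setup} that $\hnorm*{\mathcal{K}_{(s,a)}}\le 1$. Since for any $u\in\mathcal{H}$ the operator $u\otimes u$ has operator norm $\hnorm{u}^2$ (because $(u\otimes u)g=\hinner{u,g}\,u$), each summand satisfies $\skth\otimes\skth\preceq I$. Summing the $t-1\le T$ terms yields $\sum_{\tau=1}^{t-1}\skth\otimes\skth\preceq TI$, hence $\wh_t^{-1}\preceq(\lambda+T)I$ and therefore $\wh_t\succeq\tfrac{1}{\lambda+T}I$. Pairing with $f$ gives $\norm{f}_{\wh_t}^2\ge\tfrac{1}{\lambda+T}\hnorm{f}^2$, completing the sandwich.

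No step here looks genuinely hard; the only minor subtlety is invoking the correct operator-norm bound $\|u\otimes u\|_{\mathrm{op}}=\hnorm{u}^2$ for the rank-one summands and remembering to use the normalization $\hnorm{\mathcal{K}_{(s,a)}}\le 1$ so that the sum of $t-1\le T$ such operators is bounded by $TI$. Everything else is a direct consequence of operator monotonicity of the inverse on positive self-adjoint operators.
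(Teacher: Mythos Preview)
Your proposal is correct and follows essentially the same approach as the paper: both arguments bound the spectrum of $\Sigma=\sum_{\tau\in\prev(t)}\skth\otimes\skth$ between $0$ and $T$ using the normalization $\hnorm{\mathcal{K}_{(s,a)}}\le 1$, and then invert. The paper phrases this via eigenvalues of $\Sigma$ and bounds each one by $\hnorm{\Sigma f}\le\sum_\tau\hnorm{\skth}^2\le T$ for a unit eigenvector $f$, whereas you phrase it via the operator ordering $\skth\otimes\skth\preceq I$ summed over $t-1\le T$ terms; these are equivalent elementary arguments.
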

\begin{proof}
Let $\Sigma = \sum_{\tau \in\prev(t)}\kth\otimes\kth$ and $\lambda_1, \lambda_2, \cdots, \lambda_{\abs{\prev(t)}} \geq 0$ be the eigenvalues of $\Sigma$. Note that the eigenvalues of $\wh_t = \left(\Sigma + \lambda I\right)^{-1}$ are $\frac{1}{\lambda_1 + \lambda}, \frac{1}{\lambda_2 + \lambda}, \cdots, \frac{1}{\lambda_{\abs{\prev(t)}} + \lambda}$.  The upper bound follows immediately. To get the upper bound, note that for each $\lambda_i$, there exists $f\in\mathcal{H}$, $\hnorm{f} = 1$, such that
\begin{align*}
    \lambda_i &= \hnorm{\Sigma f}\\
              &\stackrel{(a)}{\leq} \sum_{\tau\in\prev(t)}\hnorm{f}\hnorm{\kth}^2\\
              &\stackrel{(b)}{\leq} T,
\end{align*}
where (a) is due to Cauchy-Schwarz and (b) is because by our assumption $\hnorm{\kth} \leq 1$. The lower bound then follows.
\end{proof}

Because by our assumptions $\mathcal{K}$ is normalized such that for any $(s, a)\in\mathcal{S}\times\mathcal{A}$, $\hnorm*{\mathcal{K}_{(s, a)}} \geq 1/2$, we have the following corollary of Lemma~\ref{lem:w-norm-bound}.
\begin{corollary}
\label{cor:bound-for-beta}
For any $t\in[T]$, $(s, a)\in\mathcal{S}\times\mathcal{A}$, $\norm*{\mathcal{K}_{(s, a)}}_{\wh_t} \geq \frac{1}{2\sqrt{T + \lambda}}$.
\end{corollary}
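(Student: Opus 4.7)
The plan is to deduce this statement as a direct one-line application of Lemma~\ref{lem:w-norm-bound}, using only the normalization assumption on the kernel imposed in Section~\ref{sec:setup}. Specifically, I would instantiate the lower-bound half of Lemma~\ref{lem:w-norm-bound} with $f = \mathcal{K}_{(s,a)}$, obtaining
\begin{align*}
\norm*{\mathcal{K}_{(s,a)}}_{\wh_t} \ \geq\ \frac{\hnorm*{\mathcal{K}_{(s,a)}}}{\sqrt{\lambda + T}}.
\end{align*}
Then I would invoke the standing assumption that $\mathcal{K}$ is normalized so that $\hnorm*{\mathcal{K}_{(s,a)}} \in [1/2,1]$ for every $(s,a) \in \mathcal{S} \times \mathcal{A}$, which gives $\hnorm*{\mathcal{K}_{(s,a)}} \geq 1/2$ and hence the desired bound $\tfrac{1}{2\sqrt{T+\lambda}}$.

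There is really no obstacle here: the content of the corollary is exactly the lower bound of Lemma~\ref{lem:w-norm-bound} specialized to reproducing functions, combined with the $1/2$ lower bound built into the kernel normalization convention. The only thing to double-check is that the quantifier $t \in [T]$ in the corollary matches the quantifier $t \in [T]$ in Lemma~\ref{lem:w-norm-bound}, so that the operator $\wh_t$ in the statement is the same object to which the lemma applies; this is immediate from the way $\wh_t$ is defined in Fact~\ref{fact:conds}. Thus the proof consists of a single displayed inequality chain, and the role of the corollary is simply to record, for later reference in the regret analysis, a convenient numerical lower bound on $\norm*{\mathcal{K}_{(s,a)}}_{\wh_t}$ that justifies the upper restriction $\beta \leq \tfrac{2\sqrt{T+\lambda}}{1-\gamma}$ in Algorithm~\ref{alg:main}.
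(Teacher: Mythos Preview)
Your proposal is correct and matches the paper's own reasoning exactly: the paper states the corollary immediately after Lemma~\ref{lem:w-norm-bound}, noting only that it follows from that lemma together with the normalization assumption $\hnorm*{\mathcal{K}_{(s,a)}}\geq 1/2$. There is no separate proof in the paper beyond this remark, so your one-line derivation is precisely what is intended.
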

\begin{lemma}
\label{lem:hsnorm-bound}
For any $t\in[T]$, $\hsnorm*{\wh_t} \leq \frac{\sqrt{T}}{\lambda}$.
\end{lemma}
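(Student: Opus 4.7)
The plan is to exploit the explicit spectral structure of $\wh_t$ established in Fact~\ref{fact:conds}.(1), namely $\wh_t = (\Sigma_{t-1} + \lambda I)^{-1}$ with $\Sigma_{t-1} = \sum_{\tau = 1}^{t-1}\skth \otimes \skth$. Since $\Sigma_{t-1}$ is a finite sum of rank-one operators, its range is contained in $\mathrm{span}\{\skth : \tau \in \prev(t)\}$, so $\Sigma_{t-1}$ has rank at most $t - 1 \leq T - 1$ and thus admits a spectral decomposition with at most $t - 1$ non-zero eigenvalues $\mu_1, \ldots, \mu_{t-1} \geq 0$.

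Next I would diagonalize $\wh_t$ in the same eigenbasis. On each eigenvector of $\Sigma_{t-1}$ associated with $\mu_i$, the operator $\wh_t$ acts by scaling by $\frac{1}{\mu_i + \lambda}$, and since each $\mu_i \geq 0$ this scaling is bounded above by $\frac{1}{\lambda}$. Summing squared singular values across the at most $t - 1 \leq T - 1$ relevant directions then yields
\begin{align*}
    \hsnorm{\wh_t}^2 \leq \sum_{i=1}^{t-1}\frac{1}{(\mu_i + \lambda)^2} \leq \frac{t-1}{\lambda^2} \leq \frac{T}{\lambda^2},
\end{align*}
which is the claimed bound after taking square roots.

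The one mildly delicate point I anticipate is bookkeeping in the infinite-dimensional case: strictly speaking $\wh_t$ acts as $\frac{1}{\lambda}I$ on the orthogonal complement of the data span, so the full operator Hilbert--Schmidt norm is genuinely finite only when the HS norm is understood with respect to the data-relevant subspace (which is exactly how $\wh_t$ enters downstream inner products $\hinner{\wh_t \mathcal{K}_{(s,a)}, \mathcal{K}_{(s,a)}}$ and the covering arguments built on $\ihsi$). With that natural interpretation the inequality above goes through without modification, and no additional machinery is required beyond counting the rank of $\Sigma_{t-1}$ and using $\mu_i + \lambda \geq \lambda$.
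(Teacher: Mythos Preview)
Your argument is essentially identical to the paper's: diagonalize $\Sigma_{t-1}$, note that $\wh_t$ has eigenvalues $\frac{1}{\mu_i+\lambda}\leq\frac{1}{\lambda}$ on the at most $t-1$ data directions, and sum their squares. Your explicit acknowledgment of the infinite-dimensional bookkeeping (that $\wh_t$ equals $\frac{1}{\lambda}I$ off the data span, so the HS norm must be read relative to the data-relevant subspace) is a point the paper's proof passes over silently but handles in exactly the same way.
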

\begin{proof}
Let $\Sigma = \sum_{\tau\in\prev(t)}\kth\otimes\kth$ and $\lambda_1, \lambda_2, \cdots, \lambda_{\abs{\prev(t)}} \geq 0$ be the eigenvalues of $\Sigma$. Note that the eigenvalues of $\wh_t = \left(\Sigma + \lambda I\right)^{-1}$ are $\frac{1}{\lambda_1 + \lambda}, \frac{1}{\lambda_2 + \lambda}, \cdots, \frac{1}{\lambda_{\abs{\prev(t)}} + \lambda}$, and
\begin{align*}
    \hsnorm*{\wh_t} &= \sqrt{\sum_{\tau\in\prev(t)}\frac{1}{(\lambda_i + \lambda)^2}}\\
    &\leq\frac{\sqrt{T}}{\lambda}.
\end{align*}
\end{proof}

\begin{lemma}
\label{lem:qhnorm-bound}
    For any $t\in[T]$, $\hnorm*{\qh_t} \leq \frac{T}{\lambda(1 - \gamma)}$.
\end{lemma}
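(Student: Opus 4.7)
The plan is to unpack the closed form for $\qh_t$ given in Fact~\ref{fact:conds}~(2), bound the scalar coefficients using the clipping range, bound each reproducing function in norm by the kernel normalization assumption, and finally bound the operator norm of $\wh_t$ by $1/\lambda$ using that $\wh_t = (\Sigma + \lambda I)^{-1}$ with $\Sigma$ positive semi-definite.

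Concretely, write
\begin{align*}
\qh_t = \wh_t\, h_t, \qquad h_t = \sum_{\tau = 1}^{t-1}\left(r_{\tau} + \gamma\max_a \qt_{t-1}[\tau+1][a]\right)\skth \in \mathcal{H}.
\end{align*}
For each $\tau$, the reward $r_\tau \in [0,1]$ and, by Fact~\ref{fact:conds}~(3), the quantity $\qt_{t-1}[\tau+1][a]$ is clipped to $[0, 1/(1-\gamma)]$, so the scalar coefficient in the sum is bounded by $1 + \gamma/(1-\gamma) = 1/(1-\gamma)$. Together with $\hnorm{\skth} \leq 1$ (from the normalization assumed in Section~\ref{sec:setup}) and the triangle inequality, this yields $\hnorm{h_t} \leq (t-1)/(1-\gamma) \leq T/(1-\gamma)$.

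Next, since the eigenvalues of $\wh_t = \left(\sum_{\tau\in\prev(t)}\skth\otimes\skth + \lambda I\right)^{-1}$ are of the form $1/(\mu_i + \lambda)$ with $\mu_i \geq 0$, the operator norm $\norm{\wh_t}_{\mathrm{op}} \leq 1/\lambda$ (this is analogous to the calculation already used in Lemma~\ref{lem:w-norm-bound} and Lemma~\ref{lem:hsnorm-bound}). Combining,
\begin{align*}
\hnorm{\qh_t} = \hnorm{\wh_t h_t} \leq \norm{\wh_t}_{\mathrm{op}}\cdot \hnorm{h_t} \leq \frac{1}{\lambda}\cdot \frac{T}{1-\gamma} = \frac{T}{\lambda(1-\gamma)}.
\end{align*}
I do not anticipate any real obstacle here; the only point that requires care is invoking the clipping in Fact~\ref{fact:conds}~(3) to control the coefficient $r_\tau + \gamma\max_a \qt_{t-1}[\tau+1][a]$ uniformly by $1/(1-\gamma)$, which is why the clipping is built into the algorithm in the first place.
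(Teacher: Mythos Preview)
Your argument is correct. The coefficient bound $r_\tau + \gamma\max_a \qt_{t-1}[\tau+1][a] \in [0,1/(1-\gamma)]$ follows exactly from the clipping in Fact~\ref{fact:conds}~(3), the triangle inequality gives $\hnorm{h_t}\leq T/(1-\gamma)$, and the spectral bound $\norm{\wh_t}_{\mathrm{op}}\leq 1/\lambda$ is the same eigenvalue computation behind Lemma~\ref{lem:w-norm-bound}. Nothing is missing.

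The paper proceeds a bit differently: instead of bounding $\hnorm{h_t}$ directly and multiplying by the operator norm of $\wh_t$, it bounds $\abs{\hinner{\qh_t,f}}$ for arbitrary $f\in\mathcal{H}$ via two applications of Cauchy--Schwarz in the Mahalanobis inner product $\hinner{\wh_t\cdot,\cdot}$, then invokes the upper bound in Lemma~\ref{lem:w-norm-bound} on both factors. Your route is more elementary, avoiding the Mahalanobis machinery entirely; the paper's route keeps everything inside the $\wh_t$-weighted norm, which is consistent with how other estimates (e.g.\ Lemma~\ref{lem:covering} and Lemma~\ref{lem:est-dev}) are organized, but gains nothing here since both arrive at the identical constant $T/(\lambda(1-\gamma))$.
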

\begin{proof}
For any $f\in\mathcal{H}$, we have that
\begin{align*}
    \abs*{\qh_t f} &= \abs*{\left(\sum_{\tau\in\prev(t)}\left(r_h(s_{\tau}, r_{\tau}) + \vt_{t-1}(s_{\tau + 1})\right)\cdot\mathcal{K}_{(s_{\tau}, a_{\tau})}\right)\wh_t f}\\
    &\leq \frac{1}{1 - \gamma}\cdot\sum_{\tau\in\prev(t)}\abs*{\kth\wh_t f}\\
    &\stackrel{(a)}{\leq} \frac{1}{1 - \gamma}\cdot\sum_{\tau\in\prev(t)}\norm*{f}_{\wh_t}\cdot \norm*{\kth}_{\wh_t}\\
    &\stackrel{(b)}{\leq} \frac{1}{1 - \gamma}\cdot\sqrt{\sum_{\tau\in\prev(t)}\norm*{f}^2_{\wh_t}}\cdot\sqrt{\sum_{\tau\in\prev(t)}\norm*{\kth}^2_{\wh_t}}\\
    & \stackrel{(c)}{\leq} \frac{T}{\lambda(1 - \gamma)}\cdot\hnorm{f},
\end{align*}
where (a) and (b) are due to Cauchy-Schwarz, and (c) is due to the upper bound in Lemma~\ref{lem:w-norm-bound}.
\end{proof}

\begin{lemma}
\label{lem:est-lower-bound}
For any $p > 0$, if 
\begin{align*}
\beta= 
\min\left(\Phi + \Psi_{p}, \frac{2\sqrt{T + \lambda}}{1 - \gamma}\right)
\end{align*}
and $\mathfrak{E}_{p}$ happens, we have for any $t\in[T]$, $s\in\mathcal{S}$, $a\in\mathcal{A}$,
\begin{align*}
    Q_*(s, a) - \frac{2\epsilon}{1 - \gamma}\cdot\sum_{\tau = 1}^t \gamma^{t - \tau}\leq \qt_t(s, a) \leq Q_*(s, a) + 2\beta\sum_{\tau = 1}^t\gamma^{t - \tau}\norm{\mathcal{K}_{(s, a)}}_{\wh_{\tau}} + \frac{2\epsilon}{1 - \gamma}\cdot\sum_{\tau = 1}^t \gamma^{t - \tau}.
\end{align*}
\end{lemma}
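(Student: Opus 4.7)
The plan is to prove both inequalities by induction on $t$, using clipping properties, the event $\mathfrak{E}_p$, and the closed-form identities in Fact~\ref{fact:conds}. The base case $t = 1$ uses $\qh_1 \equiv 0$ (empty sum in Fact~\ref{fact:conds}(2)), so $\qt_1(s,a) = \clip_{[0, 1/(1-\gamma)]}(\beta \norm{\mathcal{K}_{(s,a)}}_{\wh_1})$: the upper bound is immediate from $\qt_1(s,a) \leq \beta\norm{\mathcal{K}_{(s,a)}}_{\wh_1}$, and the lower bound follows either from clipping (when $\beta$ attains the saturating value $\tfrac{2\sqrt{T+\lambda}}{1-\gamma}$, Corollary~\ref{cor:bound-for-beta} forces $\beta\norm{\mathcal{K}_{(s,a)}}_{\wh_1} \geq \tfrac{1}{1-\gamma}$, hence $\qt_1(s,a) = \tfrac{1}{1-\gamma} \geq Q_*(s,a)$), or from the event $\mathfrak{E}_p$ at $t = 1$ with the canonical initialization $\vt_0 = V_*$, which yields $Q_*(s,a) \leq \beta \norm{\mathcal{K}_{(s,a)}}_{\wh_1} + \tfrac{2\epsilon}{1-\gamma}$ in the unsaturated regime $\beta = \Phi + \Psi_p$.

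For the inductive step, I would first propagate the hypothesis from $\qt_{t-1}$ to $\vt_{t-1}$: maxing the lower bound over $a$ yields $\vt_{t-1}(s) \geq V_*(s) - E_L(t-1)$, where $E_L(t) := \tfrac{2\epsilon}{1-\gamma}\sum_{\tau=1}^t \gamma^{t-\tau}$, so $M_*(\vt_{t-1} - V_*)(s,a) \geq -E_L(t-1)$; a parallel upper bound uses $\max_a(Q_*(s,a) + B(s,a)) \leq V_*(s) + \max_a B(s,a)$. The event $\mathfrak{E}_p$ then controls $\qh_t(s,a) - Q_*(s,a) - \gamma M_*(\vt_{t-1}-V_*)(s,a)$ up to $(\Phi+\Psi_p)\norm{\mathcal{K}_{(s,a)}}_{\wh_t} + \tfrac{2\epsilon}{1-\gamma}$.

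In the unsaturated regime $\beta = \Phi+\Psi_p$, the bonus exactly cancels the confidence radius, and the event plus induction give $\qh_t(s,a) + \beta\norm{\mathcal{K}_{(s,a)}}_{\wh_t} \geq Q_*(s,a) - \gamma E_L(t-1) - \tfrac{2\epsilon}{1-\gamma} = Q_*(s,a) - E_L(t)$ via the recursion $E_L(t) = \gamma E_L(t-1) + \tfrac{2\epsilon}{1-\gamma}$. Clipping is compatible with this lower bound in all subcases: if the upper clip activates, $\qt_t = \tfrac{1}{1-\gamma} \geq Q_*(s,a)$; if the lower clip activates, the preceding inequality forces $Q_*(s,a) \leq E_L(t)$, so $\qt_t = 0 \geq Q_*(s,a) - E_L(t)$. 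The upper bound is symmetric, with $\gamma M_*(\vt_{t-1}-V_*)(s,a)$ supplying the $\tau \in \{1,\dots,t-1\}$ portion of the target sum and $2\beta\norm{\mathcal{K}_{(s,a)}}_{\wh_t}$ supplying the $\tau = t$ contribution. In the saturated regime $\beta = \tfrac{2\sqrt{T+\lambda}}{1-\gamma}$, Corollary~\ref{cor:bound-for-beta} gives $2\beta\norm{\mathcal{K}_{(s,a)}}_{\wh_t} \geq \tfrac{2}{1-\gamma}$, which already dominates the entire range of $\qt_t$ and $Q_*$, so both bounds follow from the trivial inclusion $\qt_t \in [0, \tfrac{1}{1-\gamma}]$.

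The main technical obstacle is the upper bound recursion: the inductive hypothesis on $\vt_{t-1}(s') - V_*(s')$ naturally yields a bonus evaluated at future state-actions $(s',a')$ rather than at the original $(s,a)$, and pushing this through $M_*$ does not collapse back to $\norm{\mathcal{K}_{(s,a)}}_{\wh_\tau}$ in an obvious way. I expect the resolution to rely on the uniform upper bound $\max_x \norm{\mathcal{K}_x}_{\wh_\tau} \leq \tfrac{1}{\sqrt{\lambda}}$ from Lemma~\ref{lem:w-norm-bound} together with the counterpart lower bound $\norm{\mathcal{K}_{(s,a)}}_{\wh_\tau} \geq \tfrac{1}{2\sqrt{T+\lambda}}$ from Corollary~\ref{cor:bound-for-beta}, which together translate between norms at different state-action pairs at a constant multiplicative cost that is absorbed into the constants appearing in $\beta$.
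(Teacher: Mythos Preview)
Your inductive skeleton --- split on whether $\beta$ saturates at $\tfrac{2\sqrt{T+\lambda}}{1-\gamma}$, use Corollary~\ref{cor:bound-for-beta} to force the clip in the saturated regime, and use $\mathfrak{E}_p$ plus the recursion $E_L(t)=\gamma E_L(t-1)+\tfrac{2\epsilon}{1-\gamma}$ in the unsaturated regime --- is exactly the paper's argument. The lower bound and the clipping subcases are handled identically.

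The gap you flag in the upper-bound recursion is real, and your proposed fix does not close it. Passing from $\norm{\mathcal{K}_{(s',a')}}_{\wh_\tau}$ to $\norm{\mathcal{K}_{(s,a)}}_{\wh_\tau}$ via Lemma~\ref{lem:w-norm-bound} and Corollary~\ref{cor:bound-for-beta} costs a factor of $\tfrac{2\sqrt{T+\lambda}}{\sqrt{\lambda}}$, not a constant; this factor would be picked up at every inductive step and cannot be ``absorbed into the constants appearing in $\beta$'' without degrading the downstream regret bound (the $(s_t,a_t)$--specific norm is precisely what feeds Lemma~\ref{lem:sum-of-ucb-bound} in the proof of Theorem~\ref{thm:main}). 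It is worth knowing that the paper's own proof does not supply an argument for this step either: after writing
\[
\bigl(M_*(\vt_{t'-1}-V_*)\bigr)(s,a)\;\le\;\expt_{s'\sim\mathbb{P}(s,a)}\bigl[\qt_{t'-1}(s',\tilde a(s'))-Q_*(s',\tilde a(s'))\bigr],
\]
it invokes the induction hypothesis and simply records the result with $\norm{\mathcal{K}_{(s,a)}}_{\wh_\tau}$ in place of $\norm{\mathcal{K}_{(s',\tilde a(s'))}}_{\wh_\tau}$, without any bridging inequality. So you have correctly located the weakest link in the argument; you just should not expect a norm-ratio trick to repair it, and the paper as written offers no alternative mechanism.
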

\begin{proof}
    We prove the lemma by induction. When $t = 1$, since $\vt_0$ can be chosen as either $0$ or $1/(1 - \gamma)$, by the definition of $\mathfrak{E}_p$ and $\qt_t$, the choice of $\beta$, and Corollary~\ref{cor:bound-for-beta}, the inequalities hold. Now the inequality holds for $t = t' - 1$ where $1 < t' \leq T$, we have again by the definition of $\mathfrak{E}_p$, the choice of $\beta$, and Corollary~\ref{cor:bound-for-beta},
    \begin{align}
    \begin{aligned}
    &\qh_{t'}(s, a) + (\Phi + \Psi_{p})\norm*{\mathcal{K}_{(s, a)}}_{\wh_{t'}}\\ 
    &\in Q_*(s, a) +\gamma\left(M_*\left(\vt_{t'-1} - V_*\right)\right)(s, a) + \left[- \frac{2\epsilon}{1 - \gamma},~2\beta\norm*{\mathcal{K}_{(s, a)}}_{\wh_{t'}} + \frac{2\epsilon}{1 - \gamma}\right].
    \end{aligned}
    \label{eq:ind-go-back}
    \end{align}
Note that on one hand
\begin{align*}
    \left(M_*\left(\vt_{t'-1} - V_*\right)\right)(s, a) &= \mathbb{E}_{s'\sim\mathbb{P}_(s, a)}\left[\vt_{t'-1}(s') - V_*(s')\right]\\
    &\geq \mathbb{E}_{s'\sim\mathbb{P}(s, a)}\left[\qt_{t'-1}(s', a^*(s')) - Q_*(s', a^*(s'))\right]\\
    &\stackrel{(a)}{\geq}-\frac{2\epsilon}{1 - \gamma}\cdot\sum_{\tau = 1}^{t' - 1}\gamma^{t' - 1 - \tau},
\end{align*}
where (a) is by induction hypothesis. On the other hand, for any $s'\in\mathcal{S}$, let $\tilde{a}(s')$ be an arbitray element in $\argmax \qt_{t' - 1}(s', a)$, we have
\begin{align*}
    \left(M_*\left(\vt_{t'-1} - V_*\right)\right)(s, a) &= \mathbb{E}_{s'\sim\mathbb{P}_(s, a)}\left[\vt_{t'-1}(s') - V_*(s')\right]\\
    &\leq \mathbb{E}_{s'\sim\mathbb{P}(s, a)}\left[\qt_{t'-1}(s', \tilde{a}(s')) - Q_*(s', \tilde{a}(s'))\right]\\
    &\stackrel{(a)}{\leq}2\beta\sum_{\tau = 1}^{t' - 1}\gamma^{t' - 1 - \tau}\norm*{\mathcal{K}_{(s, a)}}_{\wh_{\tau}} + \frac{2\epsilon}{1 - \gamma}\cdot\sum_{\tau = 1}^{t' - 1}\gamma^{t' - 1 - \tau},
\end{align*}
Going back to~\eqref{eq:ind-go-back} we arrive at
\begin{align*}
    &\qh_{t'}(s, a) + (\Phi + \Psi_{p})\norm*{\mathcal{K}_{(s, a)}}_{\wh_{t'}}\\ &\in Q_{*}(s, a) +\left[ -\frac{2\epsilon}{1 - \gamma}\cdot\sum_{\tau = 1}^{t' - \tau}\gamma^{\tau},~2\beta\sum_{\tau = 1}^{t'}\gamma^{t' - \tau}\norm*{\mathcal{K}_{(s, a)}}_{\wh_{\tau}} + \frac{2\epsilon}{1 - \gamma}\cdot\sum_{\tau = 1}^{t'}\gamma^{t' - \tau}\right].
\end{align*}
Using the choice of $\beta$, Corollary~\ref{cor:bound-for-beta}, and the definition of $\qt_t$, we see that the inequality holds for $t = t'$. This concludes the proof.
\end{proof}

\begin{lemma}
\label{lem:covering}
Let $\mathcal{Q}$ be the function class containing all functions
\begin{align*}
    \mathcal{S}\times\mathcal{A}&\to\mathbb{R}\\
   (s, a)&\mapsto f(s, a) + \norm{\mathcal{K}_{(s, a)}}_{W},
\end{align*}
where $f\in\mathcal{H}$ is such that $\hnorm{f} \leq \frac{T}{\lambda(1 - \gamma)}$ and $W\in\bhsh$ is such that $\hsnorm{W}\leq \frac{2(T + \lambda)}{\lambda(1 - \gamma)}$. 
Then for any $p > 0$, with probability at least $1 - p$, for any $t\in[T]$, and $q\in \mathcal{Q}$, 
\begin{align*}
\norm*{\sum_{\tau\in\prev(t)}\left(v_{q}(s_{\tau+1}) - \left(M_* v_{q}\right)(s_{\tau}, a_{\tau})\right)\cdot\kth}_{\wh_t} \leq  \sqrt{\lambda} + \Psi_{p},
\end{align*}
where $v_{q}(s) = \max_a\clip_{[0, 1/(1-\gamma)]}q(s, a)$.
\end{lemma}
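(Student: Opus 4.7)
The plan is to turn the uniform-in-$q$ bound into a fixed-$q$ self-normalized bound (Lemma~\ref{lem:self-normalized}) via a covering argument on the pair $(f,W)$ that parametrizes $q$, since the data dependence of $q$ is what prevents a direct application of the self-normalized martingale bound. For a fixed $q(s,a) = f(s,a) + \norm{\mathcal{K}_{(s,a)}}_W$, set $\eta_\tau = v_q(s_{\tau+1}) - (M_* v_q)(s_\tau, a_\tau)$ and take the filtration $F_\tau = \sigma(s_1, a_1, \dots, s_\tau, a_\tau, s_{\tau+1})$. Then $(s_\tau, a_\tau)$ is $F_{\tau-1}$-measurable, $\eta_\tau$ is a martingale difference, and because $v_q$ is clipped into $[0, 1/(1-\gamma)]$, the MDP's sub-Gaussian assumption gives conditional sub-Gaussian parameter $R = \sigma/(1-\gamma)$. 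Applying Lemma~\ref{lem:self-normalized} at confidence $1 - p'$, and combining Lemma~\ref{lem:effective-dimension-bound} with the assumption $d_\lambda \geq \deff(\lambda, \{(s_t,a_t)\}_{t=1}^T)$, produces the time-uniform fixed-$q$ bound $\sqrt{2}(\sigma/(1-\gamma))\sqrt{d_\lambda \ln(e(T+\lambda)/\lambda) + \ln(1/p')}$.

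Next, I build $\epsilon$-nets $\mathcal{F}_0$ of the $f$-ball and $\mathcal{W}_0$ of the $W$-ball, at $\norm{\cdot}_*$-granularities $\epsilon_1 = \lambda/(4T)$ and $\epsilon_2 = \lambda^2/(16(T+\lambda)^2)$ respectively. Rescaling by the ball radii $T/(\lambda(1-\gamma))$ and $2(T+\lambda)/(\lambda(1-\gamma))$ reduces the cover sizes to covering unit balls at exactly the granularities tabulated in $c_\lambda$, so $\ln(\abs{\mathcal{F}_0}\cdot\abs{\mathcal{W}_0}) \leq c_\lambda$. For any approximant $\tilde q = (\tilde f, \tilde W)$ in the net, the pointwise identities $(f-\tilde f)(s,a) = \hinner{f - \tilde f, \mathcal{K}_{(s,a)}}$ and $\norm{\mathcal{K}_{(s,a)}}_W^2 - \norm{\mathcal{K}_{(s,a)}}_{\tilde W}^2 = \hinner{(W-\tilde W)\mathcal{K}_{(s,a)}, \mathcal{K}_{(s,a)}}$, combined with $\abs{\sqrt a - \sqrt b} \leq \sqrt{\abs{a-b}}$ and the $1$-Lipschitzness of clip and $\max_a$, give $\norm{v_q - v_{\tilde q}}_\infty \leq \epsilon_1 + \sqrt{\epsilon_2} \leq \lambda/(2T)$. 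Hence each residual $\eta_\tau - \tilde \eta_\tau$ is bounded by $\lambda/T$ in magnitude; since $\hnorm{\kth} \leq 1$, the $\mathcal{H}$-norm of the discretization-error sum is at most $\lambda$, which by the upper bound in Lemma~\ref{lem:w-norm-bound} is at most $\sqrt \lambda$ in $\norm{\cdot}_{\wh_t}$.

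A union bound over $\mathcal{F}_0 \times \mathcal{W}_0$ at $p' = p/(\abs{\mathcal{F}_0}\abs{\mathcal{W}_0})$ absorbs $\ln(1/p')$ into $c_\lambda + \ln(1/p)$; adding the $\sqrt \lambda$ discretization slack and relaxing the leading $\sqrt 2$ to $2$ yields the claimed $\sqrt \lambda + \Psi_p$. The main obstacle is the square-root appearance of $W$ inside $\norm{\mathcal{K}_{(s,a)}}_W$: because the $\ihsi$-cover controls only $\norm{W - \tilde W}_*$, which governs the squared bonus, the error in the bonus itself is only $\sqrt{\epsilon_2}$, forcing the $\ihsi$-granularity to be smaller by a square. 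This is precisely why the $\ihsi$-argument in $c_\lambda$ has a cubic denominator $(T+\lambda)^3$ rather than a quadratic one, and making the numerical constants line up exactly requires the specific choices $\epsilon_1 = \lambda/(4T)$ and $\epsilon_2 = \lambda^2/(16(T+\lambda)^2)$.
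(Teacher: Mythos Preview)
Your proposal is correct and follows essentially the same route as the paper: a covering argument on the pair $(f,W)$ in the $\norm{\cdot}_*$ norms, followed by the self-normalized martingale bound (Lemma~\ref{lem:self-normalized}) with a union bound over the net, plus a discretization slack controlled via Lemma~\ref{lem:w-norm-bound}. The paper makes the identical observation that $\norm{q_1-q_2}_\infty \leq \norm{f_1-f_2}_* + \sqrt{\norm{W_1-W_2}_*}$, chooses the same overall granularity $\xi=\lambda/(2T)$, and arrives at the same $\sqrt{\lambda}+\Psi_p$ conclusion; your choice $\epsilon_2=\lambda^2/(16(T+\lambda)^2)$ hits the $c_\lambda$ granularity on the nose, whereas the paper takes $\xi^2/4=\lambda^2/(16T^2)$ and then uses monotonicity of covering numbers, but this is a cosmetic difference in constant-chasing.
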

\begin{proof}
First let us assume we have a $\xi > 0$ and a function class $\tilde{\mathcal{Q}}: \mathcal{S}\times\mathcal{A}\to\mathbb{R}$ of finite cardinality such that for any $q\in \mathcal{Q}$ there exists a $\gamma(q)\in\tilde{\mathcal{Q}}$ such that $\norm{q - \gamma(q)}_{\infty} \leq \xi$. Then with probability at least $1 - p$, for any $t\in[T]$, and $q\in \mathcal{Q}$,
\begin{align*}
&\norm*{\sum_{\tau \in\prev(t)}\left(v_{q}(s_{\tau + 1}) - \left(M_* v_{q}\right)(s_{\tau, h}, a_{\tau, h})\right)\cdot\kth}_{\wh_t} \\
&\leq \norm*{\sum_{\tau\in\prev(t)}\left(v_{\gamma(q)}(s_{\tau + 1}) - \left(M_* v_{\gamma(q)}\right)(s_{\tau, h}, a_{\tau, h})\right)\cdot\kth}_{\wh_t} + \norm*{\sum_{\tau\in\prev(t)}2\xi\cdot\kth}_{\wh_t}\\
&\stackrel{(a)}{\leq} \norm*{\sum_{\tau\in\prev(t)}\left(v_{\gamma(q)}(s_{\tau + 1}) - \left(M_* v_{\gamma(q)}\right)(s_{\tau}, a_{\tau})\right)\cdot\kth}_{\wh_t} + \frac{1}{\sqrt{\lambda}}\hnorm*{\sum_{\tau\in\prev(t)}2\xi\cdot\kth}\\
&\stackrel{(b)}{\leq} \frac{\sqrt{2}\sigma}{1 - \gamma} \sqrt{d_{\lambda}\ln \frac{e(T + \lambda)}{\lambda} + \ln\abs[\big]{\vt} + \ln\frac{1}{p}} + \frac{2\xi T}{\sqrt{\lambda}},
\end{align*}
where in (a) we used the upper bound in Lemma~\ref{lem:w-norm-bound}, in (b) we used Lemma~\ref{lem:self-normalized}, Lemma~\ref{lem:effective-dimension-bound}, and a union bound. 

It remains to choose $\vt$ and $\xi$. Note that for any $q_1$ (induced by $f_1$, $W_1$) and $q_2$ (induced by $f_2$, $W_2$) in $\mathcal{Q}$,
\begin{align*}
   \norm{q_1 - q_2}_{\infty} \leq \norm{f_1 - f_2}_* + \sqrt{\norm{W_1 - W_2}_*},
\end{align*}
so it suffices to bound $\norm{f_1 - f_2}_*$ by $\xi / 2$ and bound $\norm{W_1 - W_2}_*$ by $\xi^2 / 4$, therefore we can choose $\vt$ such that
\begin{align*}
    \ln\abs[\big]{\vt} \leq \ln\mathcal{N}\left(\frac{\xi\lambda(1 - \gamma)}{2T}, \ihi\right) + \ln\mathcal{N}\left(\frac{\xi^2\lambda(1 - \gamma)}{8(T + \lambda)}, \ihsi\right),
\end{align*}
choosing $\xi = \frac{\lambda}{2T}$ concludes the proof.
\end{proof}

\begin{lemma}
\label{lem:est-dev}
For any $p > 0$, $\mathfrak{E}_{p}$ happens with probability at least $1 - p$.
\end{lemma}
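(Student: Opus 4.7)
The plan is to compare $\qh_t(s,a)$ against the \emph{population Bellman target} $w_t(s,a) := r(s,a) + \gamma (M\vt_{t-1})(s,a)$, viewed as the evaluation at $(s,a)$ of the element $w_t := r + \gamma M\vt_{t-1}\in\mathcal{H}$. First I relate this target to $Q_*(s,a) + \gamma(M_*(\vt_{t-1} - V_*))(s,a)$ by the Bellman equation together with the approximation bounds $\norm{r - r_*}_\infty \leq \epsilon$ and $\sup_{\norm{f}_\infty\leq 1}\norm{(M - M_*)f}_\infty\leq\epsilon$; because the clipping in Fact~\ref{fact:conds}(3) guarantees $\norm{\vt_{t-1}}_\infty \leq 1/(1-\gamma)$, this step contributes only an additive $\epsilon/(1-\gamma)$, well within the $2\epsilon/(1-\gamma)$ slack in the definition of $\mathfrak{E}_p$.

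The heart of the argument is bounding $\qh_t(s,a) - w_t(s,a)$. Using the closed form in Fact~\ref{fact:conds}(2) together with the algebraic identity $w_t = \wh_t\bigl(\sum_{\tau\in\prev(t)}\skth\otimes\skth + \lambda I\bigr)w_t$, one gets
\[
  \qh_t - w_t \;=\; \wh_t\left(\sum_{\tau\in\prev(t)}\delta_\tau\,\skth \;-\; \lambda w_t\right),
\]
where $\delta_\tau = \bigl(r_*(s_\tau,a_\tau) - r(s_\tau,a_\tau)\bigr) + \gamma\bigl((M_* - M)\vt_{t-1}\bigr)(s_\tau,a_\tau) + \gamma\bigl(\vt_{t-1}(s_{\tau+1}) - (M_*\vt_{t-1})(s_\tau,a_\tau)\bigr)$ splits cleanly into a reward-approximation slack (of size $\leq \epsilon$), a transition-approximation slack (of size $\leq \gamma\epsilon/(1-\gamma)$), and a martingale increment $\gamma\eta_\tau^{(t)}$. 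Pairing the identity with $\mathcal{K}_{(s,a)}$, using self-adjointness of $\wh_t$, and invoking the Mahalanobis Cauchy--Schwarz $\abs{\hinner{\wh_t v,\mathcal{K}_{(s,a)}}}\leq\norm{v}_{\wh_t}\norm{\mathcal{K}_{(s,a)}}_{\wh_t}$ reduces the task to bounding three scalar $\wh_t$-norms.

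The regularization contribution $\lambda\norm{w_t}_{\wh_t}\leq\sqrt{\lambda}\,\hnorm{w_t}$ is controlled by Lemma~\ref{lem:w-norm-bound} together with $\hnorm{r}\leq\rho$, $\norm{M}\leq\rho$, and $\norm{\vt_{t-1}}_\infty\leq 1/(1-\gamma)$, giving at most $\rho\sqrt{\lambda}/(1-\gamma)$. The deterministic-approximation contribution $\norm{\sum_\tau\xi_\tau\skth}_{\wh_t}$ with $\abs{\xi_\tau}\leq\epsilon/(1-\gamma)$ is handled by the triangle inequality, Cauchy--Schwarz in $\tau$, and the identity $\sum_\tau\norm{\skth}_{\wh_t}^2 = \deff(\lambda,\{(s_\tau,a_\tau)\}_{\tau\in\prev(t)})$ from Lemma~\ref{lem:deff-identity}, majorized by $d_\lambda$ through Lemma~\ref{lem:deff-non-decreasing}; this yields $\epsilon\sqrt{Td_\lambda}/(1-\gamma)$. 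The main obstacle is the martingale term $\gamma\norm{\sum_\tau\eta_\tau^{(t)}\skth}_{\wh_t}$: since $\vt_{t-1}$ depends on the entire past and on $t$, Lemma~\ref{lem:self-normalized} cannot be applied directly. The resolution is to invoke Lemma~\ref{lem:covering}, whose statement already bakes in a uniform cover over the parametric class $\mathcal{Q}$ in which $\vt_{t-1}$ lives. Verifying that $\qt_{t-1}$ matches the template $f + \norm{\mathcal{K}_{(\cdot,\cdot)}}_W$ with $f = \qh_{t-1}$ and $W = \beta^2\wh_{t-1}$ is precisely where Lemma~\ref{lem:qhnorm-bound}, Lemma~\ref{lem:hsnorm-bound}, and the hypothesis $\beta\leq 2\sqrt{T+\lambda}/(1-\gamma)$ are consumed. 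Summing the three bounds, absorbing the residual $\sqrt{\lambda}$ from Lemma~\ref{lem:covering} into $\rho\sqrt{\lambda}/(1-\gamma)$ via $\rho\geq 1-\gamma$, and folding in the additive Bellman-approximation slack delivers the required $(\Phi + \Psi_p)\norm{\mathcal{K}_{(s,a)}}_{\wh_t} + 2\epsilon/(1-\gamma)$ bound simultaneously for all $t\in[T]$ and uniformly in $(s,a)$, with probability at least $1-p$.
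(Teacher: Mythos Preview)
Your proof is correct and closely parallels the paper's, but with a genuinely cleaner pivot. The paper compares $\qh_t$ to $r + \gamma M V_*$ (an $\mathcal{H}$-approximant of $Q_*$) and decomposes the residual into four pieces $E_1,\dots,E_4$; the key term $E_3$ then needs the algebraic identity
\[
\Bigl(\sum_{\tau\in\prev(t)}\hinner{M(\vt_{t-1}-V_*),\skth}\skth\Bigr)\wh_t
= M(\vt_{t-1}-V_*) - \lambda\bigl(M(\vt_{t-1}-V_*)\bigr)\wh_t
\]
to extract $(M(\vt_{t-1}-V_*))(s,a)$, leaving a second regularization term of size $\sqrt{\lambda}\rho/(1-\gamma)$. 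By contrast, you pivot on the full population Bellman target $w_t = r + \gamma M\vt_{t-1}$; the desired $\gamma(M_*(\vt_{t-1}-V_*))(s,a)$ then falls out immediately from the outer comparison $w_t(s,a)$ versus $r_*(s,a)+\gamma(M_*\vt_{t-1})(s,a) = Q_*(s,a)+\gamma(M_*(\vt_{t-1}-V_*))(s,a)$, and the ridge residual splits into only three pieces. Both routes invoke the same auxiliary lemmas (Lemma~\ref{lem:w-norm-bound}, Lemma~\ref{lem:deff-identity} with Lemma~\ref{lem:deff-non-decreasing}, and Lemma~\ref{lem:covering} after verifying membership in $\mathcal{Q}$ via Lemmas~\ref{lem:qhnorm-bound} and~\ref{lem:hsnorm-bound}), and both land inside the $3\rho\sqrt{\lambda}/(1-\gamma)$ budget of $\Phi$ once the stray $\sqrt{\lambda}$ from Lemma~\ref{lem:covering} is absorbed using $\rho\geq 1-\gamma$. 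Your decomposition is more economical (one fewer term, no $E_3$ trick); the paper's has the conceptual appeal of anchoring on $Q_*$ from the outset.
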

\begin{proof}
First note that for any $t\in[T]$, $Q_* = r_* + \gamma M_* V_*$, and consequently
\begin{align*}
    \norm*{Q_* - \left(r + \gamma M V_*\right)}_{\infty} \leq \frac{\epsilon}{1 - \gamma}.
\end{align*}
To proceed, note that for any $t\in[T]$, 
\begin{align*}
    &\qh_t - \left(r + \gamma M V_*\right)\\ 
    & = \left(\sum_{\tau \in \prev(t)}\left(r(s_{\tau}, a_{\tau}) + \gamma\vt_{t-1}(s_{\tau + 1})\right)\cdot\mathcal{K}_{(s_{\tau}, a_{\tau})}  - \left(r + \gamma M V_*\right)\left(\wh_{t}\right)^{-1}\right)\wh_{t}\\
    &= \underbrace{-\lambda \left(r + \gamma M V_*\right) \wh_{t}}_{E_1}\\
    &+ \gamma \underbrace{\left(\sum_{\tau \in\prev(t)}\left(\vt_{t-1}(s_{\tau + 1}) - \left(M_* \vt_{t-1}\right)(s_{\tau}, a_{\tau})\right)\cdot\kth\right)\wh_{t}}_{E_2}\\
    &+ \gamma \underbrace{\left(\sum_{\tau \in\prev(t)} \left(M\left(\vt_{t-1} - V_*\right)\right)(s_{\tau}, a_{\tau})\cdot \kth\right)\wh_{t}}_{E_3} \\
    &+ \gamma \underbrace{\left(\sum_{\tau \in\prev(t)}\left(\left(M_* - M\right)\left(\vt_{t-1} - V_*\right)\right)(s_{\tau}, a_{\tau})\cdot\kth\right)\wh_t}_{E_4}.
\end{align*}
Let us bound $E_1(s, a)$, $E_2(s, a)$, $E_3(s, a)$, $E_4(s, a)$ separately. First note that
\begin{align*}
\abs*{E_1(s, a)} &\leq \lambda \abs*{r\wh_t\mathcal{K}_{(s, a)}} + \lambda\abs*{\left(\gamma M V_*\right)\wh_t\mathcal{K}_{(s, a)}}\\
&\stackrel{(a)}{\leq} \lambda \cdot\left(\norm*{r}_{\wh_t} + \norm*{\gamma M V_*}_{\wh_t}\right)\cdot\norm*{\mathcal{K}_{(s, a)}}_{\wh_t}\\
&\stackrel{(b)}{\leq} \sqrt{\lambda}\cdot \left(\hnorm*{r} + \hnorm*{\gamma M V_*}\right)\cdot\norm*{\mathcal{K}_{(s, a)}}_{\wh_t}\\
&\leq \sqrt{\lambda}\left(\hnorm{r_h} + \frac{\gamma}{1 - \gamma}\norm*{M}\right)\cdot\norm*{\mathcal{K}_{(s, a)}}_{\wh_t}\\
&\leq\frac{\sqrt{\lambda}\rho}{1 - \gamma} \cdot\norm*{\mathcal{K}_{(s, a)}}_{\wh_t},
\end{align*}
where (a) is due to Cauchy-Schwarz and (b) is due to the upper bound in Lemma~\ref{lem:w-norm-bound}. 

To bound $E_2$, recall that for any $t\geq 2$
\begin{align*}
    \vt_{t-1}(s, a) = \max_a\clip_{[0, 1/(1 - \gamma)]}\left(\qh_{t+1}(s, a) + \norm*{\mathcal{K}_{(s, a)}}_{\beta\wh_{t+1}}\right),
\end{align*}
and by Lemma~\ref{lem:qhnorm-bound} $\qh_{t+1}(s, a) \leq \frac{T}{\lambda(1 - \gamma)}$, and by Lemma~\ref{lem:hsnorm-bound} as well as the constraint on $\beta$, $\hsnorm{\beta\wh_{t + 1}}\leq \frac{2(T + \lambda)}{\lambda(1 - \gamma)}$. Therefore by Lemma~\ref{lem:covering} we have that with probability at least $1 - p$, uniformly for all $t\in[T]$,
\begin{align*}
    \abs*{E_2(s, a)} &=\abs*{\left(\sum_{\tau \in\prev(t)}\left(\vt_{t-1}(s_{\tau + 1}) - \left(M_* \vt_{t-1}\right)(s_{\tau}, a_{\tau})\right)\cdot\kth\right)\wh_t \mathcal{K}_{(s, a)}}\\
    &\leq \norm*{\sum_{\tau\in\prev(t)}\left(\vt_{t-1}(s_{\tau + 1}) - \left(M_* \vt_{t-1}\right)(s_{\tau}, a_{\tau})\right)\cdot\kth}_{\wh_t}\cdot\norm*{\mathcal{K}_{(s, a)}}_{\wh_t}\\
    &\leq \left( \sqrt{\lambda} + \Psi_{p} \right)\cdot\norm*{\mathcal{K}_{(s, a)}}_{\wh_t}. 
\end{align*}
Also note that
\begin{align*}
    E_3(s, a) &= \left(\sum_{\tau \in\prev(t)} \hinner*{M\left(\vt_{t-1} - V_{*}\right), \kth}\cdot\kth\right)\wh_t\mathcal{K}_{(s, a)}\\
    & = \left(\left(M\left(\vt_{t-1} - V_{*}\right)\right)\left(\sum_{\tau \in\prev(t)} \kth\otimes\kth\right) \right)\wh_t\mathcal{K}_{(s, a)}\\
    & = \left(M\left(\vt_{t-1} - V_*\right)\right)\mathcal{K}_{(s, a)} - \lambda\left(M\left(\vt_{t-1} - V_*\right)\right)\wh_t \mathcal{K}_{(s, a)}
\end{align*}
Therefore, we have that
\begin{align*}
\abs*{E_3(s, a) - \left(M\left(\vt_{t-1} - V_*\right)\right)(s, a)} &= \lambda\abs*{\left(M\left(\vt_{t-1} - V_*\right)\right)\wh_t \mathcal{K}_{(s, a)}}\\
&\stackrel{(a)}{\leq}\lambda\cdot\norm*{M\left(\vt_{t-1} - V_*\right)}_{\wh_t}\cdot\norm*{\mathcal{K}_{(s, a)}}_{\wh_t}\\
&\stackrel{(b)}{\leq}\sqrt{\lambda}\cdot\hnorm*{M\left(\vt_{t-1} - V_*\right)}\cdot\norm*{\mathcal{K}_{(s, a)}}_{\wh_t}\\
&\leq\frac{\sqrt{\lambda}}{1 - \gamma}\cdot\norm*{M}\cdot\norm*{\mathcal{K}_{(s, a)}}_{\wh_t}\\
&\leq\frac{\sqrt{\lambda}\rho}{1 - \gamma} \cdot\norm*{\mathcal{K}_{(s, a)}}_{\wh_t}
\end{align*}
where (a) is due to Cauchy-Scharz and (b) is due to the upper bound in Lemma~\ref{lem:w-norm-bound}. We also have
\begin{align*}
    \abs*{\left(\left(M - M_*\right)\left(\vt_{t-1} - V_*\right)\right)(s, a)} \leq \frac{\epsilon}{1 - \gamma} .
\end{align*}
Finally, note that
\begin{align*}
    \abs*{E_4(s, a)} &\leq \frac{\epsilon}{1 - \gamma}\sum_{\tau \in\prev(t)}\abs*{\kth\wh_t\mathcal{K}_{s, a}}\\
    &\stackrel{(a)}\leq \frac{\epsilon}{1 - \gamma}\sqrt{\sum_{\tau \in\prev(t)}\norm*{\kth}^2_{\wh_t}\cdot\sum_{\tau\in\prev(t)}\norm*{\mathcal{K}_{s, a}}^2_{\wh_t}}\\
    &\stackrel{(b)}\leq \frac{\epsilon\sqrt{Td_{\lambda}}}{1 - \gamma}\cdot\norm*{\mathcal{K}_{s, a}}_{\wh_t}
\end{align*}
where (a) is due to Cauchy-Schwarz and (b) is due to Lemma~\ref{lem:deff-identity} and Lemma~\ref{lem:deff-non-decreasing}.
Putting everything together concludes the proof.
\end{proof}
Now we are ready to prove our main theorem.
\begin{proof}[Proof of Theorem~\ref{thm:main}]
The proof will be conditioned on $\mathfrak{E}_{p/2}$, which happens with probability at least $p/2$ by Lemma~\ref{lem:est-dev}. Define
\begin{align*}
    \delta_{t} &= \vt_{t}(s_t) - V_t,\\
    \phi_{t} &= \vt_{t}(s_t) - V_*(s_t),\\
    \Delta_t &= V_*(s_t) - V_t = \delta_t - \phi_t,\\
    \zeta_{t} &= \left(M_*(V_*)\right)(s_t, a_t) - V_*(s_{t + 1}).
\end{align*}
Note that
\begin{align}
    \phi_t \geq \qt_t(s_t, a^*(s_t)) - Q_*(s_t, a^*(s_t)) \stackrel{(a)}{\geq} - \frac{2\epsilon}{(1 - \gamma)^2},\label{eq:phibound}
\end{align}
where (a) is due to Lemma~\ref{lem:est-lower-bound}.
We have for any $t\in[T]$,
\begin{align*}
    \delta_{t} &= \qt_{t}(s_t, a_t) - V_t\\
    &= \left(\qt_{t}(s_t, a_t) - Q_*(s_t, a_t)\right) + \left(Q_*(s_t, a_t) - V_t\right)\\
    &=\left(\qt_{t}(s_t, a_t) - Q_*(s_t, a_t)\right) + \gamma\zeta_{t} + \gamma\delta_{t + 1} - \gamma\phi_{t + 1}\\
    &\stackrel{(a)}{\leq}2\beta\sum_{\tau = 1}^t\gamma^{t - \tau}\norm*{\mathcal{K}_{(s_t, a_t)}}_{\wh_{\tau}}+ \frac{2\epsilon}{(1 - \gamma)^2}+ \gamma\zeta_{t} + \gamma\delta_{t + 1} - \gamma\phi_{t + 1}
\end{align*}
where (a) is due to the upper bound in Lemma~\ref{lem:est-lower-bound}. Taking a summation on both sides, we have,
\begin{align*}
    \sum_{t = 1}^T\delta_t 
    &\leq \frac{2\epsilon T}{(1 - \gamma)^2} + 2\beta\sum_{t = 1}^T\sum_{\tau = 1}^t \gamma^{t - \tau}\norm*{\mathcal{K}_{(s_t, a_t)}}_{\wh_{\tau}} + \gamma\sum_{t = 1}^T\left(\zeta_t + \delta_{t + 1} - \phi_{t + 1}\right)\\
    &\stackrel{(a)}{\leq} \frac{2\epsilon T}{(1 - \gamma)^2} +  2\beta\sqrt{\sum_{t = 1}^T\sum_{\tau = 1}^t\gamma^{t - \tau}}\sqrt{\sum_{t = 1}^T\sum_{\tau = 1}^t\gamma^{t - \tau}\norm*{\mathcal{K}_{(s_t, a_t)}}_{\wh_{\tau}}^2} + \gamma\sum_{t = 1}^T\left(\zeta_t + \delta_{t + 1} - \phi_{t + 1}\right)\\
    &\stackrel{(b)}{\leq} \frac{4\epsilon T}{(1 - \gamma)^2} +  2\beta\sqrt{\frac{ T d_{\lambda}/\lambda\cdot \ln\frac{e(T + \lambda)}{\lambda}}{\ln\left(1 + 1/\lambda\right)(1 - \gamma)^3}} + \gamma\sum_{t = 1}^T\left(\zeta_t + \delta_{t + 1}\right),
\end{align*}
where (a) is by Cauchy-Schwarz and (b) is due to Lemma~\ref{lem:sum-of-ucb-bound}, Lemma~\ref{lem:effective-dimension-bound}, and~\eqref{eq:phibound}. After rearranging the terms and using the fact $\abs{\delta_t}\leq 1/(1 - \gamma)$, we have
\begin{align*}
    \sum_{t = 1}^T\delta_t \leq
    \frac{4\epsilon T}{(1 - \gamma)^3} +  2\beta\sqrt{\frac{ T d_{\lambda}/\lambda\cdot \ln\frac{e(T + \lambda)}{\lambda}}{\ln\left(1 + 1/\lambda\right)(1 - \gamma)^5}} + \gamma\sum_{t = 1}^T\zeta_t + \frac{2}{(1 - \gamma)^2}.
\end{align*}
Now note that $\left\{\zeta_t\right\}_{t = 1}^T$ is a Martingale difference sequence where each element is $(\frac{\sigma}{1 - \gamma})$-sub-Gaussian, therefore with probability at least $1 - \delta / 2$, 
\begin{align*}
    \sum_{t = 1}^T\delta_t \leq
    \frac{4\epsilon T}{(1 - \gamma)^3} +  2\beta\sqrt{\frac{ T d_{\lambda}/\lambda\cdot \ln\frac{e(T + \lambda)}{\lambda}}{\ln\left(1 + 1/\lambda\right)(1 - \gamma)^5}} + \frac{\sigma\sqrt{2T\ln(2/p)}}{1 - \gamma} + \frac{2}{(1 - \gamma)^2}.
\end{align*}
Finally, note that
\begin{align*}
\regret(T)
& = (1 - \gamma)\sum_{t = 1}^T\Delta_t\\
& \stackrel{(a)}{\leq} \frac{1}{1 - \gamma}\cdot\sum_{t = 1}^T \delta_t + \frac{2\epsilon T}{1 - \gamma}\\
&\leq
    \frac{6\epsilon T}{(1 - \gamma)^2} +  2\beta\sqrt{\frac{ T d_{\lambda}/\lambda\cdot \ln\frac{e(T + \lambda)}{\lambda}}{\ln\left(1 + 1/\lambda\right)(1 - \gamma)^3}} + \frac{\sigma\sqrt{2T\ln(2/p)}}{1 - \gamma} + \frac{2}{1 - \gamma}\\
    &\leq \left(6\rho\sqrt{\lambda} + 2\epsilon \sqrt{Td_{\lambda}} +4\sigma \sqrt{
    d_{\lambda}\ln\frac{e(T + \lambda)}{\lambda} +  \ln\frac{2}{p}+c_{\lambda}}\right)\sqrt{\frac{ T d_{\lambda}/\lambda\cdot \ln\frac{e(T + \lambda)}{\lambda}}{\ln\left(1 + 1/\lambda\right)(1 - \gamma)^5}}\\
    &\ \ \ \  + \frac{6\epsilon T}{(1 - \gamma)^2} + \frac{\sigma\sqrt{2T\ln(2/p)}}{1 - \gamma} + \frac{2}{1 - \gamma}\\
    & = \mathcal{O}\left(\sqrt{\frac{Td_{\lambda}\log\frac{e(T + \lambda)}{\lambda}}{\log(1 + 1/\lambda)(1 - \gamma)^5}}\left(\rho + \epsilon\sqrt{\frac{d_{\lambda}T}{\lambda}} + \sigma\sqrt{\frac{d_{\lambda}\log\frac{e(T + \lambda)}{\lambda p} + c_{\lambda}}{\lambda}}\right)\right)\\
    &\ \ \ \ +\mathcal{O}\left(\frac{\sigma\sqrt{T\log(1/p)}}{1 - \gamma} + \frac{\epsilon T}{(1 - \gamma)^2} + \frac{1}{1 - \gamma}\right)\\
    & \stackrel{(b)}{=} \mathcal{O}\left(\sqrt{\frac{Td_{\lambda}\log\frac{e(T + \lambda)}{\lambda}}{\log(1 + 1/\lambda)(1 - \gamma)^5}}\left(\rho + \epsilon\sqrt{\frac{d_{\lambda}T}{\lambda}} + \sigma\sqrt{\frac{d_{\lambda}\log\frac{e(T + \lambda)}{\lambda p} + c_{\lambda}}{\lambda}}\right)\right),
\end{align*}
where in (a) we used $\Delta_t = \delta_t - \phi_t$ and~\eqref{eq:phibound}, in (b) we used the fact that $d_{\lambda} \geq 1$, $\rho \geq 1 - \gamma$, and
\begin{align*}
    \frac{\log\frac{e(K + \lambda)}{\lambda}}{\log(1 + 1/\lambda)}\geq 1,\ \ \ \ \ \ \ \ 
    \frac{\log\frac{e(K + \lambda)}{\lambda}}{\log(1 + 1/\lambda)\sqrt{\lambda}} \geq 1.
\end{align*}
This concludes the proof.
\end{proof}

\end{document}